\newtheorem{theorem}{Theorem}
\newtheorem{lemma}{Lemma}
\newtheorem{claim}{Claim}
\newtheorem{definition}{Definition}
\newtheorem{remark}{Remark}
\pgfplotsset{compat=1.18}
\newcommand{\btheta}{\boldsymbol{\theta}}
\newcommand{\brho}{\boldsymbol{\rho}}
\newcommand{\blambda}{\boldsymbol{\lambda}}
\newcommand{\by}{\boldsymbol{y}}
\newcommand{\be}{\boldsymbol{e}}
\newcommand{\bg}{\boldsymbol{g}}
\newcommand{\bc}{\boldsymbol{c}}
\newcommand{\bX}{\boldsymbol{X}}
\newcommand{\bY}{\boldsymbol{Y}}
\newcommand{\bZ}{\boldsymbol{Z}}
\newcommand{\bN}{\boldsymbol{N}}
\newcommand{\bw}{\boldsymbol{w}}
\DeclareMathOperator*{\argmax}{arg\,max}
\DeclareMathOperator*{\argmin}{arg\,min}
\DeclareMathOperator*{\arginf}{arginf}
\def\BibTeX{{\rm B\kern-.05em{\sc i\kern-.025em b}\kern-.08em
    T\kern-.1667em\lower.7ex\hbox{E}\kern-.125emX}}
\begin{document}

\title{Efficient Clustering in Stochastic Bandits\\
\thanks{This work is supported in part by Qualcomm University Research Grant.}
}

\author{\IEEEauthorblockN{G Dhinesh Chandran}
\IEEEauthorblockA{\textit{Department of Electrical Engineering} \\
\textit{Indian Institute of Technology Madras}\\
ee22d200@smail.iitm.ac.in}
\and
\IEEEauthorblockN{Kota Srinivas Reddy}
\IEEEauthorblockA{\textit{Department of Artificial Intelligence} \\
\textit{Indian Institute of Technology Kharagpur}\\
ksreddy@ai.iitkgp.ac.in}
\and
\IEEEauthorblockN{Srikrishna Bhashyam}
\IEEEauthorblockA{\textit{Department of Electrical Engineering} \\
\textit{Indian Institute of Technology Madras}\\
skrishna@ee.iitm.ac.in}
}

\maketitle

\begin{abstract}
We study the Bandit Clustering (BC) problem under the fixed confidence setting, where the objective is to group a collection of data sequences (arms) into clusters through sequential sampling from adaptively selected arms at each time step while ensuring a fixed error probability at the stopping time.
We consider a setting where arms in a cluster may have different distributions. Unlike existing results in this setting, which assume Gaussian-distributed arms, we study a broader class of vector-parametric distributions that satisfy mild regularity conditions. 
Existing asymptotically optimal BC algorithms require solving an optimization problem as part of their sampling rule at each step, which is computationally costly. We propose an Efficient Bandit Clustering algorithm (EBC), which, instead of solving the full optimization problem, takes a single step toward the optimal value at each time step, making it computationally efficient while remaining asymptotically optimal.
We also propose a heuristic variant of EBC, called EBC-H, which further simplifies the sampling rule, with arm selection based on quantities computed as part of the stopping rule.
We highlight the computational efficiency of EBC and EBC-H by comparing their per-sample run time with that of existing algorithms.
The asymptotic optimality of EBC is supported through simulations on the synthetic datasets. Through simulations on both synthetic and real-world datasets, we show the performance gain of EBC and EBC-H over existing approaches.

 

\end{abstract}

\begin{IEEEkeywords}
Efficient Clustering, Bandit Clustering, Data Sequences, Single Linkage Clustering.
\end{IEEEkeywords}
\vspace{-1mm}
\section{Introduction} \label{sec: intro}
Clustering a finite collection of data sequences (arms) has numerous applications \cite{yang2024optimal, jain2010data, maccuish2010clustering}. Data sequence clustering can be studied either in a fixed-sample-size setting (FSS) or a sequential setting (SEQ). In FSS, a finite sequence of data points (samples) from each arm is available a priori, and the cluster estimate is based on the available samples \cite{wang2019k, wang2020exponentially}.  In SEQ, samples from arms are available sequentially, and the algorithm is equipped with a stopping rule to decide when to stop sampling and declare the cluster estimate. Sequential (SEQ) clustering can be studied under either a full-information setting, where a sample is observed from each of the arms \cite{sreenivasan2023nonparametric, singh2025exponentially}, or a bandit setting, where a sample can be observed only from any one of the arms \cite{yang2024optimal}. We refer to data sequence clustering in the bandit setting as Bandit Clustering (BC), and an algorithm designed to solve the BC problem  as a BC algorithm. The BC algorithms, in addition to a stopping rule, are equipped with a sampling rule, which adaptively selects an arm to observe a sample. The BC problem can be studied under either the fixed-budget setting, where the number of samples is limited, or the fixed-confidence setting, where the error probability is fixed \cite{yang2024optimal}. This work focuses on the BC problem under a fixed-confidence setting.
Another class of clustering problems involves observing samples from a single data stream, where the objective is to cluster the observed data points. This problem, referred to as Online Clustering \cite{liberty2016algorithm}, differs from our setting.

The BC problem can be viewed as a special case of sequential multi-hypothesis testing \cite{deshmukh2021sequential, prabhu2022sequential}, where each possible partition of the arms corresponds to a distinct hypothesis. However, the number of hypotheses can grow exponentially, leading to high computational complexity. Notable works on the BC problem in the fixed-confidence setting include \cite{yang2024optimal, thuot2024active, yavas2025general}, which assume identical distributions for arms within each cluster. However, in practice, arms within the same cluster may follow different distributions.
The Max-Gap algorithms proposed in \cite{katariya2019maxgap} address this more general scenario but are limited to two clusters. The work in \cite{chandran2025online} considers Multivariate Gaussian-distributed arms and extends the setting of \cite{katariya2019maxgap} to more than two clusters by proposing two algorithms: Average Tracking Bandit Online Clustering (ATBOC), and Lower and Upper Confidence Bound Bandit Online Clustering (LUCBBOC).
ATBOC, which is order-wise asymptotically optimal, involves solving an optimization problem at each round, which is computationally costly. LUCBBOC is computationally efficient, but sub-optimal. 
We propose Efficient Bandit Clustering (EBC), which is both computationally efficient and asymptotically optimal. The key idea behind achieving both is that, instead of solving a full optimization problem, the algorithm takes one step towards the optimum at each time step, and eventually converges to the optimum. This approach is motivated by \cite{mukherjee2024efficient}, which studies the Best Arm Identification (BAI) problem, where the arms follow a scalar parametric distribution, and the objective is to identify the arm with the highest mean value.

The main contributions of this paper are as follows. 1) We study the BC problem under a more general framework where the arms follow any vector parametric distribution (under mild regularity conditions), where the arms in a cluster may have different distributions. 2) We propose an Efficient Bandit Clustering algorithm (EBC), which is $\delta$-PC, i.e., the error probability at the time of stopping is at most $\delta$, and asymptotically optimal, i.e., the expected sample complexity grows at the same rate as the lower bound in the low probability regime. 3) We highlight the computational efficiency of EBC by comparing its per-sample run time with that of existing algorithms. 4) We validate the asymptotic optimality of EBC through simulations on synthetic datasets and demonstrate its performance gain over existing approaches on both synthetic and real-world datasets. 5) We propose a heuristic variant of EBC, called EBC-H, which performs slightly better than EBC in both sample and computational complexity in simulations.

Proofs of our theoretical analysis are relegated to the technical appendix.

\section{Problem Setup} \label{sec: problem setup}

\def\xsize{0.18}
\begin{figure}[b]
\vspace{-2.5mm}
    \centering
    \begin{subfigure}[b]{0.11\textwidth}
        \centering
        \begin{tikzpicture}[scale=0.25]
            \foreach \x/\y in {-1/-2, -1/-1, 1/1, 2/2, 3/-3, 3.5/-3} {
                \filldraw[fill=yellow, fill opacity=0.5, draw=black, thick] (\x,\y) ellipse (0.35 and 0.35);
            }
            \draw[gray!70] (-4,-4) grid (4,4);
            \draw[thick,->] (-4,0)--(4.2,0);
            \draw[thick,->] (0,-4)--(0,4.2);

            \node[font=\small, right] at (-3,-2) {1};
            \node[font=\small, right] at (-3,-1) {2};
            \node[font=\small, right] at (-0.6,0.7) {3};
            \node[font=\small, right] at (2.1,2.4) {4};
            \node[font=\small, right] at (1, -3) {5};
            \node[font=\small, right] at (2.8,-1.8) {6};
            
            \foreach \x/\y in {-1/-2, -1/-1, 1/1, 2/2, 3/-3, 3.5/-3} {
                \draw[red, line width=1pt] (\x-\xsize,\y-\xsize)--(\x+\xsize,\y+\xsize);
                \draw[red, line width=1pt] (\x-\xsize,\y+\xsize)--(\x+\xsize,\y-\xsize);
            }
        \end{tikzpicture}
        \caption{}
        \label{a}
    \end{subfigure}
    \begin{subfigure}[b]{0.11\textwidth}
        \centering
        \begin{tikzpicture}[scale=0.25]
            \foreach \x/\y in {-1/-2, -1/-1, 1/1, 2/2} {
                \filldraw[fill=yellow, fill opacity=0.5, draw=black, thick] (\x,\y) ellipse (0.35 and 0.35);
            }
            \filldraw[fill=yellow, fill opacity=0.5, draw=black, thick] (3.25,-3) ellipse (0.8 and 0.6);
            \draw[gray!70] (-4,-4) grid (4,4);
            \draw[thick,->] (-4,0)--(4.2,0);
            \draw[thick,->] (0,-4)--(0,4.2);
            \node[font=\small, right] at (-3,-2) {1};
            \node[font=\small, right] at (-3,-1) {2};
            \node[font=\small, right] at (-0.6,0.7) {3};
            \node[font=\small, right] at (2.1,2.4) {4};
            \node[font=\small, right] at (1, -3) {5};
            \node[font=\small, right] at (2.8,-1.8) {6};
            \foreach \x/\y in {-1/-2, -1/-1, 1/1, 2/2, 3/-3, 3.5/-3} {
                \draw[red, line width=1pt] (\x-\xsize,\y-\xsize)--(\x+\xsize,\y+\xsize);
                \draw[red, line width=1pt] (\x-\xsize,\y+\xsize)--(\x+\xsize,\y-\xsize);
            }
        \end{tikzpicture}
        \caption{}
        \label{b}
    \end{subfigure}
    \begin{subfigure}[b]{0.11\textwidth}
        \centering
        \begin{tikzpicture}[scale=0.25]
            \foreach \x/\y in {1/1, 2/2} {
                \filldraw[fill=yellow, fill opacity=0.5, draw=black, thick] (\x,\y) ellipse (0.35 and 0.35);
            }
            \filldraw[fill=yellow, fill opacity=0.5, draw=black, thick] (-1,-1.5) ellipse (0.6 and 1);
            \filldraw[fill=yellow, fill opacity=0.5, draw=black, thick] (3.25,-3) ellipse (0.8 and 0.6);
            \draw[gray!70] (-4,-4) grid (4,4);
            \draw[thick,->] (-4,0)--(4.2,0);
            \draw[thick,->] (0,-4)--(0,4.2);
            \node[font=\small, right] at (-3,-2) {1};
            \node[font=\small, right] at (-3,-1) {2};
            \node[font=\small, right] at (-0.6,0.7) {3};
            \node[font=\small, right] at (2.1,2.4) {4};
            \node[font=\small, right] at (1, -3) {5};
            \node[font=\small, right] at (2.8,-1.8) {6};
            \foreach \x/\y in {-1/-2, -1/-1, 1/1, 2/2, 3/-3, 3.5/-3} {
                \draw[red, line width=1pt] (\x-\xsize,\y-\xsize)--(\x+\xsize,\y+\xsize);
                \draw[red, line width=1pt] (\x-\xsize,\y+\xsize)--(\x+\xsize,\y-\xsize);
            }
        \end{tikzpicture}
        \caption{}
        \label{c}
    \end{subfigure}
    \begin{subfigure}[b]{0.11\textwidth}
        \centering
        \begin{tikzpicture}[scale=0.25]
            \filldraw[fill=yellow, fill opacity=0.5, draw=black, thick] (1.5,1.5) ellipse (1 and 1);
            \filldraw[fill=yellow, fill opacity=0.5, draw=black, thick] (-1,-1.5) ellipse (0.6 and 1);
            \filldraw[fill=yellow, fill opacity=0.5, draw=black, thick] (3.25,-3) ellipse (0.8 and 0.6);
            \draw[gray!70] (-4,-4) grid (4,4);
            \draw[thick,->] (-4,0)--(4.2,0);
            \draw[thick,->] (0,-4)--(0,4.2);
            \node[font=\small, right] at (-3,-2) {1};
            \node[font=\small, right] at (-3,-1) {2};
            \node[font=\small, right] at (-0.6,0.7) {3};
            \node[font=\small, right] at (2.1,2.4) {4};
            \node[font=\small, right] at (1, -3) {5};
            \node[font=\small, right] at (2.8,-1.8) {6};
            
            \foreach \x/\y in {-1/-2, -1/-1, 1/1, 2/2, 3/-3, 3.5/-3} {
                \draw[red, line width=1pt] (\x-\xsize,\y-\xsize)--(\x+\xsize,\y+\xsize);
                \draw[red, line width=1pt] (\x-\xsize,\y+\xsize)--(\x+\xsize,\y-\xsize);
            }
        \end{tikzpicture}
        \caption{}
        \label{d}
    \end{subfigure}
    \caption{
    Consider a BC problem with $d=2$, $K=3$, $M=6$, and the parameter vectors $\btheta = \begin{bmatrix}
        -1 & -1 & 1 & 2 & 3 & 3.5 \\
        -1 & -2 & 1 & 2 & -3 & -3
    \end{bmatrix}$. SLINK initially assumes each point as a cluster, as shown in Fig.~\ref {a}. Then, the two closest clusters are subsequently merged until the number of clusters reaches $K = 3$, as shown in Figs.~\ref{b}, \ref{c}, and~\ref{d}.
    }
    \label{fig: SLINKdescription}
\end{figure}
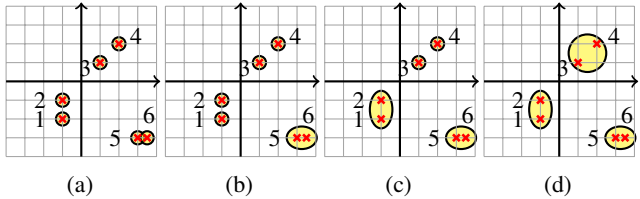

We consider a collection of $M$ data sequences (arms) $\left\{ \bX^{(m)}: m \in [M] \right\}$, where $\bX^{(m)}$ is the $m^{\text{th}}$ data sequence. Each data sequence is a sequence of independent and identically distributed (i.i.d.) samples, i.e., $\bX^{(m)}=\left\{\bX_s^{(m)}: s\in \mathbb{N}\right\}$, $\forall m \in [M]$, where $\bX_s^{(m)}$ is a sample from $m^{\text{th}}$ arm at time $s$. Each sample $\bX^{(m)}_s$ is independently generated from a parametric distribution $P_m(\cdot \mid \btheta_m)$ with a $d$-dimensional parameter vector $\btheta_m \in \Theta$.
The parameter space $\Theta \subset \mathbb{R}^d$ is compact, which is known to the learner.
Let $\btheta \in \Theta^M$ denotes the collection of parameters of the $M$ arms, i.e., $\btheta \coloneqq [\btheta_1, \dots, \btheta_M]$, which we refer to as a problem instance.
 $M$ arms form $K$ clusters based on the Single Linkage Clustering algorithm (SLINK) \cite{rohlf198212} applied on the collection of parameters $\btheta$. We assume that the number of clusters $K$ is known.
SLINK is explained in Fig.~\ref{fig: SLINKdescription} through an illustrative example. We use the cluster index vector $\bc=[c_1, \ldots, c_M]\in [K]^M$ to denote the clustering of the arms. Here, $c_m=k$ denotes that the $m^{\text{th}}$ arm belongs to the $k^{\text{th}}$ cluster. 
Note that the cluster index vector for the given grouping of arms does not need to be unique. For the example considered in Fig.~\ref{fig: SLINKdescription}, both $\bc^{(1)}=[1, 1, 2, 2, 3, 3]$ and $\bc^{(2)}=[2, 2, 1, 1, 3, 3]$ are the valid cluster index vectors. Hence, they are considered equivalent. Formally, for any two cluster index vectors $c^{(1)}$ and $c^{(2)}$, if there exists a permutation $\sigma$ on $[K]$ such that $c^{(1)} = \sigma(c^{(2)})$, then $c^{(1)}$ and $c^{(2)}$ are said to be equivalent, denoted by $c^{(1)} \sim c^{(2)}$.
Let $\mathcal{C}:\Theta^M\rightarrow[K]^M$ be a relation that takes the collection parameters $\btheta$ as input, applies SLINK, and outputs the cluster index vector $\bc$, i.e., $\bc \sim \mathcal{C}(\btheta)$. 

The collection of parameters of $M$ arms $\btheta$ is unknown. We sequentially observe a sample under the bandit framework, i.e., at each time $t$, based on the observed samples from the past $t-1$ times, we can adaptively select an arm $A_{t}\in [M]$ and observe a sample. The objective is to estimate the $K$ clusters among the $M$ arms while observing as few samples as possible, subject to a fixed error probability. The more formal description of the objective is as follows. We use $\mathbb{P}^\pi_{\btheta}$ and $\mathbb{E}_{\btheta}^\pi$ to represent the probability measure and the expectation measure under the problem instance $\btheta$ and the algorithm $\pi$. Let $\tau_\delta(\pi)$ be the stopping time of the algorithm and $\bc_{\tau_\delta}(\pi)$ be the estimated cluster index vector at the stopping time, for some fixed error probability $\delta\in(0, 1)$.
\begin{definition} \label{definition: deltapc}
    An algorithm $\pi$ is said to be $\delta$-Probably Correct ($\delta$-PC), if the algorithm $\pi$ stops in finite time almost surely, i.e., $\mathbb{P}_{\btheta}^{\pi}[\tau_\delta(\pi)<\infty]=1$, and the probability of error is upper bound by $\delta\in (0, 1)$, i.e., $\mathbb{P}_{\btheta}^\pi\left[ \bc_{\tau_\delta}(\pi) \nsim \mathcal{C}(\btheta) \right]\leq \delta$.
\end{definition}
As discussed in Section \ref{sec: intro}, this problem has been studied in \cite{chandran2025online} for Gaussian-distributed arms; however, the designed $\delta-$PC algorithm, ATBOC, is computationally costly. Hence, the objective is to design a computationally efficient $\delta$-PC algorithm $\pi$ (Definition \ref{definition: deltapc}) with the expected sample complexity $\mathbb{E}_{\btheta}^\pi[\tau_\delta(\pi)]$ as low as possible, subject to a fixed error probability $\delta$.

In our work, we consider a collection of probability distributions $\left\{P_m(\cdot\mid\btheta_m):m\in [M]\right\}$ for $M$ arms which satisfy the following assumptions.
\begin{enumerate}
    \item The log-likelihood function $\log P_m(\cdot\mid \brho)$ is twice differentiable and concave in $\brho \in \Theta$.
    \item $\mathbb{E}_{\bX\sim P_m(\cdot\mid\btheta_m)}\left[ \left\| \nabla_{\brho} \log P_m(\bX\mid \brho) \big|_{\brho=\btheta_m} \right\|^3 \right]<\infty$.
    \item Fisher information $\mathcal{I}_m(\brho) = \mathbb{E}_{\bX\sim P_m(\cdot\mid\brho)}\left[\lambda_{\text{max}}\left( -\nabla_{\brho}^2\log P_m(\bX\mid \brho) \right)\right]<\infty$.
    \item $\lambda_{\text{min}}\left( -\nabla_{\brho}^2\log P_m(\cdot\mid \brho) \right)\geq \sigma^2$, for some $\sigma^2>0$.
    \item KL-divergence $D_{\text{KL}}^{(m)}(\brho_1, \brho_2) = \mathbb{E}_{\bX\sim P_m(\cdot\mid\brho_1)}\left[ \log\frac{P_m(X\mid\brho_1)}{P_m(X\mid\brho_2)} \right]$ is uniformly continuous with respect to $\brho_1,\brho_2\in \Theta$, $\forall m \in [M]$.
\end{enumerate}
We use $\lambda_{\text{max}}(\cdot)$ and $\lambda_{\text{min}}(\cdot)$ to denote the maximum and minimum eigen values respectively. 
The efficient BAI studied in \cite{mukherjee2024efficient} considers scalar-parameter distributions for arms, with the parameters being their means. We broaden this class of distributions by considering vector-parameter distributions, satisfying these assumptions, with the parameters not necessarily being their means.
A broad class of distributions satisfy these assumptions. In particular, the vector parameter exponential family of distributions fall within this class.

\section{Lower bound}
In this section, we present a problem instance-dependent and algorithm-independent information-theoretic asymptotic lower bound on the expected stopping time of an algorithm for the BC problem discussed in Section \ref{sec: problem setup}. Define the probability simplex $\mathcal{P}_M = \left\{ \bw\in \mathbb{R}_{+}^M: \sum_{m=1}^Mw_m = 1\right\}$, where $\bw = [w_1, \ldots, w_M]$. We use $\text{KL}(p, q)$ to denote the KL-divergence between the Bernoulli distribution with means $p$ and $q$. We define the alternative space of $\btheta$, denoted by $\text{Alt}(\btheta)$, as the set of all problem instances $\blambda$ that form different clusters than that  of $\btheta$, i.e., $\text{Alt}(\btheta)\coloneqq \left\{ \blambda\in \Theta^M: \mathcal{C}(\btheta)\nsim \mathcal{C}(\blambda) \right\}$, where any $\blambda\in \text{Alt}(\btheta)$ is an alternative problem instance to $\btheta$.  
\begin{theorem} \label{theorem: lowerbound}
    Consider $\delta \in (0, 1)$. For a problem instance $\btheta \in \Theta^M$, any $\delta$-PC algorithm $\pi$ (Definition \ref{definition: deltapc}) satisfies $\mathbb{E}_{\btheta}^\pi\left[ \tau_\delta(\pi) \right] \geq \text{KL}(\delta, 1-\delta)T^{*}(\btheta)$, where
    \begin{equation}
        T^{*}(\btheta) \coloneqq \left[\sup_{\bw \in \mathcal{P}_M} \inf_{\blambda\in \text{Alt}(\btheta)} \sum_{m=1}^M w_m D_{\text{KL}}^{(m)}(\btheta_m, \blambda_m)\right]^{-1}.
    \end{equation}
    Furthermore, $\liminf_{\delta\rightarrow 0}\frac{\mathbb{E}_{\btheta}^{\pi}[\tau_\delta(\pi)]}{\log\left(\frac{1}{\delta}\right)} \geq T^{*}(\btheta)$.
\end{theorem}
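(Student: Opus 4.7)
My plan follows the standard change-of-measure technique of Kaufmann, Capp\'e and Garivier, adapted to the clustering setting. Let $N_m(\tau)$ denote the number of pulls of arm $m$ by the algorithm $\pi$ up to its stopping time $\tau := \tau_\delta(\pi)$, so that $\sum_{m=1}^M N_m(\tau) = \tau$.

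First, I would invoke the transportation inequality: for any alternative instance $\blambda \in \Theta^M$ and any measurable event $E$ determined by the observations up to time $\tau$,
\begin{equation}
\sum_{m=1}^M \mathbb{E}_{\btheta}^\pi[N_m(\tau)] \, D_{\text{KL}}^{(m)}(\btheta_m, \blambda_m) \geq \text{KL}\bigl(\mathbb{P}_{\btheta}^\pi(E), \mathbb{P}_{\blambda}^\pi(E)\bigr).
\end{equation}
This follows from Wald's identity applied to the accumulated log-likelihood ratio between the two measures, together with the data-processing inequality for the sub-$\sigma$-field generated by $\mathbf{1}_E$.

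Second, I would take $E = \{\bc_\tau(\pi) \sim \mathcal{C}(\btheta)\}$, the correct-decision event under $\btheta$. Definition~\ref{definition: deltapc} gives $\mathbb{P}_{\btheta}^\pi(E) \geq 1-\delta$, and for every $\blambda \in \text{Alt}(\btheta)$ the same event constitutes a misidentification under $\blambda$, hence $\mathbb{P}_{\blambda}^\pi(E) \leq \delta$. Since the Bernoulli KL is increasing in its first argument and decreasing in its second on $\{p > q\}$, we obtain $\text{KL}(\mathbb{P}_{\btheta}^\pi(E), \mathbb{P}_{\blambda}^\pi(E)) \geq \text{KL}(1-\delta, \delta) = \text{KL}(\delta, 1-\delta)$.

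Third, I would divide both sides of the resulting inequality by $\mathbb{E}_{\btheta}^\pi[\tau]$. The vector $\bw \in \mathcal{P}_M$ defined by $w_m := \mathbb{E}_{\btheta}^\pi[N_m(\tau)]/\mathbb{E}_{\btheta}^\pi[\tau]$ is a valid allocation, and therefore
\begin{equation}
\inf_{\blambda \in \text{Alt}(\btheta)} \sum_{m=1}^M w_m D_{\text{KL}}^{(m)}(\btheta_m, \blambda_m) \;\geq\; \frac{\text{KL}(\delta, 1-\delta)}{\mathbb{E}_{\btheta}^\pi[\tau]}.
\end{equation}
Upper-bounding the left-hand side by the supremum over all $\bw' \in \mathcal{P}_M$ of the same infimum --- i.e., by $1/T^{*}(\btheta)$ --- and rearranging produces the finite-$\delta$ bound. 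The asymptotic statement is then immediate from the elementary limit $\text{KL}(\delta, 1-\delta)/\log(1/\delta) \to 1$ as $\delta \to 0$.

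The main obstacle, in my view, is the rigorous justification of the transportation inequality for a stopping time that is only almost-surely finite rather than uniformly bounded: one must control the random number of log-likelihood terms under the alternative measure $\blambda$ and ensure that $D_{\text{KL}}^{(m)}(\btheta_m, \blambda_m)$ is finite for every $\blambda \in \Theta^M$. This is secured by the compactness of $\Theta$ together with the uniform-continuity hypothesis on the per-arm KL divergence listed in Section~\ref{sec: problem setup}, so the remaining steps reduce to the monotonicity of Bernoulli KL and a routine asymptotic expansion.
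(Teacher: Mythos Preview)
Your proposal is correct and follows essentially the same route as the paper's proof: both apply the transportation inequality of Kaufmann--Capp\'e--Garivier (Lemma~1 in \cite{kaufmann2016complexity}), take the infimum over $\blambda \in \text{Alt}(\btheta)$, factor out $\mathbb{E}_{\btheta}^\pi[\tau_\delta]$ so that the normalized pull counts form an element of $\mathcal{P}_M$, and then upper-bound by the supremum over all $\bw \in \mathcal{P}_M$. The only difference is cosmetic: you explicitly unpack the transportation lemma by choosing the correct-decision event $E$ and invoking monotonicity of the Bernoulli KL, whereas the paper simply cites the lemma in the form that already yields $\text{KL}(\delta,1-\delta)$ on the right-hand side.
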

We prove Theorem \ref{theorem: lowerbound} by using the Transportation Cost inequality presented in Lemma 1 in \cite{kaufmann2016complexity}.
We use $\psi(\bw, \btheta)$ to denote the inner infimum problem in Theorem \ref{theorem: lowerbound}, i.e., 
$\psi(\bw, \btheta) \coloneqq   \inf_{\blambda\in \text{Alt}(\btheta)} \sum_{m=1}^M w_m D_{\text{KL}}^{(m)}(\btheta_m, \blambda_m).$
\begin{lemma} \label{lemma: psicont}
    The function $\psi(\cdot, \cdot)$ is continuous in $\mathcal{P}_M\times\Theta^M$.
\end{lemma}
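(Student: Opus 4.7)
The plan is to establish continuity of $\psi$ by first proving joint continuity of the objective and then handling the dependence of the feasible set $\text{Alt}(\btheta)$ on $\btheta$ via sequential arguments on the compact space $\Theta^M$. First I would show that $f(\bw,\btheta,\blambda) := \sum_{m=1}^M w_m D_{\text{KL}}^{(m)}(\btheta_m,\blambda_m)$ is jointly continuous on $\mathcal{P}_M \times \Theta^M \times \Theta^M$: splitting $|f(\bw,\btheta,\blambda) - f(\bw',\btheta',\blambda')|$ into a weight-perturbation term (controlled by boundedness of KL on the compact parameter space) and a parameter-perturbation term (controlled by the uniform continuity of each $D_{\text{KL}}^{(m)}$ on $\Theta \times \Theta$ from Assumption~5) gives this. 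Because $f$ is continuous, the infimum defining $\psi$ over $\text{Alt}(\btheta)$ coincides with the infimum over the compact closure $\overline{\text{Alt}(\btheta)} \subseteq \Theta^M$, so $\psi$ is in fact attained.

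Fix $(\bw_0,\btheta_0)$ and take an arbitrary sequence $(\bw_n,\btheta_n) \to (\bw_0,\btheta_0)$; I would prove upper and lower semi-continuity of $\psi$ at $(\bw_0,\btheta_0)$ separately. For lower semi-continuity, pick near-infimizers $\blambda_n \in \text{Alt}(\btheta_n)$ with $f(\bw_n,\btheta_n,\blambda_n) \le \psi(\bw_n,\btheta_n) + 1/n$; by compactness of $\Theta^M$, a subsequence satisfies $\blambda_n \to \blambda_\infty$, and joint continuity yields $f(\bw_n,\btheta_n,\blambda_n) \to f(\bw_0,\btheta_0,\blambda_\infty)$. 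The crux is to show $\blambda_\infty \in \overline{\text{Alt}(\btheta_0)}$: either directly when $\mathcal{C}(\blambda_\infty) \nsim \mathcal{C}(\btheta_0)$, or $\blambda_\infty$ lies on a SLINK boundary (being a limit of points whose clusters differ from those of $\btheta_0$), or $\btheta_0$ itself is a tie configuration, which forces $\psi(\bw_0,\btheta_0) = 0$ and trivializes the bound.

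For upper semi-continuity, given $\epsilon > 0$ I would pick $\blambda^* \in \text{Alt}(\btheta_0)$ with $f(\bw_0,\btheta_0,\blambda^*) < \psi(\bw_0,\btheta_0) + \epsilon$. When $\btheta_0$ lies strictly inside its SLINK cell (so $\mathcal{C}$ is locally constant), $\mathcal{C}(\btheta_n) \sim \mathcal{C}(\btheta_0)$ for all large $n$, hence $\blambda^* \in \text{Alt}(\btheta_n)$ eventually, and joint continuity of $f$ yields $\limsup_n \psi(\bw_n,\btheta_n) \le \psi(\bw_0,\btheta_0) + \epsilon$. The main obstacle, and the most delicate part of the argument, is the tie case: at a SLINK tie configuration $\btheta_0$ the set-valued map $\btheta \mapsto \text{Alt}(\btheta)$ is genuinely discontinuous because $\mathcal{C}(\btheta_n)$ may switch between partitions under arbitrarily small perturbations. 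The resolution is that a tie at $\btheta_0$ places $\btheta_0 \in \overline{\text{Alt}(\btheta_0)}$, giving $\psi(\bw_0,\btheta_0) = 0$; and for every $\btheta_n \to \btheta_0$ one can explicitly exhibit $\blambda_n \in \text{Alt}(\btheta_n)$ with $\|\blambda_n - \btheta_n\| \to 0$ --- choosing $\blambda_n = \btheta_0$ on the subsequence where $\mathcal{C}(\btheta_n) \nsim \mathcal{C}(\btheta_0)$, and a fixed cluster-flipping perturbation of $\btheta_0$ on the complementary subsequence --- so that $f(\bw_n,\btheta_n,\blambda_n) \to 0$ by uniform continuity of KL, forcing $\psi(\bw_n,\btheta_n) \to 0 = \psi(\bw_0,\btheta_0)$.
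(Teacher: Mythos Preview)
Your proposal follows essentially the same route as the paper: both fix a sequence $(\bw_n,\btheta_n)\to(\bw_0,\btheta_0)$, prove the $\limsup$ and $\liminf$ inequalities separately via near-infimizers, and rely on the fact that the SLINK cell $\{\blambda:\mathcal{C}(\blambda)\sim\mathcal{C}(\btheta_0)\}$ is open so that $\text{Alt}(\btheta_n)=\text{Alt}(\btheta_0)$ eventually at non-tie points. The paper manipulates $\epsilon$-bounds from Assumption~5 directly, while you first package them into joint continuity of $f(\bw,\btheta,\blambda)$ on the compact set $\mathcal{P}_M\times\Theta^M\times\Theta^M$; these are equivalent, and your formulation is slightly cleaner. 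Your treatment is in fact more complete than the paper's: the paper simply asserts that $\text{CAlt}(\btheta)=\Lambda\cap\Theta^M$ with $\Lambda$ open and proceeds as if $\btheta_0$ always lies in the interior of its SLINK cell, thereby silently skipping tie configurations, whereas you handle those explicitly by observing that a tie forces $\btheta_0\in\overline{\text{Alt}(\btheta_0)}$ and hence $\psi(\bw_0,\btheta_0)=0$. One small correction in your tie-case upper-semicontinuity argument: a \emph{fixed} perturbation $\blambda_*$ on the complementary subsequence does not yield $\|\blambda_n-\btheta_n\|\to 0$; take instead $\blambda_n\in\text{Alt}(\btheta_0)$ with $\|\blambda_n-\btheta_0\|\to 0$ (possible precisely because $\btheta_0\in\overline{\text{Alt}(\btheta_0)}$), or argue with an $\epsilon$-dependent $\blambda_\epsilon$ and let $\epsilon\downarrow 0$.
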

We prove Lemma \ref{lemma: psicont} using Assumption 5. From Lemma \ref{lemma: psicont} and the fact that $\mathcal{P}_M$ is compact, the $\sup$ in Theorem \ref{theorem: lowerbound} can be replaced with $\max$ and the optimizer is given by 
$S^{*}(\btheta) = \argmax_{\bw \in \mathcal{P}_M} \psi(\bw, \btheta)$. Here, $\bw$ can be interpreted as the arm pull proportions, i.e., the fraction of times each arm is sampled. Hence, $\mathcal{S}^{*}(\btheta)$ is the set of all optimal arm pull proportions that maximize the weighted KL-distance between the true instance and its closest alternative.

\section{Efficient Bandit Clustering Algorithm (EBC) and its performance analysis}
The pseudo code for the proposed Efficient Bandit Clustering algorithm (EBC) is presented in Algorithm~\ref{Algo:EBC}. EBC, a sequential algorithm,  does the following at each time $t$. 
\begin{itemize}
    \item \textbf{Sampling:} Select an arm to observe a sample.
    \item \textbf{Stopping:} Decides whether to stop or continue.
    \item \textbf{Declaration:} Declares the estimated clusters on stopping.
\end{itemize}

\begin{algorithm} 
\caption{\textit{EBC}}\label{Algo:EBC}
\begin{algorithmic}[1]
\STATE \textbf{Input:} $\delta$,  $K$
\STATE \textbf{Initialize:} $t=0$, $N_m(0)=0, \forall  m \in [M]$, $\bw(0)\in \mathcal{P}_M$.

\REPEAT
    \IF{$ \min_{m \in [M]} N_m(t) < \sqrt{\frac{t}{M}}$}
        \STATE $
        A_{t+1} = \argmin_{m \in [M]} N_m(t)$
    \ELSE
        \STATE $\displaystyle A_{t+1} = \argmin_{m \in [M]}  \left[ \frac{N_m(t)}{t} - \overline{w}_m(t) \right] $
    \ENDIF
    \STATE Observe a sample from data sequence $A_{t+1}$
    \STATE $t \leftarrow t+1$, $N_{A_t}(t)$=$N_{A_t}(t-1)+1$. Update $\hat{\theta}_{A_t}(t)$.
    \STATE Compute the gradient $\bg_t$ at $\bw(t-1)$. (Eq. \eqref{eq: grad})
    \STATE Compute $\bw^{'}(t)=\bw(t-1)+\eta \bg_t$ and project it to probability simplex $\bw(t) = \argmax_{\bw\in\mathcal{P}_M}\|\bw-\bw^{'}(t)\|$.
    \STATE Compute $\overline{\bw}(t) = \frac{1}{t}\sum_{s=1}^t\bw(s)$
    \STATE Compute $Z(t) = t \psi\left( \frac{\boldsymbol{N}(t)}{t}, \boldsymbol{\hat{\theta}}(t) \right)$ and $\beta\left( \delta, t \right)$.

\UNTIL{$Z(t) \geq \beta(\delta, t)$} 

\STATE $\boldsymbol{\hat{c}} = \mathcal{C}\left( \boldsymbol{\hat{\theta}}(t) \right)$
\STATE \textbf{Output:} $\boldsymbol{\hat{c}}$
\end{algorithmic}
\end{algorithm}
Now, we discuss each of them in detail. \\
\textbf{Sampling rule:}
Define $\bN(t)\coloneqq [N_1(t), \ldots, N_M(t)]$, where $N_m(t)$ is the number of samples observed from arm $m$ till time $t$. We call $\bw(t)$ as the estimate of optimal arm pull proportions $\bw^{*}\in \mathcal{S}^{*} (\btheta)$ at time $t$. We define $\overline{\bw}(t)$ as the average of estimates till time $t$, i.e., $\overline{\bw}(t)\coloneqq \frac{1}{t}\sum_{s=1}^t \bw(s)$.
We use $\hat{\btheta}(t) = \left[\hat{\btheta}_1(t), \dots, \hat{\btheta}_M(t)\right]$, where $\hat{\btheta}_m(t)$ is the maximum likelihood estimate of arm $m$ at time $t$ projected into the known compact space $\Theta$, i.e., 
$\hat{\btheta}_m(t) = \argmax_{\brho \in \Theta}\sum_{s\in[t]:A_s=t}\log P_m\left( \bX_s^{(m)}\mid \brho \right).$

The sampling rule involves two components- Forced Exploration and Gradient Tracking. Forced exploration ensures that each arm is sampled at least on the order of $\sqrt{t}$ (Lines 4 and 5 in Algorithm \ref{Algo:EBC}). 
We randomly initialize the arm pull proportion estimate $\bw(0)$ (Line 2). At each time $t$, we compute the gradient of the inner infimum function $\psi(\cdot, \cdot)$, i.e., 
$
\bg_t = \nabla_{\bw}\psi(\bw, \hat{\btheta}(t)) \big|_{\bw=\bw(t-1)}.
$
From Danskin's theorem \cite{bertsekas1971control}, the above gradient can be computed as 
\begin{equation} \label{eq: grad}
    \bg_t = \left[ D_{\text{KL}}^{(1)}(\hat{\btheta}_1(t), \blambda_1^{*}), \ldots, D_{\text{KL}}^{(M)}(\hat{\btheta}_M(t), \blambda_M^{*})  \right],
\end{equation}
where $\displaystyle \blambda^{*} \in \arginf_{\blambda\in \text{Alt}(\hat{\btheta}(t))} \sum_{m=1}^M w_m D_{\text{KL}}^{(m)}(\hat{\btheta}_m(t), \blambda_m)$.
In Gradient tracking, we use the computed gradient $\bg_t$ to perform a one-step gradient descent update and project it onto the probability simplex, obtaining the new estimate $\bw(t)$ (Line 12). To project it into the probability simplex, we use the algorithm proposed in \cite{chen2011projection}. Then, we find the average of estimates $\overline{\bw}(t)$ (Line 13) and select an arm to track $\overline{\bw}(t)$ (Line 7). 

The forced exploration in the sampling rule ensures the convergence of the estimated parameter $\hat{\btheta}_m(t)$ to the true parameter $\btheta_m$ as presented in Lemma \ref{lemma: parameter converge}.
\begin{lemma} \label{lemma: parameter converge}
    For any $\epsilon_1>0$, there exist a stochastic time $N_{\epsilon_1}^S$ satisfying $\mathbb{E}[N_{\epsilon_1}^S]<\infty$, such that for all $t>N_{\epsilon_1}^S$, $\left\| \hat{\btheta}_m(t)-\btheta_m \right\|<\epsilon_1$, for all $m\in[M]$.
\end{lemma}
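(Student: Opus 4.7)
The plan is to combine the forced-exploration step of the sampling rule with a quantitative MLE concentration bound, and then control $N_{\epsilon_1}^S$ via the tail-sum identity $\mathbb{E}[N_{\epsilon_1}^S] = \sum_{t\geq 0}\mathbb{P}(N_{\epsilon_1}^S > t)$. The role of forced exploration is to guarantee that $N_m(t)$ grows like $\sqrt{t/M}$ on every sample path, which lifts a per-arm MLE deviation bound (in the per-arm sample size $n = N_m(t)$) into a deviation bound in the calendar time $t$.

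First I would establish the deterministic lower bound $N_m(t) \geq \lceil \sqrt{t/M}\rceil - 1$ for every arm $m$ and every $t$. This is a combinatorial consequence of Lines 4--5 of Algorithm~\ref{Algo:EBC}: whenever an arm falls strictly below $\sqrt{t/M}$ it is pulled in the next round, so the minimum count lags the threshold by at most one. Second, I would obtain a per-arm MLE concentration bound. Assumption~1 together with Assumption~4's $\sigma^2$-strong concavity of $\log P_m(\cdot\mid\brho)$ in $\brho$, combined with the first-order optimality of $\hat\btheta_m(t)$ in the (eventually reached) interior of $\Theta$, yields the standard quadratic-expansion bound
\begin{equation}
\bigl\|\hat{\btheta}_m(t) - \btheta_m\bigr\| \;\leq\; \frac{1}{\sigma^{2}}\Bigl\|\tfrac{1}{N_m(t)}\!\!\sum_{s\leq t:\,A_s=m}\!\!\nabla_{\brho}\log P_m\bigl(\bX_s^{(m)}\mid\btheta_m\bigr)\Bigr\|.
\end{equation}
The score at $\btheta_m$ is a mean-zero i.i.d.\ vector sequence with finite third absolute moment (Assumption~2), so a Marcinkiewicz--Zygmund inequality applied along the subsequence of times at which arm $m$ is sampled gives a polynomial tail $\mathbb{P}(\|\hat\btheta_m(t) - \btheta_m\| \geq \epsilon_1 \mid N_m(t)=n) \leq g(n,\epsilon_1)$.

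Third, I would define $N_{\epsilon_1}^S \coloneqq \sup\{t : \exists m,\ \|\hat{\btheta}_m(t)-\btheta_m\| \geq \epsilon_1\}$, which is finite almost surely by strong consistency of the MLE. A union bound over arms, the deterministic sample-count lower bound from Step~1, and Fubini on the double sum $\sum_t \sum_{s>t}$ give
\begin{equation}
\mathbb{E}[N_{\epsilon_1}^S] \;\leq\; M \sum_{s\geq 1} s\cdot g\!\left(\lceil\sqrt{s/M}\rceil-1,\ \epsilon_1\right),
\end{equation}
so that finiteness of $\mathbb{E}[N_{\epsilon_1}^S]$ reduces to convergence of this series.

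The main obstacle is obtaining a concentration tail $g(n,\epsilon_1)$ that decays fast enough to make the right-hand side summable: since $n \sim \sqrt{s/M}$, $g(n,\epsilon_1)$ must decay strictly faster than $n^{-4}$. A raw Marcinkiewicz--Zygmund bound from the third-moment Assumption~2 alone only produces $g \sim n^{-3/2}$, which is insufficient. The technical heart of the proof is therefore a moment bootstrap, or a Bernstein-type inequality on the score increments, that exploits the compactness of $\Theta$ together with the Fisher-information bound of Assumption~3 to upgrade Assumption~2 to a high-order moment (or exponential) tail on $\|\nabla_{\brho}\log P_m(\bX\mid\btheta_m)\|$; once such an upgraded tail is in hand, the series above converges and the lemma follows.
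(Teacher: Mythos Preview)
Your proposal has the right ingredients---forced exploration gives $N_m(t)\gtrsim\sqrt{t/M}$, strong concavity reduces the MLE error to the norm of the averaged score, and Assumption~2 controls score moments---but combines them in a way that cannot close, and the proposed repair is unjustified. The loss occurs in Step~3: bounding $\mathbb{P}(N_{\epsilon_1}^S>t)=\mathbb{P}(\exists\,s>t:\text{bad at }s)$ by $\sum_{s>t}\mathbb{P}(\text{bad at }s)$ and then interchanging sums forces the per-sample tail $g(n,\epsilon_1)$ to be $o(n^{-4})$, whereas Assumption~2 (third score moment) only delivers $g(n,\epsilon_1)=O(n^{-3/2})$ via Marcinkiewicz--Zygmund, as you note. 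Compactness of $\Theta$ bounds the parameter, not the data or the score, and Assumption~3 is merely a first-moment bound on the Hessian; neither yields higher moments of $\nabla_{\brho}\log P_m(\bX\mid\btheta_m)$, so the ``moment bootstrap'' you propose cannot be carried out under the stated assumptions.

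The paper avoids the union over $s$ entirely by working with the last exit time in the \emph{per-arm sample clock}: set $T_m^{\epsilon_1}:=\sup\{n:\text{the MLE of arm $m$ from $n$ samples is $\epsilon_1$-bad}\}$. Tartakovsky's $r$-quick convergence theory states that, for i.i.d.\ zero-mean vectors $Y_s$, the $r$-th moment of the last exit time of the running average is finite if and only if $\mathbb{E}\|Y\|^{r+1}<\infty$. Combined with your own score inequality, this gives $\mathbb{E}\bigl[(T_m^{\epsilon_1})^2\bigr]<\infty$ directly from Assumption~2 (take $r=2$), with no bootstrap. Forced exploration then converts to calendar time via $N_m\leq 4M(T_m^{\epsilon_1})^2$ for the last bad calendar time $N_m$ of arm $m$, whence $\mathbb{E}[N_{\epsilon_1}^S]\leq\sum_m\mathbb{E}[N_m]\leq 4M\sum_m\mathbb{E}\bigl[(T_m^{\epsilon_1})^2\bigr]<\infty$. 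The exponent $r=2$ matches the $\sqrt{t}$ exploration rate exactly, which is why the third-moment assumption is already sufficient.
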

We prove Lemma \ref{lemma: parameter converge} by leveraging the results on $r$-quick convergence \cite{tartakovsky2023quick} and Assumption 2.
The above-discussed sampling rule ensures that the empirical arm pull proportions converge to the optimal proportions as presented in Lemma \ref{prop: armpullpropconverge}. 
\begin{lemma} \label{prop: armpullpropconverge}
For any $\epsilon_1>0$, there exist a stochastic time $N_{\epsilon_1}^S$ satisfying $\mathbb{E}[N_{\epsilon_1}^S]<\infty$, such that for all $t>N_{\epsilon_1}^S$, $\left|\frac{N_m(t)}{t}-w_m^{*}\right|<\epsilon_1$, for all $m\in[M]$, for some $\bw^{*}\in\mathcal{S}^{*}(\btheta)$.
\end{lemma}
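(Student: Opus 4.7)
The plan is to combine three ingredients: (i) the parameter-convergence result of Lemma~\ref{lemma: parameter converge}; (ii) a convergence analysis of the projected gradient ascent iterates $\bw(t)$ on the concave map $\bw\mapsto\psi(\bw,\btheta)$; and (iii) a tracking analysis of the cumulative rule in Line~7 of Algorithm~\ref{Algo:EBC}.

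First, I would fix an arbitrarily small $\epsilon'>0$ and invoke Lemma~\ref{lemma: parameter converge} to obtain a stochastic time $N^{S}_{\epsilon'}$ with $\mathbb{E}[N^{S}_{\epsilon'}]<\infty$ beyond which $\hat{\btheta}(t)$ is $\epsilon'$-close to $\btheta$ componentwise. Using Assumption~5 together with Lemma~\ref{lemma: psicont}, I would then argue that, for $t>N^{S}_{\epsilon'}$: (a) the gradient $\bg_t$ evaluated via \eqref{eq: grad} at $(\bw(t-1),\hat{\btheta}(t))$ is uniformly close, over the compact simplex $\mathcal{P}_M$, to $\nabla_{\bw}\psi(\bw(t-1),\btheta)$; and (b) $\bigl|\max_{\bw}\psi(\bw,\hat{\btheta}(t))-\max_{\bw}\psi(\bw,\btheta)\bigr|$ is small. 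Hence after this burn-in EBC is essentially performing projected gradient ascent on the time-invariant concave function $\psi(\cdot,\btheta)$ with a vanishingly small drift.

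Next, since $\psi(\cdot,\btheta)$ is concave on the compact convex set $\mathcal{P}_M$ (being the infimum of linear functions) with uniformly bounded subgradients (by Assumption~3 and the compactness of $\Theta$), standard online convex optimization regret bounds for projected gradient ascent yield $\max_{\bw}\psi(\bw,\btheta)-\psi(\overline{\bw}(t),\btheta)=o(1)$ for the step size $\eta$ used in the algorithm. Combined with the continuity of $\psi$ (Lemma~\ref{lemma: psicont}) and the compactness of the maximizer set $\mathcal{S}^{*}(\btheta)$, this gives, for any $\epsilon_2>0$, a stochastic time $N^{W}_{\epsilon_2}$ with finite expectation beyond which there exists $\bw^{*}(t)\in\mathcal{S}^{*}(\btheta)$ with $\|\overline{\bw}(t)-\bw^{*}(t)\|<\epsilon_2$.

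Finally, I would show that the sampling rule in Lines~4--7 is a variant of the C-Tracking rule of Garivier--Kaufmann applied to the running averages $\overline{\bw}(t)$: because the forced-exploration step in Line~4 guarantees $N_m(t)\geq\sqrt{t/M}$, a straightforward induction on the greedy choice in Line~7 gives the deterministic deviation bound $\|\bN(t)/t-\overline{\bw}(t)\|_\infty=O((\log t)/t)$. Chaining this with the previous paragraph via the triangle inequality, and defining $N^{S}_{\epsilon_1}$ as the maximum of the relevant stochastic times (its expectation is finite by a union bound), yields the claim with $\bw^{*}\in\mathcal{S}^{*}(\btheta)$ taken as a limit point of $\overline{\bw}(t)$ along a suitable subsequence. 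The main obstacle will be the second step: the set $\mathcal{S}^{*}(\btheta)$ need not be a singleton and the iterates $\bw(t)$ are driven by the drifting, random objective $\psi(\cdot,\hat{\btheta}(t))$ rather than by the fixed $\psi(\cdot,\btheta)$, so I expect to have to choose $\epsilon'$ small enough that the gradient drift after $N^{S}_{\epsilon'}$ is dominated by the $\eta$-scale of the ascent step, and then to convert a small optimality gap into proximity to $\mathcal{S}^{*}(\btheta)$ using only concavity and compactness, not strict concavity.
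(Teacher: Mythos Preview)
Your three-ingredient plan matches the paper's architecture. Step~(ii) is exactly the paper's Lemma~\ref{lemma: gradconverge}: a projected-gradient regret bound on the concave map $\bw\mapsto\psi(\bw,\btheta)$, with the drift from $\hat{\btheta}(t)$ to $\btheta$ absorbed via Lemma~\ref{lemma: psicont}, followed by the conversion of a small optimality gap into proximity to $\mathcal{S}^{*}(\btheta)$ using only concavity and compactness (the paper's equations \eqref{eq: 32}--\eqref{eq: 33}). One correction: do not try to control $\|\bg_t-\nabla_{\bw}\psi(\bw(t-1),\btheta)\|$ directly. Since the infimum defining $\psi$ may have multiple minimizers $\blambda^{*}$, Danskin's theorem only delivers a \emph{sub}gradient, and a small perturbation of $\hat{\btheta}(t)$ can switch which $\blambda^{*}$ is active. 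The paper sidesteps this by bounding $|\psi(\bw,\hat{\btheta}(t))-\psi(\bw,\btheta)|$ uniformly in $\bw$ and then running the usual telescoping regret argument on function values; you should do the same.

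Your step~(iii) is where you take a different route. Rewriting Line~7 as $A_{t+1}=\argmin_m\bigl[N_m(t)-\sum_{s\le t}w_m(s)\bigr]$ shows this is precisely C-tracking on the sequence $\{\bw(s)\}$, so a Garivier--Kaufmann deterministic bound applies; however, the forced exploration in Lines~4--5 fires $O(\sqrt{t})$ times, so the correct deviation is $\|\bN(t)/t-\overline{\bw}(t)\|_\infty=O(1/\sqrt{t})$, not $O((\log t)/t)$. The paper instead uses an \emph{oversampled-arms} argument (its Lemmas~\ref{lemma: emptyO}--\ref{lemma: ifempyO}): because both $\bN(t)/t$ and $\overline{\bw}(t)$ lie in $\mathcal{P}_M$, the tracked arm always satisfies $N_{A_{t+1}}(t)/t\le\overline{w}_{A_{t+1}}(t)$, so once $\overline{\bw}(t)$ is $\epsilon/4$-close to $\mathcal{S}^{*}(\btheta)$ no arm in $\mathcal{O}_t^\epsilon=\{m:N_m(t)/t>w_m^{*}+\epsilon\ \forall\,\bw^{*}\in\mathcal{S}^{*}(\btheta)\}$ is ever selected again, hence $\mathcal{O}_t^\epsilon$ empties in finite expected time; the simplex constraint then forces the remaining coordinates close to some $\bw^{*}$. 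Your C-tracking route is the more standard one and yields an explicit rate; the paper's route avoids bookkeeping on how forced-exploration rounds interleave with the cumulative-tracking bound but gives no rate. Either works.
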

To prove Lemma \ref{prop: armpullpropconverge}, we first show that the sequence $\{{\overline{\bw}(t) : t \in \mathbb{N}}\}$ converges to an optimal arm-pull proportions $\bw^{*} \in \mathcal{S}^{}(\btheta)$ using Lemmas \ref{lemma: psicont} and \ref{lemma: parameter converge}. We then show that any sampling rule that tracks such a sequence ensures the convergence of the empirical arm-pull proportions to the optimal proportions.
\begin{remark}
    The sampling rule of the existing asymptotically optimal BC algorithms in the literature requires solving the $\sup\inf$ problem in Theorem \ref{theorem: lowerbound} at each time $t$, which is computationally costly. Such a computationally costly sampling rule was to ensure that the arm pull proportions $\frac{\bN(t)}{t}$ converge to the optimal arm pull proportions $\bw^{*}\in\mathcal{S}^{*}(\btheta)$. In our sampling rule, at each time $t$, we compute the gradient of the inner infimum problem. Its computation is equivalent to solving only the $\inf$ problem. This results in a considerable reduction in run time, as evident from Table \ref{Table: complexity} by comparing EBC and ATBOC. Moreover, this low-complexity sampling rule ensures the convergence of empirical proportions to optimal proportions (Lemma \ref{prop: armpullpropconverge}). 
\end{remark}
\noindent\textbf{Stopping rule:}
We use $Z(t)$ and $\beta(\delta,t)$ to denote the test statistic and threshold,
respectively. The algorithm stops when $Z(t) > \beta(\delta,t)$; otherwise, it
continues sampling (Line~15). The test statistic is the generalized likelihood
ratio, i.e.,
\begin{equation}
    Z(t) = \log\frac{\displaystyle \max_{\blambda: \mathcal{C}(\blambda)\sim \mathcal{C}\left(\hat{\btheta}(t)\right)}\prod_{m\in [M]}\prod_{s\in[t]:A_s=m}P_m\left(\bX_s^{(m)}\mid \blambda\right)}{\displaystyle \max_{\blambda: \mathcal{C}(\blambda)\nsim \mathcal{C}\left(\hat{\btheta}(t)\right)}\prod_{m\in [M]}\prod_{s\in[t]:A_s=m}P_m\left(\bX_s^{(m)}\mid \blambda\right)}.
\end{equation}
The above expression on simplification yields $Z(t) = t\psi\left(\frac{\bN(t)}{t}, \hat{\btheta}(t)\right)$. The threshold used in the stopping rule is 
\begin{equation}
\begin{aligned}
    &\beta(\delta, t) = \frac{d}{2}\sum_{m\in[M]}\log(\mathcal{I}_m(\hat{\btheta}_m(t))N_m(t)) +\log\left(\frac{1}{\delta}\right) \\
    &- d\sum_{m\in [M]} W_m^{\epsilon}\left(t\right) + Md\log\left( \frac{\sqrt[d]{|\Theta|}}{\sqrt{2\pi}} \right) + \sum_{m\in[M]}N_m(t)\\
    &\sum_{i\in[d]}\max\left\{ D_{\text{KL}}\left( \hat{\btheta}(t), \hat{\btheta}(t)-\epsilon\be_i \right), D_{\text{KL}}\left( \hat{\btheta}(t), \hat{\btheta}(t)+\epsilon\be_i \right) \right\},
\end{aligned}
\end{equation}
where $W_m^{\epsilon}\left(t\right) = \mathbb{E}_{\hat{\btheta}_m(t)}\left[\log\left(1-2Q\left(\epsilon\sqrt{\lambda_{\text{max}}(V_m(t))}\right)\right)\right]$, with $V_m(t)=-\sum_{m\in[M]}\nabla^2_{\brho}\log P_m(\bX\mid \brho)\big|_{\brho=\hat{\btheta}_m(t)}$.
The choice of this threshold is to ensure that the EBC is a $\delta$-PC.

\noindent \textbf{Declaration rule: }
Upon stopping, EBC outputs the estimated cluster index vector by applying SLINK to $\hat{\btheta}(t)$ (Line 16).

 Theorem \ref{theorem: deltaPC} presents that EBC is $\delta-$PC, and Theorem \ref{theorem: AsymptoticOptimal} presents that EBC is asymptotically optimal.
\begin{theorem} \label{theorem: deltaPC}
    EBC satisfies $\mathbb{P}_{\boldsymbol{\theta}}^{\text{EBC}}\left[ \tau_\delta(\text{EBC}) < \infty \right] = 1$ and $\mathbb{P}_{\boldsymbol{\theta}}^{\text{EBC}}\left[ \mathcal{C}\left( \hat{\boldsymbol{\theta}}\left( \tau_\delta(\text{EBC}) \right) \right) \nsim \mathcal{C}\left( \boldsymbol{\theta} \right) \right] \leq \delta$.
\end{theorem}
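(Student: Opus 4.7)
The theorem has two parts: almost-sure finite stopping, and the $\delta$ error bound. For the former, I would show that $Z(t)$ grows linearly in $t$ at rate $1/T^*(\btheta)$, while $\beta(\delta,t)$ grows at a strictly smaller linear rate (tuned through the $\epsilon$ parameter) plus $O(\log t)$ terms. For the latter, I would use a generalized likelihood ratio (GLR) plus covering/Laplace argument matched exactly to the terms appearing in $\beta(\delta,t)$.

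\textbf{Finite stopping.} By Lemma \ref{lemma: parameter converge} and Lemma \ref{prop: armpullpropconverge}, $\hat{\btheta}(t)\to\btheta$ and $\bN(t)/t\to\bw^{*}\in\mathcal{S}^{*}(\btheta)$ along subsequences of finite expected time. Continuity of $\psi$ (Lemma \ref{lemma: psicont}) then gives $Z(t)/t = \psi(\bN(t)/t,\hat{\btheta}(t)) \to \psi(\bw^{*},\btheta) = 1/T^{*}(\btheta) > 0$. Inspecting $\beta(\delta,t)$, the $\tfrac{d}{2}\sum_m\log(\mathcal{I}_m(\hat{\btheta}_m(t))N_m(t))$, $\log(1/\delta)$, $Md\log(\sqrt[d]{|\Theta|}/\sqrt{2\pi})$ and $W_m^{\epsilon}(t)$ terms are all $O(\log t)$, while the last term scales like $\sum_m N_m(t)\cdot O(\epsilon^2)$ by a second-order Taylor expansion of $D_{\text{KL}}^{(m)}$ around $\hat{\btheta}(t)$ (valid by Assumption 3, which bounds the Fisher information). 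Picking $\epsilon$ small enough that this linear coefficient is strictly below $1/T^{*}(\btheta)$, the stopping condition $Z(t)\ge\beta(\delta,t)$ fires in finite time almost surely.

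\textbf{Error bound.} On the error event $\{\mathcal{C}(\hat{\btheta}(\tau_\delta))\nsim\mathcal{C}(\btheta)\}$, the true parameter $\btheta$ is a feasible point in the denominator of the GLR defining $Z(\tau_\delta)$, hence
$Z(\tau_\delta) \le \sum_{m\in[M]}\sum_{s\le\tau_\delta:A_s=m}\log\!\left(\tfrac{P_m(\bX_s^{(m)}\mid\hat{\btheta}_m(\tau_\delta))}{P_m(\bX_s^{(m)}\mid\btheta_m)}\right)$.
It then suffices to prove $\mathbb{P}_{\btheta}\bigl[\exists t: Z(t)\ge\beta(\delta,t),\ \mathcal{C}(\hat{\btheta}(t))\nsim\mathcal{C}(\btheta)\bigr]\le\delta$. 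The plan is to tile $\Theta$ by an $\epsilon$-grid in each of the $d$ coordinate directions, write the MLE-based log-likelihood ratio as a local quadratic form centered at the nearest grid point (with curvature bounded below by $\sigma^2$ per Assumption 4 and above by the observed Fisher information $V_m(t)$) plus an $\epsilon$-discretization residue, and apply a Chernoff/Laplace bound on each cell. The $\tfrac{d}{2}\log(\mathcal{I}_m(\hat{\btheta}_m(t))N_m(t))$ term accounts for the Laplace volume factor of a $d$-dimensional box of side $\propto 1/\sqrt{N_m\mathcal{I}_m}$, the $W_m^{\epsilon}(t)$ term controls the probability the MLE stays inside that box via a Gaussian $Q$-function tail, the $Md\log(\sqrt[d]{|\Theta|}/\sqrt{2\pi})$ term captures the cardinality of the grid over compact $\Theta$, and the KL-perturbation sum absorbs the coordinate-wise bias of $\pm\epsilon\be_i$ around $\hat{\btheta}(t)$. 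Summing gives the $\delta$ bound after a standard time-uniform martingale argument.

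\textbf{Main obstacle.} The technically hard step is the vector-parameter concentration bound: unlike the scalar case of \cite{mukherjee2024efficient}, the Laplace approximation of the MLE's log-likelihood must be made tight \emph{uniformly} across time and the $M$ arms, with the discretization error controlled simultaneously along all $d$ coordinate directions. Assumption 2 (bounded third score moment) gives the quadratic-plus-remainder expansion of the log-likelihood, Assumption 4 pins the curvature from below so the Gaussian approximation to the MLE is valid, and Assumption 3 together with the forced-exploration guarantee $N_m(t)\to\infty$ ensures the remainder terms vanish at the rate prescribed by the $O(\log t)$ pieces of $\beta(\delta,t)$. Fitting all of these pieces so that their aggregate exactly matches the threshold is the delicate bookkeeping at the heart of the proof.
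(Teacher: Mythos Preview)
Your finite-stopping argument is sound and matches the paper's route (in fact the paper establishes $\tau_\delta<\infty$ a.s.\ as a byproduct of the sample-complexity bound in Theorem~\ref{theorem: AsymptoticOptimal}, using exactly the linear growth of $Z(t)$ versus the sub-linear growth of the non-$\eta(\epsilon)$ parts of $\beta(\delta,t)$).

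For the error bound, however, your proposed mechanism is off. You describe a \emph{discrete covering} argument: tile $\Theta$ with an $\epsilon$-grid, do a Laplace/Chernoff bound per cell, and union-bound. That is not how the threshold $\beta(\delta,t)$ arises, and a covering approach would not reproduce its exact terms. The paper instead uses a \emph{continuous mixture} (Laplace method): for each arm $m$ it forms the mixture likelihood ratio
\[
M_m(t)\;=\;\int_{\Theta}\prod_{s\le t:\,A_s=m}\frac{P_m(\bX_s^{(m)}\mid\brho)}{P_m(\bX_s^{(m)}\mid\btheta_m)}\,d\eta(\brho),
\]
with $\eta$ the \emph{uniform} measure on $\Theta$, and shows that $M_m(t)$ is a nonnegative martingale with $\mathbb{E}[M_m(1)]=1$. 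A Laplace lower bound on this integral, obtained by restricting to an $\epsilon$-hypercube around the MLE $\hat{\btheta}_m(t)$ and using the quadratic approximation there, yields $N_m(t)D_{\text{KL}}^{(m)}(\hat{\btheta}_m(t),\btheta_m)\le \log M_m(t)+\text{(the $m$-th share of $\beta(\delta,t)-\log(1/\delta)$)}$. Summing over $m$, using that on the error event $\btheta\in\text{Alt}(\hat{\btheta}(t))$ so $Z(t)\le\sum_m N_m(t)D_{\text{KL}}^{(m)}(\hat{\btheta}_m(t),\btheta_m)$, and applying Ville's inequality to the product martingale $\prod_m M_m(t)$ gives the $\delta$ bound directly.

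Consequently your attribution of the threshold terms is inverted: the $Md\log(\sqrt[d]{|\Theta|}/\sqrt{2\pi})$ term is the \emph{volume} $|\Theta|$ from the uniform prior normalization (together with the Gaussian constant), not a grid cardinality; the $W_m^{\epsilon}$ term is the Gaussian mass \emph{lost} when the Laplace integral is truncated to the $\epsilon$-hypercube (hence the $1-2Q(\cdot)$), not a containment probability for the MLE; and the KL-perturbation sum bounds the Laplace residual $R_t^{\min}$ over that hypercube via a convex-combination argument on its corners (this is where Assumption~1, concavity of the log-likelihood, enters). Assumption~2 is not used here at all --- it drives Lemma~\ref{lemma: parameter converge}, which is only needed for finite stopping and sample complexity, not for the $\delta$-PC guarantee.
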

\begin{theorem} \label{theorem: AsymptoticOptimal}
    For $\epsilon>0,$ EBC algorithm satisfies $\displaystyle \limsup_{\delta \rightarrow 0} \frac{\mathbb{E}_{\boldsymbol{\theta}}^{\text{EBC}}[\tau_\delta(\text{EBC})]}{\log\left( \frac{1}{\delta} \right)} \leq \frac{1}{\psi(\bw^{*}, \btheta)-\eta(\epsilon)},$ where, $\displaystyle \eta(\epsilon)=\max_{\brho\in\Theta}\max_{m\in[M]}\sum_{i\in[d]}\max\left\{ D_{\text{KL}}(\brho, \brho-\epsilon\be_i), D_{\text{KL}}(\brho, \brho+\epsilon\be_i) \right\}.$  
\end{theorem}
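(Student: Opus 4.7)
The plan is to combine the convergence results already established (Lemmas~\ref{lemma: parameter converge} and \ref{prop: armpullpropconverge}) with the explicit form of the test statistic $Z(t)=t\psi(\boldsymbol{N}(t)/t,\hat{\btheta}(t))$ and a careful asymptotic bookkeeping of the threshold $\beta(\delta,t)$. The idea is to show that on a high-probability ``good event'', $Z(t)$ grows essentially like $t\psi(\bw^{*},\btheta)$ while $\beta(\delta,t)$ grows like $\log(1/\delta)+t\,\eta(\epsilon)+o(t)$, so the stopping time is at most $\log(1/\delta)/(\psi(\bw^{*},\btheta)-\eta(\epsilon))$ plus lower-order terms.

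Step 1: Fix $\epsilon_1>0$. By Lemmas~\ref{lemma: parameter converge} and \ref{prop: armpullpropconverge}, there is a stochastic time $N^{S}_{\epsilon_1}$ with $\mathbb{E}[N^{S}_{\epsilon_1}]<\infty$ such that for every $t>N^{S}_{\epsilon_1}$ we have simultaneously $\|\hat{\btheta}_m(t)-\btheta_m\|<\epsilon_1$ and $|N_m(t)/t-w^{*}_m|<\epsilon_1$ for some $\bw^{*}\in\mathcal{S}^{*}(\btheta)$. Lemma~\ref{lemma: psicont} then gives $\psi(\boldsymbol{N}(t)/t,\hat{\btheta}(t))\ge \psi(\bw^{*},\btheta)-\zeta(\epsilon_1)$ where $\zeta(\epsilon_1)\downarrow 0$ as $\epsilon_1\downarrow 0$ by uniform continuity of $\psi$ on the compact set $\mathcal{P}_M\times\Theta^M$.

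Step 2: Expand the threshold as $\beta(\delta,t)=\log(1/\delta)+R(t)+\sum_{m\in[M]}N_m(t)H_m(\hat{\btheta}(t),\epsilon)$, where $R(t)$ collects the $\tfrac{d}{2}\sum_m\log(\mathcal{I}_m(\hat{\btheta}_m(t))N_m(t))$, $-d\sum_m W_m^{\epsilon}(t)$, and the constant $Md\log(\sqrt[d]{|\Theta|}/\sqrt{2\pi})$. On the good event, $\hat{\btheta}_m(t)$ lives in a compact neighborhood of $\btheta_m$, so $\mathcal{I}_m(\hat{\btheta}_m(t))$ and $W_m^{\epsilon}(t)$ are bounded, hence $R(t)=O(\log t)=o(t)$. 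For the KL term, bound it by $\eta(\epsilon)$ uniformly: $\sum_i\max\{D_{\mathrm{KL}}^{(m)}(\hat{\btheta}_m(t),\hat{\btheta}_m(t)\pm\epsilon\be_i)\}\le \eta(\epsilon)$ by definition, so $\sum_m N_m(t)H_m(\hat{\btheta}(t),\epsilon)\le t\,\eta(\epsilon)$. Therefore $\beta(\delta,t)\le \log(1/\delta)+t\,\eta(\epsilon)+o(t)$ on the good event.

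Step 3: Let $\tau_\delta$ be the EBC stopping time. On the good event, for $t>N^{S}_{\epsilon_1}$ the stopping inequality $Z(t)\ge\beta(\delta,t)$ is implied by
\begin{equation}
t\bigl(\psi(\bw^{*},\btheta)-\zeta(\epsilon_1)-\eta(\epsilon)\bigr)\ge \log(1/\delta)+o(t).
\end{equation}
Solving for $t$ shows that, for every $\epsilon_1$ small enough to make the bracket positive, there is a deterministic function $T(\delta,\epsilon_1,\epsilon)$ with $T(\delta,\epsilon_1,\epsilon)/\log(1/\delta)\to 1/(\psi(\bw^{*},\btheta)-\zeta(\epsilon_1)-\eta(\epsilon))$ as $\delta\to 0$, such that $\tau_\delta\le\max\{N^{S}_{\epsilon_1},T(\delta,\epsilon_1,\epsilon)\}$.

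Step 4: Take expectations and use $\mathbb{E}[N^{S}_{\epsilon_1}]<\infty$. Dividing by $\log(1/\delta)$ and sending $\delta\to 0$ yields $\limsup_\delta \mathbb{E}_{\btheta}^{\text{EBC}}[\tau_\delta]/\log(1/\delta)\le 1/(\psi(\bw^{*},\btheta)-\zeta(\epsilon_1)-\eta(\epsilon))$. Finally, let $\epsilon_1\downarrow 0$ so that $\zeta(\epsilon_1)\downarrow 0$, producing the claimed bound. The main obstacle I expect is Step 2: making sure that the fluctuating, $\hat{\btheta}$-dependent pieces $\tfrac{d}{2}\log(\mathcal{I}_m(\hat{\btheta}_m(t))N_m(t))$ and $W_m^{\epsilon}(t)$ are uniformly $o(t)$, and that the sum $\sum_m N_m(t)H_m(\hat{\btheta}(t),\epsilon)$ is correctly upper bounded by $t\,\eta(\epsilon)$ via the worst-case definition of $\eta(\epsilon)$; this is where Assumption~3 (finiteness of $\mathcal{I}_m$) and the compactness of $\Theta$ are essential.
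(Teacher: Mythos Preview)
Your proposal is correct and follows essentially the same route as the paper's proof: use Lemmas~\ref{lemma: psicont}, \ref{lemma: parameter converge} and \ref{prop: armpullpropconverge} to obtain $\psi(\bN(t)/t,\hat{\btheta}(t))\ge\psi(\bw^{*},\btheta)-\epsilon_2$ after a finite-mean random time, upper bound $\beta(\delta,t)\le\log(1/\delta)+t\,\eta(\epsilon)+O(\log t)$, invert (the paper does this via the Lambert-$W$ type Lemma~\ref{Lemma: infbound}), take expectations, and send $\delta\to 0$ then $\epsilon_2\to 0$. The only place the paper is more explicit is the obstacle you flagged: the control of $-dW_m^{\epsilon}(t)$ comes from Assumption~4 ($\lambda_{\min}\ge\sigma^2$) combined with forced exploration, giving $\lambda_{\max}(V_m(t))\ge N_m(t)\sigma^2\ge(\sqrt{t/M}-1)\sigma^2$ and hence $-\log(1-2Q(\epsilon\sqrt{\lambda_{\max}(V_m(t))}))\le\log 2$ for large $t$ --- compactness of $\Theta$ alone is not the right reason here, since $V_m(t)$ depends on the samples, not just on $\hat{\btheta}_m(t)$.
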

To prove that EBC is $\delta$-PC (Theorem \ref{theorem: deltaPC}), we first derive a martingale-based upper bound on the KL divergence between the estimated $\hat{\btheta}_m(t)$ and true parameter $\hat{\btheta}_m(t)$, and then use this bound, along with Assumptions 1, 3, and 4, and Ville’s inequality to establish the result. To prove the sample complexity result (Theorem \ref{theorem: AsymptoticOptimal}), first we observe the fact that $Z(t)$ grows linearly and the threshold grows logarithmically, and then we apply the Lambert-W based bound in \cite{jedra2020optimal}. From Theorem~\ref{theorem: AsymptoticOptimal}, we observe that by setting $\epsilon$ close to $0$, the asymptotic slope of EBC approaches the lower-bound slope (Theorem~\ref{theorem: lowerbound}); hence, EBC is asymptotically optimal.


\begin{figure}
    \centering
        \begin{tikzpicture}
        \begin{axis}[
            xlabel = {$\log\left(\frac{1}{\delta}\right)$}, 
            ylabel = {\small{Expected Stopping time}},
            xmin =0, xmax=205,
            ymin=800, ymax=14000,
            xtick={0, 40, 80, 120, 160, 200},
            xticklabels={0, 40, 80, 120, 160, 200},
            grid=major,
            ytick = {1000, 2000, 3000, 4000, 5000, 6000, 7000, 8000, 9000, 10000, 11000, 12000, 13000, 14000},
            yticklabels = {0.1, 0.2, 0.3, 0.4, 0.5, 0.6, 0.7, 0.8, 0.9, 1.0, 1.1, 1.2, 1.3, 1.4},
            legend style={at={(0, 1)},opacity=0.8, legend cell align=left,anchor=north west, nodes={scale=0.70}},
            width = 0.75\columnwidth, 
            height = 0.7\columnwidth, 
            tick label style={font=\tiny},
            enlargelimits=false,
        ]

        \addplot[blue, thick, mark=o, mark size=2pt] coordinates {
            (1.0,   1485.375)
            (23.11, 2746.84)
            (45.22, 3899.91)
            (67.33, 5005.465)
            (89.44, 6123.06)
            (111.56,7242.93)
            (133.67,8312.55)
            (155.78,9401.47)
            (177.89,10498.74)
            (200.0, 11586.845)
        };

        \addplot[red, thick, mark=square, mark size=2pt] coordinates {
            (1.0,   1216.68)
            (23.11, 2280.63)
            (45.22, 3230.71)
            (67.33, 4207.62)
            (89.44, 5085.44)
            (111.56,6008.60)
            (133.67,6894.17)
            (155.78,7792.87)
            (177.89,8685.21)
            (200.0, 9568.00)
        };

        \addplot[green!60!black, thick, mark=x] coordinates {
            (1.0,   2784.86)
            (23.11, 3324.36)
            (45.22, 3878.84)
            (67.33, 4412.2)
            (89.44, 4955.39)
            (111.56,5497.23)
            (133.67,6020.11)
            (155.78,6559.17)
            (177.89,7091.86)
            (200.0, 7659.92)
        };

 \addplot[orange, thick, mark=+] coordinates {
            (1.0,   2606.24333333)
            (23.11, 3162.17666667)
            (45.22, 3710.61333333)
            (67.33, 4247.86)
            (89.44, 4796.88333333)
            (111.56,5331.13)
            (133.67,5851.50333333)
            (155.78,6389.19)
            (177.89,6922.37)
            (200.0, 7442.31333333)
        };
        
        \addplot[purple, thick, mark=x] coordinates {
            (1.0,   2662.81)
            (23.11, 3075.77)
            (45.22, 3491.28)
            (67.33, 3938.72)
            (89.44, 4345.54)
            (111.56,4777.47)
            (133.67,5190.26)
            (155.78,5630.82)
            (177.89,6041.88)
            (200.0, 6469.27)
        };

        \addplot[cyan!70!black, thick, mark=+] coordinates {
            (1.0,   2513.18666667)
            (23.11, 2938.50666667)
            (45.22, 3356.3)
            (67.33, 3777.31333333)
            (89.44, 4195.88333333)
            (111.56,4624.47)
            (133.67,5052.00333333)
            (155.78,5491.36)
            (177.89,5908.91333333)
            (200.0, 6335.11333333)
        };

        \addplot[brown, thick, mark=x] coordinates {
            (1.0,   1622.98)
            (23.11, 2070.04)
            (45.22, 2456.98)
            (67.33, 2887.49)
            (89.44, 3295.23)
            (111.56,3710.01)
            (133.67,4137.44)
            (155.78,4575.30)
            (177.89,4984.95)
            (200.0, 5427.61)
        };
        
 \addplot[magenta, thick, mark=+] coordinates {
            (1.0,   1472.53333333)
            (23.11, 1897.96666667)
            (45.22, 2321.00333333)
            (67.33, 2747.00666667)
            (89.44, 3170.80333333)
            (111.56,3597.52)
            (133.67,4029.41333333)
            (155.78,4444.64333333)
            (177.89,4864.74666667)
            (200.0, 5292.83)
        };

        \legend{LUCBBOC,
        ATBOC,
        EBC-$|\Theta|=1000^2; \epsilon=0.1$,
        EBC-H-$|\Theta|=1000^2; \epsilon=0.1$,
        EBC-$|\Theta|=1000^2;\epsilon=0.01$,
        EBC-H-$|\Theta|=1000^2;\epsilon=0.01$,
        EBC-$|\Theta|=10^2; \epsilon=0.01$,
        EBC-H-$|\Theta|=10^2; \epsilon=0.01$}
        \end{axis}
        \end{tikzpicture}
        \vspace{-3mm}
        \caption{Synthetic Dataset 1}
        \label{fig:gaussian}
        \vspace{-3mm}
\end{figure}
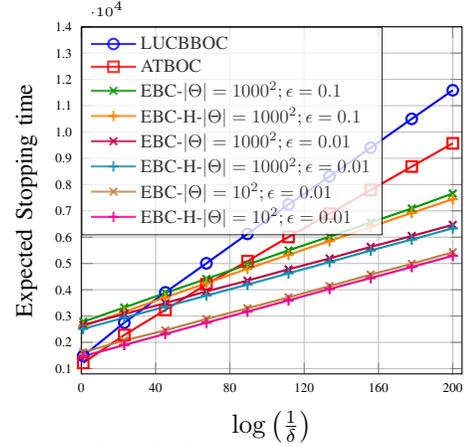
\begin{figure*}[htb]
    \centering
    \begin{minipage}[t]{0.30\textwidth}
        \centering
        \begin{tikzpicture}
\begin{axis}
[
    xlabel = {$\log\left(\frac{1}{\delta}\right)$},
    ylabel = {\small{Expected Stopping Time}},
    xmin = 0,
    xmax = 210,
    ymin = 7000,
    ymax = 35000,
    xtick = {1, 25, 50, 75, 100, 125, 150, 175, 200},
    grid = major,
    legend style={
        at={(0,1)},
        legend cell align=left,
        anchor=north west,
        nodes={scale=0.7}
    },
    width = 1.0\textwidth,
    height = 1.0\textwidth,
    tick label style = {font=\tiny},
    enlargelimits=false,
]


\addplot[green!60!black, thick, mark=x] coordinates {
    (1.0,   9767.15384615)
    (23.11, 12072.47435897)
    (45.22, 14125.79487179)
    (67.33, 16484.83333333)
    (89.44, 18642.03846154)
    (111.56,20851.93589744)
    (133.67,23034.14102564)
    (155.78,25287.06410256)
    (177.89,27375.02564103)
    (200.0, 29549.83333333)
};

\addplot[orange, thick, mark=+] coordinates {
    (1.0,   8715.48333333)
    (23.11, 10952.89666667)
    (45.22, 13136.85333333)
    (67.33, 15322.96333333)
    (89.44, 17537.52)
    (111.56,19722.05333333)
    (133.67,21921.79)
    (155.78,24079.16333333)
    (177.89,26268.14)
    (200.0, 28466.79666667)
};

\addplot[brown, thick, mark=x] coordinates {
    (1.0,   7821.23529412)
    (23.11, 10133.94117647)
    (45.22, 12381.61764706)
    (67.33, 14693.91176471)
    (89.44, 16906.5)
    (111.56,19129.85294118)
    (133.67,21206.5)
    (155.78,23363.97058824)
    (177.89,25535.29411765)
    (200.0, 27753.38235294)
};

\addplot[magenta, thick, mark=+] coordinates {
    (1.0,   6860.8)
    (23.11, 9017.32)
    (45.22, 11207.90666667)
    (67.33, 13360.94666667)
    (89.44, 15577.44)
    (111.56,17717.53333333)
    (133.67,19965.56)
    (155.78,22170.25333333)
    (177.89,24325.28)
    (200.0, 26523.01333333)
};

\legend{ EBC-$|\Theta|=1000$,
        EBC-H-$|\Theta|=1000$, 
        EBC-$|\Theta|=50$,
        EBC-H-$|\Theta|=50$}

\end{axis}
\end{tikzpicture}
        \vspace{-3mm}
        \caption{Synthetic Dataset 2}
        \label{fig:bernoulli}
    \end{minipage}\hfill
    \begin{minipage}[t]{0.30\linewidth}
    \centering
        \begin{tikzpicture}
        \begin{axis}[
    xlabel = {$\log\left(\frac{1}{P_e}\right)$}, 
    ylabel = {Expected Stopping time},
    xmin =0.5, xmax=2.3,
    ymin=300, ymax=2100,
    grid=major,
    legend style={at={(0,1)}, anchor=north west, nodes={scale=0.7}},
    width=\textwidth,
    height=\textwidth,
    tick label style={font=\tiny}
]


\addplot[mark=o,blue,thick] coordinates{
    (0.692, 501)
    (0.885, 701)
    (1.118, 1001)
    (1.372, 1501)
    (1.680, 2001)
};

\addplot[mark=square,red,thick] coordinates{
    (1.091, 673.105)
    (1.239, 907.458)
    (1.420, 1215.395)
    (1.621, 1546.592)
    (1.876, 1988.345)
};

\addplot[mark=x,green!60!black,thick] coordinates{
    (0.87227385, 461.088)
    (1.0441241, 617.6)
    (1.30933332, 841.814)
    (1.46101791, 1027.296)
    (1.59948758, 1237.992)
    (1.73727128, 1437.054)
    ( 2.08747371, 1886.646)
};

 \addplot[mark=+,orange,thick] coordinates{
    (0.89248009, 422.60040161)
    (1.01210303, 493.53815261)
    (1.35078769, 878.38353414)
    (1.52846885, 1048.03413655)
    (1.79175945, 1397.79518072)
    (2.05171696, 1709.8313253)
};
\legend{FSS, RR, EBC,  EBC-H}

\end{axis}
        \end{tikzpicture}
        \vspace{-3mm}
        \caption{Real-world Dataset 1.}
        \label{fig:taxi}
    \end{minipage} \hfill
    \begin{minipage}[t]{0.30\textwidth}
        \centering
        \begin{tikzpicture}
        \begin{axis}[
    xlabel = {$\log\left(\frac{1}{P_e}\right)$}, 
    ylabel = {Expected Stopping time},
    xmin =0.2, xmax=5,
    ymin=5, ymax=75,
    xtick = {1, 2, 3, 4, 5},
    grid=major,
    legend style={at={(0,1)}, anchor=north west, nodes={scale=0.7}},
    width=\textwidth,
    height=\textwidth,
    tick label style={font=\tiny}
]


\addplot[mark=o,blue,thick] coordinates{
    (0.65200524, 10)
    (1.00239343, 15)
    (1.24132859, 20)
    (1.55589715, 25)
    (1.89047544, 30)
    (2.19822508, 35)
    (2.95651156, 50)
    (3.57555077, 60)
    (4.34280592, 70)
};

\addplot[mark=square,red,thick] coordinates{
    (0.93394567, 13.452)
    (1.73727128, 23.948)
    (2.52572864, 34.77)
    (4.26869795, 56.42)
};

\addplot[mark=x,green!60!black,thick] coordinates{
    (0.92130327, 13.832)
    (1.74869998, 23.584)
    (2.55104645, 32.062)
    (3.17008566, 38.844)
    (4.60517019, 50.432)
};

\addplot[mark=+,orange,thick] coordinates{
    (0.88188931, 11.711)
    (1.63475572, 18.002)
    (2.43041846, 23.684)
    (3.19418321, 28.501)
    (4.13516656, 34.861)
};
\legend{FSS, RR, EBC, EBC-H}

\end{axis}
        \end{tikzpicture}
        \vspace{-3mm}
        \caption{Real-world Dataset 2.}
        \label{fig:movielens}
    \end{minipage}
\end{figure*}


\begin{remark} 
    Unlike in our work, the optimal arm pull proportion set $\mathcal{S}^{*}(\btheta)$ is proved to be a singleton in \cite{mukherjee2024efficient} for the BAI problem. 
    Therefore, although the BC problem differs fundamentally from the BAI problem, the theory developed in our work can be used to extend \cite{mukherjee2024efficient} to the multidimensional version of the efficient BAI, where there is no evidence to say $\mathcal{S}^{*}(\btheta)$ is a singleton \cite{jedra2020optimal}. 
\end{remark}
\section{Efficient Bandit Clustering - Heuristic}
We propose a heuristic version of EBC called Efficient Bandit Clustering - Heuristic (EBC-H), which further simplifies the sampling rule, and arm selection is based on the quantities computed in the stopping rule. In EBC-H, we replace Lines 7,11, 12, and 13 in Algorithm \ref{Algo:EBC} with the following sampling rule.
\begin{equation} \label{eq: EBC-Hsamp}
    A_{t+1} = \argmax_{m \in [M]} D_{\text{KL}}^{(m)}\left( \hat{\btheta}_m(t), \blambda_m^{*}\right),
\end{equation}
where, $\displaystyle \blambda^{*}\in \arginf_{\blambda\in\text{Alt}\left(\hat{\btheta}(t)\right)} \sum_{m=1}^M\frac{N_m(t)}{t}D_{\text{KL}}^{(m)}\left( \hat{\btheta}_m(t), \blambda_m\right).$ Note that the quantities $D_{\text{KL}}^{(m)}\left( \hat{\btheta}_m(t), \blambda_m^{*}\right)$, for all $m\in[M]$ will be computed as a part of $Z(t)$ computation in the stopping rule. The idea behind this sampling rule is to compute the gradient of the infimum function $\psi(\bw, \hat{\btheta}(t))$ at $\bw = \frac{\bN(t)}{t}$, and select the arm along which the component of the gradient is maximum as in \eqref{eq: EBC-Hsamp}. 
EBC-H is a $\delta$-PC algorithm as it uses the same stopping rule as in EBC (Algorithm \ref{Algo:EBC}). Sample complexity guarantee of EBC-H is not analyzed in this work. However, from simulation results, we observe that EBC-H follows the asymptotic sample complexity slope guarantees of EBC (Table \ref{Table: slope}) for the examples considered in section \ref{sec: simulation}. Moreover, through simulations, we observe that EBC-H shows better performance than EBC in terms of sample complexity and runtime (Table \ref{Table: complexity}).

\section{Simulation Results} \label{sec: simulation}
We compare the asymptotic performance of our proposed algorithms, EBC and EBC-H, with ATBOC and LUCBBOC \cite{chandran2025online}. To highlight the role of the sampling and stopping rules in the proposed algorithms, we also compare them with the Round Robin algorithm (RR), which uses round-robin sampling, and the Fixed Sample Size algorithm (FSS).
Recall that $\delta$ is an input to the algorithm and serves as an upper bound on the empirical error probability $P_e$. In figures comparing asymptotic performance, we plot results versus $\delta$. In figures comparing real-world performance, we plot results versus $P_e$.

\noindent \textbf{Synthetic Dataset 1:}
Consider the clustering problem in Fig.~\ref{fig: SLINKdescription}, with arms following multivariate Gaussian distributions.
In Fig.~\ref{fig:gaussian}, we observe that EBC and EBC-H show better performance in the asymptotic regime due to their lower slope. We observe that setting a lesser volume of compact space from $\Theta=[-500, 500]^2$ to $[-5, 5]^2$ improves the performance. Table \ref{Table: slope} shows the slope comparison of various algorithms.

\begin{table}[h!]
\centering
\begin{tabular}{|c|c|c|}
\hline
  & Numerical & Theoretical \\ \hline
  Lower bound & - & 20 \\ \hline
  EBC-H-$\epsilon=0.01$ & 19.29 & - \\ \hline
  EBC-$\epsilon=0.01$ & 19.35 & 20 \\ \hline 
  EBC-H-$\epsilon=0.1$ & 23.99 & - \\ \hline
  EBC-$\epsilon=0.1$ & 24.65 & 25 \\ \hline
  ATBOC & 40.31 & 40 \\ \hline
  LUCBBOC & 49.38 & - \\ \hline
\end{tabular}
\caption{Sample complexity slopes for Figure \ref{fig:gaussian}.}
\label{Table: slope}
\end{table}
\begin{table}[h!]
\centering
\begin{tabular}{|c|c|c|c|c|}
\hline
Number of Arms ($M$) & 4 & 8 & 14 & 20 \\ \hline
EBC-H & 0.44 & 0.64 & 1.02 & 1.90 \\ \hline
EBC & 0.62 & 1.11 & 1.81 & 3.52 \\ \hline
LUCBBOC & 0.69 & 1.18 & 2.05 & 3.65 \\ \hline
ATBOC & 21.1 & 91.2 & 348 & 1133 \\ \hline
\end{tabular}
\caption{Average per-sample run time (in ms).}
\label{Table: complexity}
\end{table}


\noindent \textbf{Synthetic Dataset 2:}
Consider $M=6$ arms following exponential distributions with parameters $1, 1.1, 9, 9.1, 20, 20.1$. Figure \ref{fig:bernoulli} shows the performance of EBC and EBC-H on setting $\epsilon=0.001$ and different compact spaces $\Theta=[0.1, 1000]$ and $\Theta=[0.1, 50]$. The slope of EBC ($98.6$) and EBC-H ($98.7$) matches the theoretical lower bound slope ($98$).

\noindent \textbf{Real-world Dataset 1:}
We consider the New York City TLC Trip Record Data of Manhattan City for January 2025 \cite{nyc_tlc_trip_data}. We have $6$ regions in the city of Manhattan. We observe the inter-booking times of taxis in each of these regions. The objective is to group these six regions into three clusters: Quiet, Moderate, and Busy. 
Fig.~\ref{fig:taxi} shows the performance of various algorithms. 
We observe that EBC and EBC-H exhibit the best performance, followed by RR and then FSS.

\noindent \textbf{Real-world Dataset 2:}
We consider the MovieLens dataset, which comprises user ratings for various movies and genres \cite{cantador2011second}. We consider $M=6$ users and group them into $K=4$ clusters. For clustering, we observe users’ ratings for the Comedy and Drama genres; hence, at each round, the algorithm observes $d=2$ dimensional samples. Fig.~\ref{fig:movielens} shows the performance of the various algorithms. We observe that EBC outperforms RR and FSS, and the heuristic variant, EBC-H, further improves upon the performance of EBC.

\noindent \textbf{Computational Complexity comparison}\\
We fix the number of arms per cluster to 2, so that $M = 2K$. The arms follow a $d=5$ dimensional multivariate Gaussian distribution with identity covariance, and the means along each dimension are given by $[0, 0.5, 10, 10.5, \dots, 10(K-1), 10(K-1)+0.5]$. Table \ref{Table: complexity} shows the per-sample run time of the various algorithms. We observe that EBC and EBC-H are more efficient than ATBOC, with computational times comparable to those of the suboptimal alternative, LUCBBOC. Moreover, EBC-H is slightly faster than EBC.
\section{Conclusion}
We proposed efficient BC algorithms EBC and its heuristic variant EBC-H, both of which are $\delta$-PC, for clustering the arms that follow any vector parametric distributions satisfying mild regularity conditions. 
EBC is proven to be asymptotically optimal, and its asymptotic optimality is validated through simulations on synthetic datasets.
Simulation results indicate that EBC-H matches the asymptotic lower bound and exhibits slightly better performance than EBC.
Simulation results on both synthetic and real-world datasets demonstrate that both EBC and EBC-H outperform existing methods in the literature in terms of both sample complexity and runtime. 

\bibliographystyle{IEEEtran}
\bibliography{ref}

@article{tartakovsky2023quick,
  title={Quick and complete convergence in the law of large numbers with applications to statistics},
  author={Tartakovsky, Alexander G},
  journal={Mathematics},
  volume={11},
  number={12},
  pages={2687},
  year={2023},
  publisher={MDPI}
}

@article{jedra2020optimal,
  title={Optimal best-arm identification in linear bandits},
  author={Jedra, Yassir and Proutiere, Alexandre},
  journal={Advances in Neural Information Processing Systems},
  volume={33},
  pages={10007--10017},
  year={2020}
}

@inproceedings{chandran2025online,
  title={Online clustering with bandit information},
  author={Chandran, G Dhinesh and Reddy, Kota Srinivas and Bhashyam, Srikrishna},
  booktitle={2025 IEEE International Symposium on Information Theory (ISIT)},
  pages={1--6},
  year={2025},
  organization={IEEE}
}

@misc{nyc_tlc_trip_data,
  author       = {{New York City Taxi and Limousine Commission}},
  title        = {{TLC Trip Record Data}},
  year         = {2025},
  howpublished = {Available: \url{https://www.nyc.gov/site/tlc/about/tlc-trip-record-data.page}}
}

@article{katariya2019maxgap,
  title={Maxgap bandit: Adaptive algorithms for approximate ranking},
  author={Katariya, Sumeet and Tripathy, Ardhendu and Nowak, Robert},
  journal={Advances in Neural Information Processing Systems},
  volume={32},
  year={2019}
}

@article{kaufmann2016complexity,
  title={On the complexity of best-arm identification in multi-armed bandit models},
  author={Kaufmann, Emilie and Capp{\'e}, Olivier and Garivier, Aur{\'e}lien},
  journal={The Journal of Machine Learning Research},
  volume={17},
  number={1},
  pages={1--42},
  year={2016},
  publisher={JMLR. org}
}

@article{yang2024optimal,
  title={Optimal clustering with bandit feedback},
  author={Yang, Junwen and Zhong, Zixin and Tan, Vincent YF},
  journal={Journal of Machine Learning Research},
  volume={25},
  number={186},
  pages={1--54},
  year={2024}
}

@article{jain2010data,
  title={Data clustering: 50 years beyond K-means},
  author={Jain, Anil K},
  journal={Pattern recognition letters},
  volume={31},
  number={8},
  pages={651--666},
  year={2010},
  publisher={Elsevier}
}

@book{maccuish2010clustering,
  title={Clustering in bioinformatics and drug discovery},
  author={MacCuish, John David and MacCuish, Norah E},
  year={2010},
  publisher={CRC Press}
}

@article{wang2019k,
  title={K-medoids clustering of data sequences with composite distributions},
  author={Wang, Tiexing and Li, Qunwei and Bucci, Donald J and Liang, Yingbin and Chen, Biao and Varshney, Pramod K},
  journal={IEEE Transactions on Signal Processing},
  volume={67},
  number={8},
  pages={2093--2106},
  year={2019},
  publisher={IEEE}
}

@inproceedings{wang2020exponentially,
  title={On exponentially consistency of linkage-based hierarchical clustering algorithm using kolmogrov-smirnov distance},
  author={Wang, Tiexing and Liu, Yang and Chen, Biao},
  booktitle={ICASSP 2020-2020 IEEE International Conference on Acoustics, Speech and Signal Processing (ICASSP)},
  pages={3997--4001},
  year={2020},
  organization={IEEE}
}

@article{sreenivasan2023nonparametric,
  title={Nonparametric sequential clustering of data streams with composite distributions},
  author={Sreenivasan, Sreeram C and Bhashyam, Srikrishna},
  journal={Signal Processing},
  volume={204},
  pages={108827},
  year={2023},
  publisher={Elsevier}
}

@article{singh2025exponentially,
  title={Exponentially Consistent Nonparametric Linkage-Based Clustering of Data Sequences},
  author={Singh, Bhupender and Rajagopalan, Ananth Ram and Bhashyam, Srikrishna},
  journal={IEEE Transactions on Signal Processing},
  year={2025},
  publisher={IEEE}
}

@article{deshmukh2021sequential,
  title={Sequential controlled sensing for composite multihypothesis testing},
  author={Deshmukh, Aditya and Veeravalli, Venugopal V and Bhashyam, Srikrishna},
  journal={Sequential Analysis},
  volume={40},
  number={2},
  pages={259--289},
  year={2021},
  publisher={Taylor \& Francis}
}

@article{prabhu2022sequential,
  title={Sequential multi-hypothesis testing in multi-armed bandit problems: An approach for asymptotic optimality},
  author={Prabhu, Gayathri R and Bhashyam, Srikrishna and Gopalan, Aditya and Sundaresan, Rajesh},
  journal={IEEE Transactions on Information Theory},
  volume={68},
  number={7},
  pages={4790--4817},
  year={2022},
  publisher={IEEE}
}

@article{thuot2024active,
  title={Active clustering with bandit feedback},
  author={Thuot, Victor and Carpentier, Alexandra and Giraud, Christophe and Verzelen, Nicolas},
  journal={arXiv preprint arXiv:2406.11485},
  year={2024}
}

@article{yavas2025general,
  title={A general framework for clustering and distribution matching with bandit feedback},
  author={Yavas, Recep Can and Huang, Yuqi and Tan, Vincent YF and Scarlett, Jonathan},
  journal={IEEE Transactions on Information Theory},
  year={2025},
  publisher={IEEE}
}

@phdthesis{bertsekas1971control,
  title={Control of uncertain systems with a set-membership description of the uncertainty.},
  author={Bertsekas, Dimitri P},
  year={1971},
  school={Massachusetts Institute of Technology}
}

@inproceedings{liberty2016algorithm,
  title={An algorithm for online k-means clustering},
  author={Liberty, Edo and Sriharsha, Ram and Sviridenko, Maxim},
  booktitle={2016 Proceedings of the eighteenth workshop on algorithm engineering and experiments (ALENEX)},
  pages={81--89},
  year={2016},
  organization={SIAM}
}

@article{mukherjee2024efficient,
  title={Efficient Best Arm Identification in Stochastic Bandits: Beyond $\beta$--optimality},
  author={Mukherjee, Arpan and Tajer, Ali},
  journal={IEEE Transactions on Information Theory},
  year={2024},
  publisher={IEEE}
}

@article{chen2011projection,
  title={Projection onto a simplex},
  author={Chen, Yunmei and Ye, Xiaojing},
  journal={arXiv preprint arXiv:1101.6081},
  year={2011}
}

@inproceedings{cantador2011second,
  title={Second workshop on information heterogeneity and fusion in recommender systems (HetRec2011)},
  author={Cantador, Iv{\'a}n and Brusilovsky, Peter and Kuflik, Tsvi},
  booktitle={Proceedings of the fifth ACM conference on Recommender systems},
  pages={387--388},
  year={2011}
}

@article{rohlf198212,
  title={12 Single-link clustering algorithms},
  author={Rohlf, F James},
  journal={Handbook of statistics},
  volume={2},
  pages={267--284},
  year={1982},
  publisher={Elsevier}
}

\newpage
\onecolumn
\appendices

\begin{center}
    \Large \bfseries Technical Appendix
\end{center}

When the context $\boldsymbol{\theta}$ and $\pi$ is clear, we represent $\mathbb{E}_{\boldsymbol{\theta}}^{\pi}[\cdot]$ and $\mathbb{P}_{\boldsymbol{\theta}}^{\pi}[\cdot]$ informally as $\mathbb{E}[\cdot]$ and $\mathbb{P}[\cdot]$ respectively. Here $\|\boldsymbol{x}\|$ represents the Euclidean vector norm when $\boldsymbol{x}$ is a vector and represents the Frobenius norm when $\boldsymbol{x}$ is a matrix. For any vector $\boldsymbol{v}$, we use $(\boldsymbol{v})_i$ to indicate the $i^{th}$ coordinate of the vector $\boldsymbol{v}$. We use superscript $S$ in time $N^S\in \mathbb{N}$ to indicate that it is a stochastic time.

\section{Proof of Theorem 1} \label{appsec: lowerbound}
\begin{proof}
The proof follows similar lines to \cite{chandran2025online}. For the sake of completion, we provide the proof here.
For our problem instance $\btheta$, consider an arbitrary instance $\blambda \in \text{Alt}(\boldsymbol{\theta})$.  By applying the transportation inequality in Lemma 1 of \cite{kaufmann2016complexity}, we have 
\begin{equation} \label{eqn:transportaion_lemma}
    \sum_{m=1}^M \mathbb{E}\left[ N_m(\tau_\delta) \right]D_{KL}(\btheta_m, \blambda_m) \geq \text{KL}\left( 
    \delta, 1-\delta \right).
\end{equation}
Since \eqref{eqn:transportaion_lemma} holds for all $\blambda \in \text{Alt}(\btheta)$, we have 
\begin{equation*}
     \inf_{\blambda \in \text{Alt}(\btheta)} \sum_{m=1}^M \mathbb{E}\left[ N_m(\tau_\delta) \right] D_{\text{KL}}(\btheta_m, \blambda_m)  \geq \text{KL}\left( 
    \delta, 1-\delta \right).
\end{equation*}
Therefore,
\begin{align}
    \mathbb{E}\left[\tau_\delta \right]  \inf_{\blambda \in \text{Alt}(\boldsymbol{\theta})} \sum_{m=1}^M 
    \frac{\mathbb{E}\left[ N_m(\tau_\delta) \right]}{\mathbb{E}\left[\tau_\delta\right]}D_{\text{KL}}(\btheta_m, \blambda_m)
    \geq \text{KL}\left( \delta, 1-\delta \right).
\end{align}

    Since $\left[ \mathbb{E}\left[ N_1(\tau_\delta) \right], \dots, \mathbb{E}\left[ N_M(\tau_\delta) \right] \right]^T/\mathbb{E}\left[ \tau_\delta \right]$ forms a probability distribution in $\mathcal{P}_M$, we have
    \begin{align}
    \mathbb{E}\left[\tau_\delta \right] \sup_{\bw \in \mathcal{P}_M} \inf_{\boldsymbol{\lambda} \in \text{Alt}(\boldsymbol{\theta})} 
    \sum_{m=1}^M w_mD_{\text{KL}}(\btheta_m, \blambda_m)
    \geq \text{KL}\left( \delta, 1-\delta \right).
\end{align}

\begin{equation}
    \begin{aligned}
       \therefore \mathbb{E}\left[\tau_\delta\right]  \geq \text{KL}\left( \delta, 1-\delta \right) T^*(\boldsymbol{\theta}).
    \end{aligned}
\end{equation}

\end{proof}

To prove the main theoretical guarantees of the proposed EBC, it requires to prove the following. 
\begin{enumerate}
    \item The inner infimum problem $\psi(\cdot, \cdot)$ is continuous in its domain (Appendix \ref{appsec: lemma1}).
    \item The parameter estimate $\hat{\btheta}(t)$ converges to the true parameter (Appendix \ref{appsec: lemma2}).
    \item The empirical arm pull proportions $\frac{\bN(t)}{t}$ converges to the optimal arm pull proportions $\bw^{*}\in\mathcal{S}^{*}(\btheta)$ (Appendix \ref{appsec: lemma3}).
\end{enumerate}
First, we prove these results in the subsequent sections.

\section{Proof of Lemma 1} \label{appsec: lemma1}
\begin{proof}
Let us consider a point $\left( \bw, \btheta \right) \in \mathcal{P}_M\times \Theta^{M}$. \\
Let $\left(\bw(n), \btheta(n)\right)$ be any arbitrary sequence in $\mathcal{P}_M\times \Theta^{M}$ that converges to $\left( \bw, \btheta \right)$.\\
Now to show $\psi$ is continuous at $\left( \bw, \btheta \right)$, we need to show that sequence $\psi\left( \bw(n), \btheta(n) \right)$ converges to $\psi\left( \bw, \btheta \right)$, i.e., $\lim_{n \rightarrow \infty} \psi\left( \bw(n), \btheta(n) \right) = \psi\left( \bw, \btheta \right)$.\\
Equivalently, we need to show that $\limsup_{n \rightarrow \infty} \psi\left( \bw(n), \btheta(n) \right) \leq \psi\left( \bw, \btheta \right)$ and $\liminf_{n \rightarrow \infty} \psi\left( \bw(n), \btheta(n) \right) \geq \psi\left( \bw, \btheta \right)$.\\
We define the set of all problem instances which has the same clusters as that of $\btheta$ as $\text{CAlt}(\btheta) \coloneqq \left\{ \blambda \in \Theta^d: \mathcal{C}(\blambda)\sim \mathcal{C}(\btheta)\right\}$, where "CAlt" refers to the complement of the alternative space. It can be verified that $\text{CAlt}(\btheta)$ can be represented as $\text{CAlt}(\btheta) = \Lambda\cap\Theta^{M}$, where $\Lambda$ is an open set in $\mathbb{R}^{d\times M}$ and $\Theta$ is a compact parameter space in $\mathbb{R}^d$. 
Clearly, $\btheta \in \text{CAlt}(\btheta)$ and hence $\btheta \in \Lambda$. Since $\Lambda$ is a open set, , we can say that there exist $\epsilon > 0$ such that $\mathcal{B}\left( \btheta, \epsilon \right)\subset \Lambda$, where $\mathcal{B}\left( \btheta, \epsilon \right)\coloneqq \left\{ 
\blambda \in \mathbb{R}^{d\times M} : \|\blambda-\btheta\|<\epsilon \right\}$. \\
Since $\btheta(n)$ converges to $\btheta$,  $\exists N_0 \in \mathbb{N}$ such that for all $n\geq N_0$, we have $\btheta(n) \in \mathcal{B}\left( \btheta, \epsilon \right)$. 
Hence for all $n\geq N_0$, $\btheta(n) \in \Lambda$. Also, the sequence $\btheta(n)$ is defined in $\Theta^M$. Hence, for all $n\geq N_0$, $\btheta(n) \in \text{CAlt}(\btheta)$.\\
Hence, for all $n\geq N_0$, $\left( \bw(n), \btheta(n) \right) \subset \mathcal{P}_M\times \text{CAlt}(\btheta)$.
    Note that, in $\psi(\bw, \btheta)$, infimum is taken over the alternative space $\text{Alt}(\btheta)$. Then there  exists a $\blambda \in \text{Alt}(\btheta)$ such that
    \begin{equation} \label{eq:13}
        \sum_{m = 1}^M w_m d_{\text{KL}}\left(\btheta_m, \blambda_m\right) \leq \psi(\bw, \btheta) + \epsilon.
    \end{equation}
    We have the sequence $\left( \bw(n), \btheta(n) \right)$ converges to a point $\left( \bw, \btheta \right)$ as $n \rightarrow \infty$. Also, we assume that the KL-divergence is uniformly continuous. Hence for $\epsilon$, there exist $N_1$ such that for all $n \geq N_1$ we have the following.
    \begin{equation} \label{eq:14}
        \bw(n) \leq \bw + \epsilon\boldsymbol{1}, \text{ where } \boldsymbol{1} \text{ is the vector of all $1$'s and }
    \end{equation}
    \begin{equation} \label{eq:15}
        d_{\text{KL}}\left(\btheta_m(n), \blambda_m\right) \leq d_{\text{KL}}\left(\btheta_m, \blambda_m\right) + \epsilon \text{ for all } m \in [M].
    \end{equation}
    Note that for $n \geq N_0$, we have $\btheta(n)\in \text{CAlt}(\btheta)$ and hence $\blambda$ considered above also lies in the alternative space of $\btheta(n)$ and hence we have, 
    \begin{equation}
        \psi\left( \bw(n), \btheta(n) \right) \leq \sum_{m = 1}^M w_m(n) d_{\text{KL}}\left(\btheta_m(n), \blambda_m\right)
    \end{equation}
    Now by using equations \eqref{eq:13}, \eqref{eq:14} and \eqref{eq:15} in the above equation, we get the following.
    \begin{equation}
    \begin{split}
        \psi\left( \bw(n), \btheta(n) \right) 
        &\leq \sum_{m = 1}^M (w_m + \epsilon) \left[d_{\text{KL}}\left(\btheta_m, \blambda_m\right) + \epsilon \right] \ \ \text{from \eqref{eq:14} and \eqref{eq:15}} \\ 
        &\leq \sum_{m = 1}^M  w_m d_{\text{KL}}\left(\btheta_m, \blambda_m\right) + \epsilon\left[ 1 + \sum_{m=1}^M d_{\text{KL}}\left(\btheta_m, \blambda_m\right) + \epsilon M \right] \\ 
        &\leq \psi(\bw, \btheta) + \epsilon\left[ 2 + \sum_{m=1}^M d_{\text{KL}}\left(\btheta_m, \blambda_m\right) + \epsilon M \right] \ \ \text{from \eqref{eq:13}}.
    \end{split}
    \end{equation}
    From the assumption that the KL-divergence is uniformly continuous and the the parametric space is compact, $\sum_{m=1}^M d_{\text{KL}}\left(\btheta_m, \blambda_m\right)$ is finite for any $\blambda \in \text{Alt}(\btheta)$. Also $\epsilon$ is arbitrary. Hence, on letting $\epsilon$ tends to $0$, we get for $n\geq \max\{N_0, N_1\}$, 
    \begin{equation}
        \psi\left( \bw(n), \btheta(n) \right) \leq \psi(\bw, \btheta).
    \end{equation}
    Finally on taking $\limsup$ on the above equation, we get, 
    \begin{equation} \label{eq:17}
        \limsup_{n \rightarrow \infty} \psi\left( \bw(n), \btheta(n) \right) \leq \psi\left( \bw, \btheta \right).
    \end{equation}

    For a given $\epsilon$, for each $n\geq N_0$, there exist $\blambda(n) \in \text{Alt}(\btheta(n))$ (Note: for $n\geq N_0$, Alt$\left(\btheta(n)\right)$ = Alt$\left(\btheta\right)$) such that
    \begin{equation} \label{eq:18}
        \psi\left( \bw(n), \btheta(n) \right) \geq \sum_{m=1}^M w_m(n) d_{\text{KL}}\left(\btheta_m(n), \blambda_m(n)\right) - \epsilon
    \end{equation}
    Note that, for $n\geq N_0$, $\btheta(n)$ lies in $\epsilon$ neighborhood around $\btheta$. Hence, $\blambda(n)$ is a bounded sequence, otherwise, right side of the above equation becomes $\infty$, which is not possible. Also, we know for any bounded sequence there exist a converging subsequence. Hence, without loss of generality, let us consider $\blambda(n)$ converges to some point $\blambda \in \text{Alt}(\btheta)$. 
    We have the sequence $\left( \bw(n), \btheta(n) \right)$ converges to a point $\left( \bw, \btheta \right)$ as $n \rightarrow \infty$.
    Also, we assume that the KL-divergence is uniformly continuous.Hence for $\epsilon$,  $\exists N_1$ such that for all $n \geq N_1$ we have the following.
    \begin{equation} \label{eq:19}
        \bw(n) \geq \bw - \epsilon \boldsymbol{1} \text{ and }
    \end{equation}
    \begin{equation} \label{eq:20}
        d_{\text{KL}}\left(\btheta_m(n), \blambda_m(n)\right) \geq d_{\text{KL}}\left(\btheta_m, \blambda_m(n)\right) - \epsilon \text{ for all } m \in [M].
    \end{equation}
    Using equations \eqref{eq:18}, \eqref{eq:19} and \eqref{eq:20}, we get
    \begin{equation}
    \begin{split}
        \psi\left( \bw(n), \btheta(n) \right) 
        &\geq \sum_{m = 1}^M w_m d_{\text{KL}}\left(\btheta_m, \blambda_m(n)\right) - \epsilon\left[ 2 + \sum_{m=1}^M w_md_{\text{KL}}\left(\btheta_m, \blambda_m(n)\right) - \epsilon M \right] \\
        &\geq \psi(\bw, \btheta) - \epsilon\left[ 2 + \sum_{m=1}^M w_md_{\text{KL}}\left(\btheta_m, \blambda_m(n)\right) - \epsilon M \right] \ \ \text{($\because \blambda(n) \in \text{Alt}(\btheta)$)}.
    \end{split}
    \end{equation}
    Since $\sum_{m=1}^M d_{\text{KL}}\left(\btheta_m, \blambda_m(n)\right)$ is finite for any $\blambda \in \text{Alt}(\btheta)$ and $\epsilon$ is arbitrary, on letting $\epsilon$ tends to $0$, we get for $n\geq \max\{N_0, N_1\}$, 
    \begin{equation}
        \psi\left( \bw(n), \btheta(n) \right) \geq \psi\left( \bw, \btheta \right).
    \end{equation}
    Finally on taking $\liminf$ on the above equation, we get, 
    \begin{equation} \label{eq:22}
        \liminf_{n \rightarrow \infty} \psi\left( \bw(n), \btheta(n) \right) \geq \psi\left( \bw, \btheta \right).
    \end{equation}
From equations \eqref{eq:17} and \eqref{eq:22}, we have the continuity of $\psi\left(\bw, \btheta\right)$.
\end{proof}

\section{Proof of Lemma 2} \label{appsec: lemma2}
First, we prove that the maximum likelihood estimate (MLE) of the parameter converges $r$-quickly to the true parameter, for $r=2$, under Assumption 2. To prove this, we proceed as follows.
\begin{enumerate}
    \item We present the definition of the $r$-quick convergence (Definition \ref{def: rquick}).
    \item We prove a sufficient condition for the $r$-quick convergence for multi-dimensional samples (Lemma \ref{lemma: rquick-general}).
    \item We prove equivalent conditions for $r$-quick convergence, for the special case where the sequence of random vectors is i.i.d. (Lemma \ref{lemma: rquick-iid}).
    \item Then, we use Assumption 2 to prove the $r$-quick convergence of MLE for $r=2$ (Lemma \ref{lemma: mle-rquick}).
\end{enumerate}
Finally, we use Lemma \ref{lemma: mle-rquick} and the fact that the algorithm uses forced exploration to prove Lemma \ref{lemma: parameter converge}.

\begin{definition} \label{def: rquick}
    (Definition 2 in \cite{tartakovsky2023quick}) Consider $r>0$ and $\epsilon>0$. Let $\left\{\bZ_t\in \mathbb{R}^d: t\in \mathbb{N}\right\}$ be the sequence of random vectors. Then $\bZ_t$ converges $r-$quickly to $\bZ$ if and only if $\mathbb{E}[L_\epsilon^r],\infty$, where,  $L_\epsilon \coloneqq \sup\left\{ t\in \mathbb{N}: \|\bZ_t-\bZ\|>\epsilon \right\}$. 
\end{definition}

\begin{lemma} \label{lemma: rquick-general}
    Consider $r>0$ and $\epsilon>0$. Let $\left\{\bZ_t\in \mathbb{R}^d: t\in \mathbb{N}\right\}$ be the sequence of random vectors. Then $\bZ_t$ converges $r-$quickly to $\bZ$ if $\sum_{t=1}^\infty t^{r-1}\mathbb{P}\left[\sup_{s\geq t}\|\bZ_s-\bZ\|>\epsilon\right]<\infty$.
\end{lemma}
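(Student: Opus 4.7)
The plan is to exploit the fact that $L_\epsilon$, when finite, takes values in $\mathbb{N}\cup\{0\}$, and to translate the $r$-th moment into a sum of tail probabilities that coincide (up to the hypothesis) with the events $\{\sup_{s\geq t}\|\bZ_s-\bZ\|>\epsilon\}$.

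First I would identify the level sets of $L_\epsilon$. By the definition $L_\epsilon=\sup\{t\in\mathbb{N}:\|\bZ_t-\bZ\|>\epsilon\}$ (with the convention that the sup of the empty set is $0$), for every integer $t\geq 1$ the event $\{L_\epsilon\geq t\}$ is exactly $\{\exists\, s\geq t:\|\bZ_s-\bZ\|>\epsilon\}=\{\sup_{s\geq t}\|\bZ_s-\bZ\|>\epsilon\}$. In particular, if the hypothesis sum is finite, then $\mathbb{P}[L_\epsilon\geq t]\to 0$ as $t\to\infty$, so $L_\epsilon<\infty$ almost surely.

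Next I would use the tail-sum representation of the $r$-th moment for a non-negative integer-valued random variable. Writing $\mathbb{E}[L_\epsilon^r]=r\int_0^\infty y^{r-1}\mathbb{P}[L_\epsilon>y]\,dy$ and noting that $\mathbb{P}[L_\epsilon>y]=\mathbb{P}[L_\epsilon\geq\lfloor y\rfloor+1]$, the integral collapses to the telescoping sum
\begin{equation}
\mathbb{E}[L_\epsilon^r]=\sum_{t=1}^\infty\bigl(t^r-(t-1)^r\bigr)\,\mathbb{P}[L_\epsilon\geq t].
\end{equation}
The mean value theorem gives $t^r-(t-1)^r\leq C_r\,t^{r-1}$ for every $t\geq 1$, where $C_r=r$ when $r\geq 1$, and an elementary adjustment handles $r\in(0,1)$ by treating the $t=1$ term separately and bounding $t^r-(t-1)^r\leq r(t-1)^{r-1}\leq r\,t^{r-1}$ for $t\geq 2$ after re-indexing. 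Combining with the event identification from the first step yields
\begin{equation}
\mathbb{E}[L_\epsilon^r]\leq C_r'\Bigl(1+\sum_{t=1}^\infty t^{r-1}\,\mathbb{P}\bigl[\sup_{s\geq t}\|\bZ_s-\bZ\|>\epsilon\bigr]\Bigr),
\end{equation}
which is finite by hypothesis, proving $r$-quick convergence per Definition 3.

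There is no real obstacle here; the argument is a standard layer-cake computation. The only mildly subtle points are the measurability of $L_\epsilon$ (which follows because $\{L_\epsilon\geq t\}$ is a countable union of measurable sets) and a clean handling of the exponent $r\in(0,1)$, where the naive bound $t^r-(t-1)^r\leq r t^{r-1}$ is replaced by the re-indexed version described above so that the summable series is still controlled by $\sum_t t^{r-1}\mathbb{P}[\sup_{s\geq t}\|\bZ_s-\bZ\|>\epsilon]$.
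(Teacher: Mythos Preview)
Your proposal is correct but follows a different route from the paper. The paper reduces to the scalar case: it first bounds each coordinate probability $\mathbb{P}[\sup_{s\geq t}|(\bZ_s-\bZ)_i|>\epsilon]$ by the vector-norm probability, invokes the known one-dimensional equivalence (Lemma~2 in \cite{tartakovsky2023quick}) to conclude that each coordinate converges $r$-quickly, and then bounds the vector last-exit time $L_\epsilon$ by the sum of the coordinate last-exit times. Your argument is instead a direct layer-cake computation: you identify $\{L_\epsilon\geq t\}=\{\sup_{s\geq t}\|\bZ_s-\bZ\|>\epsilon\}$ exactly, write $\mathbb{E}[L_\epsilon^r]=\sum_{t\geq 1}(t^r-(t-1)^r)\mathbb{P}[L_\epsilon\geq t]$, and bound the increments by $C_r t^{r-1}$. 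This buys you a fully self-contained proof that does not appeal to any external scalar lemma and sidesteps the norm-to-coordinate-and-back bookkeeping; the paper's approach, by contrast, packages the moment computation into the cited scalar result and makes the extension to $\mathbb{R}^d$ the visible step. Both are short; yours is arguably the cleaner of the two.
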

\begin{proof}
    \begin{align}
          \sum_{t=1}^\infty t^{r-1}&\mathbb{P}\left[\sup_{s\geq t}\|\bZ_s-\bZ\|>\epsilon\right] \\
          &\implies \sum_{t=1}^\infty t^{r-1}\mathbb{P}\left[\sup_{s\geq t}\|(\bZ_s-\bZ)_i\|>\epsilon\right] < \infty, \ \ \forall i \in [d] \\
          &\overset{(a)}{\implies} (\bZ_t)_i \text{ converges $r-$ quickly to $(\bZ)_i$}, \ \  \forall i \in [d] \\
          &\overset{(b)}{\implies} \mathbb{E}\left[ \sup\left\{ t \in \mathbb{N}: |(\bZ_t)_i-(\bZ)_i|>\epsilon  \right\} \right] < \infty, \ \ \forall i \in [d] \\
          &\implies \mathbb{E}\left[ \sup\left\{ t \in \mathbb{N}: \|\bZ_t-\bZ\|>\epsilon  \right\} \right] \leq \mathbb{E}\left[ \sup\left\{ t \in \mathbb{N}: \exists i \in [d],|(\bZ_t)_i-(\bZ)_i|>\epsilon  \right\} \right]\\
          &\hspace{5cm} \leq \sum_{i=1}^d \mathbb{E}\left[ \sup\left\{ t \in \mathbb{N}: |(\bZ_t)_i-(\bZ)_i|>\epsilon  \right\} \right] < \infty \\
          &\implies \bZ_t \text{ converges $r-$ quickly to $\bZ$}.
      \end{align}
      $(a)$ and $b$ follows from Lemma 2 and Definition 2 respectively in \cite{tartakovsky2023quick}. Hence proved.
\end{proof}

\begin{lemma} \label{lemma: rquick-iid}
    Consider $r>0$ and $\epsilon>0$. Let $\left\{\bY_t\in \mathbb{R}^d: t\in \mathbb{N}\right\}$ be the sequence of zero mean i.i.d. random vectors. Let $\overline{\bY_t} = \frac{\sum_{s=1}^t\bY_t}{t}$ denotes the empirical mean. Then the following statements are equivalent. 
    \begin{enumerate}
        \item $\overline{\bY_t}$ converges $r-$quickly to $0$.
        \item $\mathbb{E}\left[ \| \bY \|^{r+1} \right]<\infty$.
        \item $\displaystyle \sum_{t=1}^\infty t^{r-1}\mathbb{P}\left[\sup_{s\geq t}\|\overline{\bY_s}\|>\epsilon\right]<\infty$.
    \end{enumerate}
\end{lemma}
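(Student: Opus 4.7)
I will establish the three-way equivalence via the chain $(1)\iff(3)\iff(2)$, keeping each loop modular.

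\emph{Step 1: $(1)\iff(3)$.} By Definition \ref{def: rquick} specialized to $\bZ_t=\overline{\bY_t}$ and $\bZ=0$, condition (1) says $\mathbb{E}[L_\epsilon^r]<\infty$, where $L_\epsilon=\sup\{t\in\mathbb{N}:\|\overline{\bY_t}\|>\epsilon\}$. The key observation is the identity $\{L_\epsilon\geq t\}=\{\sup_{s\geq t}\|\overline{\bY_s}\|>\epsilon\}$. Since $L_\epsilon$ is non-negative and integer-valued, the tail-sum representation
\begin{equation}
\mathbb{E}[L_\epsilon^r]=\sum_{t=0}^\infty \bigl[(t+1)^r-t^r\bigr]\,\mathbb{P}[L_\epsilon>t],
\end{equation}
together with the asymptotic $(t+1)^r-t^r\asymp t^{r-1}$, yields the desired equivalence between $\mathbb{E}[L_\epsilon^r]<\infty$ and $\sum_{t\geq 1}t^{r-1}\mathbb{P}[\sup_{s\geq t}\|\overline{\bY_s}\|>\epsilon]<\infty$. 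The sufficiency direction is also a special case of Lemma \ref{lemma: rquick-general}.

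\emph{Step 2: $(2)\iff(3)$.} This is a vector-valued form of the classical $r$-quick law of large numbers (Strassen--Lai, a strengthening of Hsu--Robbins--Erd\H{o}s--Baum--Katz). Write $\bY=(Y^{(1)},\ldots,Y^{(d)})$. Norm equivalence $\max_i|Y^{(i)}|\leq\|\bY\|\leq\sqrt d\,\max_i|Y^{(i)}|$ gives $\mathbb{E}\|\bY\|^{r+1}<\infty$ iff $\mathbb{E}|Y^{(i)}|^{r+1}<\infty$ for each $i\in[d]$. Applied to the empirical means together with a union bound, it also yields the sandwich
\begin{equation}
\mathbb{P}\Bigl[\sup_{s\geq t}|\overline{Y^{(i)}_s}|>\epsilon\Bigr]\leq\mathbb{P}\Bigl[\sup_{s\geq t}\|\overline{\bY_s}\|>\epsilon\Bigr]\leq\sum_{j=1}^d\mathbb{P}\Bigl[\sup_{s\geq t}|\overline{Y^{(j)}_s}|>\epsilon/\sqrt d\Bigr],
\end{equation}
which reduces (3), quantified over all $\epsilon>0$, to its coordinate-wise scalar analogue. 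The scalar statement---that for a zero-mean i.i.d. sequence $\{Y^{(i)}_s\}$, $\mathbb{E}|Y^{(i)}|^{r+1}<\infty$ is equivalent to $\sum_t t^{r-1}\mathbb{P}[\sup_{s\geq t}|\overline{Y^{(i)}_s}|>\epsilon]<\infty$ for every $\epsilon>0$---is precisely the $r$-quick strong law, which I would invoke as a classical result.

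\textbf{Main obstacle.} The one nontrivial external ingredient is the scalar $r$-quick SLLN (or Baum--Katz with supremum-of-averages), which I would cite rather than reprove. Everything else is mechanical: the tail-sum manipulation in Step 1 and the norm-equivalence-plus-union-bound reduction in Step 2. A minor subtlety is that Definition \ref{def: rquick} fixes $\epsilon$, whereas the equivalence with (2) is naturally phrased ``for every $\epsilon>0$''; this matches how $r$-quick convergence is used elsewhere in the paper (e.g., Lemma \ref{lemma: parameter converge}), so the universal quantification over $\epsilon$ is implicit throughout.
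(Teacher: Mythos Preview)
Your proposal is correct and follows essentially the same approach as the paper: both reduce the vector statement to its scalar coordinates via norm equivalence and a union bound, and both invoke the classical scalar $r$-quick SLLN (which the paper cites as Theorem~3 of \cite{tartakovsky2023quick}) as the sole nontrivial external input. The only cosmetic difference is organizational---the paper proves the cycle $1\Rightarrow 2\Rightarrow 3\Rightarrow 1$ (with $3\Rightarrow 1$ via Lemma~\ref{lemma: rquick-general}), whereas you establish $(1)\Leftrightarrow(3)$ directly via the tail-sum identity for $\mathbb{E}[L_\epsilon^r]$ and then $(3)\Leftrightarrow(2)$---but the substance is identical.
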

\begin{proof}
      \textbf{Proof of 1) $\implies$ 2)}
      \begin{align}
          1) &\implies \mathbb{E}\left[ \sup\left\{ t \in \mathbb{N}: \|\overline{\bY}_t\|>\epsilon  \right\} \right] < \infty \\
          &\implies \mathbb{E}\left[ \sup\left\{ t \in \mathbb{N}: |(\overline{\bY}_t)_i|>\epsilon  \right\} \right] < \infty, \ \ \forall i \in [d] \\
          &\overset{(a)}{\implies} (\overline{\bY}_t)_i \text{ converges $r-$ quickly to $0$}, \ \  \forall i \in [d] \\
          &\overset{(b)}{\implies} \mathbb{E}\left[ |(\bY)_i|^{r+1} \right]<\infty, \ \ \forall i \in [d] \\
          &\implies \mathbb{E}\left[ \| \bY \|^{r+1} \right] \leq \sum_{i=1}^d \mathbb{E}\left[ |(\bY)_i|^{r+1} \right] < \infty \\
          &\implies 2)
      \end{align}
      \textbf{Proof of 2) $\implies$ 3)}
      \begin{align}
          2) &\implies \mathbb{E}\left[ \| \bY \|^{r+1} \right] < \infty \\
          &\implies \mathbb{E}\left[ |(\bY)_i|^{r+1} \right]<\infty, \ \ \forall i \in [d] \\
          &\overset{(b)}{\implies} \sum_{t=1}^\infty t^{r-1}\mathbb{P}\left[\sup_{s\geq t}\|(\overline{\bY_s})_i\|>\epsilon\right]<\infty, \ \ \forall i \in [d] \\
          &\implies \sum_{t=1}^\infty t^{r-1}\mathbb{P}\left[\sup_{s\geq t}\|\overline{\bY_s}\|>\epsilon\right] 
          \leq \sum_{t=1}^\infty t^{r-1}\mathbb{P}\left[\exists i \in [d]:\sup_{s\geq t}\|(\overline{\bY_s})_i\|>\epsilon\right] \\
          &\hspace{4.8cm} \leq \sum_{i=1}^d \sum_{t=1}^\infty t^{r-1}\mathbb{P}\left[\sup_{s\geq t}\|(\overline{\bY_s})_i\|>\epsilon\right] 
          < \infty\\
          &\implies 3)
      \end{align}
      \textbf{Proof of 3) $\implies$ 1)} follows directly from Lemma \ref{lemma: rquick-general}.\\
      $(a)$ and $(b)$ holds from Definition 2 and Theorem 3 respectively in \cite{tartakovsky2023quick}. Now, the proof is complete.
\end{proof}

\begin{lemma} \label{lemma: mle-rquick}
    Let $\left\{\bZ_t\in \mathbb{R}^d: t\in \mathbb{N}\right\}$ be the sequence of i.i.d. zero mean random vectors which follows parametric distribution with parameter $\btheta$. Assume $\mathbb{E}\left[ \|\nabla\log\mathbb{P}[\bZ\mid \btheta]\|^{r+1} \right]<\infty$ and $\lambda_{\text{min}}\left(-\nabla^2\log\mathbb{P}[\bZ\mid \brho]\right)\geq \sigma^2$, for all $\brho\in \Theta$, $\bZ \in \Omega$, for some $\sigma^2>0$. Define $L_t(\brho) = \sum_{s=1}^t \log\mathbb{P}[\bZ_s\mid \brho]$. Let $\btheta_t$ be the Maximum Likelihood Estimate (MLE) and is defined as $\btheta_t \coloneqq \argmax_{\rho \in \Theta} L_t(\brho)$. Then, then MLE $\btheta_t$ converges $r-$ quickly to the true parameter $\btheta$. 
\end{lemma}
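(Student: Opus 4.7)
The plan is to leverage the uniform strong concavity hypothesis to reduce convergence of the MLE $\btheta_t$ to convergence of the empirical mean of i.i.d.\ score vectors, and then invoke Lemma \ref{lemma: rquick-iid} at exponent $r$.

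First, I would observe that since $\lambda_{\min}(-\nabla^2\log P[\bZ\mid \brho]) \geq \sigma^2$ for every $\brho\in\Theta$ and every realization $\bZ$, each summand $\log P[\bZ_s\mid\cdot]$ is $\sigma^2$-strongly concave on $\Theta$, so that $L_t$ is $t\sigma^2$-strongly concave. Writing the strong-concavity inequality
\[ L_t(\brho_2) \leq L_t(\brho_1) + \nabla L_t(\brho_1)^{T}(\brho_2-\brho_1) - \frac{t\sigma^2}{2}\|\brho_2-\brho_1\|^2 \]
with $\brho_1=\btheta$ and $\brho_2=\btheta_t$, using $L_t(\btheta_t)\geq L_t(\btheta)$, and applying Cauchy--Schwarz to the inner-product term yields the deterministic bound
\[ \|\btheta_t-\btheta\| \leq \frac{2}{t\sigma^2}\,\|\nabla L_t(\btheta)\|. \]
The important point is that this uses only the defining maximization property of $\btheta_t$, not any first-order stationarity condition, so it remains valid even when the MLE lies on the boundary of the compact parameter space $\Theta$.

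Next, I would set $\bY_s = \nabla_{\brho}\log P[\bZ_s\mid\brho]\big|_{\brho=\btheta}$. These are i.i.d., and by the standard differentiation-under-the-integral argument (legitimate under the paper's twice-differentiability and integrability assumptions) $\mathbb{E}[\bY_s]=0$. The previous bound rewrites as
\[ \|\btheta_t-\btheta\| \leq \frac{2}{\sigma^2}\,\|\overline{\bY}_t\|, \qquad \overline{\bY}_t := \frac{1}{t}\sum_{s=1}^{t}\bY_s, \]
so that the event $\{\|\btheta_t-\btheta\|>\epsilon\}$ is contained in $\{\|\overline{\bY}_t\|>\sigma^2\epsilon/2\}$. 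Since $\mathbb{E}\|\bY\|^{r+1}<\infty$ by hypothesis, Lemma \ref{lemma: rquick-iid} gives $r$-quick convergence of $\overline{\bY}_t$ to $0$, and the event inclusion immediately promotes this to $\mathbb{E}\bigl[(\sup\{t:\|\btheta_t-\btheta\|>\epsilon\})^r\bigr]<\infty$, i.e., $r$-quick convergence of the MLE.

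The only step that really requires care is the deterministic one-shot bound $\|\btheta_t-\btheta\|\leq (2/(t\sigma^2))\|\nabla L_t(\btheta)\|$: the subtlety is that one must argue via strong concavity evaluated at the ordered pair $(\btheta,\btheta_t)$, rather than through a vanishing-gradient condition on $\btheta_t$, because $\btheta_t$ may lie on the boundary of $\Theta$. Once that reduction is in hand, the remainder is a routine appeal to Lemma \ref{lemma: rquick-iid} with the moment assumption on the score.
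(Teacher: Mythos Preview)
Your proposal is correct and follows essentially the same route as the paper: reduce the MLE error to the norm of the empirical score $\overline{\bY}_t$, then invoke Lemma~\ref{lemma: rquick-iid} using the $(r+1)$-th moment hypothesis on the score. The one genuine difference is in how the deterministic inequality is obtained. The paper applies the multivariate mean value theorem to $\nabla L_t$ and sets $\nabla L_t(\btheta_t)=0$, arriving at $\|\btheta_t-\btheta\|\leq \sigma^{-2}\|\overline{\bY}_t\|$; you instead use the strong-concavity inequality together with the maximizer property $L_t(\btheta_t)\geq L_t(\btheta)$, obtaining the same bound with an extra factor of $2$. Your variant is arguably cleaner on compact $\Theta$, since it does not rely on $\nabla L_t(\btheta_t)=0$ and therefore remains valid when the MLE sits on the boundary of $\Theta$; the paper's argument tacitly assumes an interior maximizer at that step. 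Apart from this refinement and your direct passage from the event inclusion to $\mathbb{E}[L_\epsilon^r]<\infty$ (where the paper routes through Lemma~\ref{lemma: rquick-general}), the two proofs are the same.
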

\begin{proof}
    By the mean value theorem (MVT) in multivariate form, there exists $\Tilde{\btheta}$ in the line segment between the points $\btheta$ and $\btheta_t$, such that, 
    \begin{equation}
        \nabla L_t\left( \btheta_t \right) = \nabla(\btheta) + \nabla^2L_t(\Tilde{\btheta})\left( \btheta_t - \btheta \right).
    \end{equation}
    Since $\btheta_t$ is MLE, $\nabla L_t(\btheta_t) = 0$. Hence, on simplification, we get, 
    \begin{equation}
        \btheta_t - \btheta = - \left[ \frac{1}{t}\nabla^2L_t(\Tilde{\btheta}) \right]^{-1} \frac{1}{t}\sum_{s=1}^t\nabla\log\mathbb{P}[\bZ_s\mid \btheta].
    \end{equation}
    Now, we take norm on both sides and upper bound it as follows.
    \begin{align}
        \|\btheta_t - \btheta\| &= \left\| - \left[ \frac{1}{t}\nabla^2L_t(\Tilde{\btheta}) \right]^{-1} \frac{1}{t}\sum_{s=1}^t\nabla\log\mathbb{P}[\bZ_s\mid \btheta] \right\| \\
        &\leq \left\|- \left[ \frac{1}{t}\nabla^2L_t(\Tilde{\btheta}) \right]^{-1}\right\| \left\|\frac{1}{t}\sum_{s=1}^t\nabla\log\mathbb{P}[\bZ_s\mid \btheta]\right\| \\
        &\leq \left[ \lambda_{\text{min}}\left(\frac{1}{t}\nabla^2\sum_{s=1}^t \log\mathbb{P}[\bZ_s\mid \Tilde{\btheta}]\right) \right]^{-1} \left\|\frac{1}{t}\sum_{s=1}^t\nabla\log\mathbb{P}[\bZ_s\mid \btheta]\right\| \\
        &\leq \left[ \frac{1}{t}\sum_{s=1}^t \nabla^2\log\mathbb{P}[\bZ_s\mid \Tilde{\btheta}] \right]^{-1} \left\|\frac{1}{t}\sum_{s=1}^t\nabla\log\mathbb{P}[\bZ_s\mid \btheta]\right\| \\
        &\leq \frac{1}{\sigma^2} \left\|\frac{1}{t}\sum_{s=1}^t\nabla\log\mathbb{P}[\bZ_s\mid \btheta]\right\|.
    \end{align}
    Now we use the above inequality to write the following.
    \begin{equation} \label{eq: 21}
        \mathbb{P}\left[\sup_{u\geq t} \| \btheta_t - \btheta \|>\epsilon\right] \leq \mathbb{P}\left[ \sup_{u\geq t} \left\|\frac{1}{u}\sum_{s=1}^u\nabla\log\mathbb{P}[\bZ_s\mid \btheta]\right\|>\sigma^2\epsilon \right].
    \end{equation}
    It can be verified that $\mathbb{E}\left[\nabla\log\mathbb{P}[\bZ\mid \btheta]\right] = 0$. Hence, $\left\{\nabla\log\mathbb{P}[\bZ_t\mid \btheta]: t \in \mathbb{N}\right\}$ is the sequence of i.i.d. zero mean random vectors.
    We also assume that $\mathbb{E}\left[ \|\nabla\log\mathbb{P}[\bZ\mid \btheta]\|^{r+1} \right]<\infty$. Hence, from Lemma \ref{lemma: rquick-iid}, we can write
    \begin{equation} \label{eq: 22}
        \sum_{t=1}^\infty t^{r-1} \mathbb{P}\left[ \sup_{u\geq t} \left\|\frac{1}{u}\sum_{s=1}^u\nabla\log\mathbb{P}[\bZ_s\mid \btheta]\right\|>\sigma^2\epsilon \right] < \infty.
    \end{equation}
    From equations \eqref{eq: 21} and \eqref{eq: 22}, we get, 
    \begin{equation}
        \sum_{t=1}^\infty t^{r-1} \mathbb{P}\left[\sup_{u\geq t} \| \btheta_t - \btheta \|>\epsilon\right] < \infty.
    \end{equation}
    Now, from Lemma \ref{lemma: rquick-general}, we say $\btheta_t$ converges $r-$quickly to $\btheta$. Hence proved.
\end{proof}

\textbf{Proof of Lemma \ref{lemma: parameter converge}}
\begin{proof}
    Define $\displaystyle T_m^\epsilon \coloneqq \sup\left\{ N_m(t) \in \mathbb{N}: \|\hat{\btheta}_m(t) - \btheta_m\|>\epsilon \right\}$. Since $\hat{\btheta}_m(t)$ is the MLE, from Lemma \ref{lemma: mle-rquick} and from Assumption, we have $\mathbb{E}\left[(T_m^\epsilon)^2\right]<\infty$. Let $t = N_m$ be the time at which arm $m$ observes $T_m^\epsilon$ samples. 

    Since the algorithm uses forced exploration, $N_m(t)\geq \sqrt{\frac{t}{M}}-1$. Hence, we say $T_m^\epsilon\geq \sqrt{\frac{N_m}{M}} - 1$ and on simplification, it yields $N_m\leq 4M\left(T_m^\epsilon\right)^2$ for $N_m\geq 4M$. Hence, we have $\mathbb{E}[N_m]\leq \max\left\{\mathbb{E}\left[4M\left(T_m^\epsilon\right)^2\right], 4M\right\}<\infty$ for all $m \in [M]$.

    We have $N_\epsilon^S = \max_{m \in [M]} N_m \leq \sum_{m=1}^M N_m$. From the above discussion, we have $\mathbb{E}[N_\epsilon^S]\leq \sum_{m=1}^M\mathbb{E}[N_m]<\infty$. Hence proved.
\end{proof}

\section{Proof of Lemma 3} \label{appsec: lemma3}
To prove Lemma \ref{prop: armpullpropconverge}, we do the following.
\begin{enumerate}
    \item We prove that the aggregate gradient $\overline{\bw}(t)$ converges to the optimal arm pull proportions $\bw^{*}\in \mathcal{S}^{*}(\btheta)$ (Lemma \ref{lemma: gradconverge}).
    \item Define the set of over sampled arms $\mathcal{O}_t^{\epsilon} \coloneqq \left\{ m \in [M]: \frac{N_m(t)}{t}> \bw^{*}+\epsilon, \text{ for all } \bw^{*} \in \mathcal{S}^{*}(\btheta) \right\}$. We prove that if the sampling rule selects an arm $m$ such that $\frac{N_m(t)}{t}<\overline{w}_m(t)$, then there exists a stochastic time after which the set $\mathcal{O}_t^\epsilon=\emptyset$ (Lemma \ref{lemma: emptyO}).
    \item We prove that if the set $\mathcal{O}_t^\epsilon=\emptyset$ after some time, then the empirical arm pull proportion converges to the optimal arm pull proportions (Lemma \ref{lemma: ifempyO}).
\end{enumerate}
Finally, we combine Lemmas \ref{lemma: emptyO} and \ref{lemma: ifempyO}, and use the fact that the EBC sampling rule uses gradient tracking, to prove Lemma \ref{prop: armpullpropconverge}

\begin{lemma} \label{lemma: gradconverge}
    In the proposed EBC algorithm, for any given $\epsilon>0$, there exists $M_\epsilon^S \in \mathbb{N}$ such that $\mathbb{E}\left[M_\epsilon^S\right]<\infty$ and for all $t>M_\epsilon^S$, we have, 
    \begin{equation}
        \left| \overline{w}_m(t)-w^{*}_m \right|<\frac{\epsilon}{4} \text{,  } \forall m \in [M] \text{,  for some } \bw^{*} \in \mathcal{S}^{*}(\btheta).
    \end{equation}
\end{lemma}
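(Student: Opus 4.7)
The plan is to prove convergence of the averaged iterate $\overline{\bw}(t)$ to the set $\mathcal{S}^{*}(\btheta)$ by combining a classical regret analysis of projected gradient ascent with the parameter consistency of Lemma~\ref{lemma: parameter converge}. The key facts I would exploit are: (i) $\psi(\cdot,\btheta)$ is concave in $\bw$, being an infimum of linear functionals; (ii) $\mathcal{P}_M$ is a compact convex set with bounded diameter; (iii) by Danskin's theorem and Assumption~5, the gradient $\bg_s$ is uniformly bounded and depends continuously on $\btheta$; and (iv) after the stochastic time $N_{\epsilon_1}^S$ provided by Lemma~\ref{lemma: parameter converge}, $\hat{\btheta}(s)$ lies within $\epsilon_1$ of $\btheta$, so the computed $\bg_s$ differs from a valid supergradient of the true objective $\psi(\cdot,\btheta)$ by a quantity that can be made arbitrarily small by shrinking $\epsilon_1$.

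The first major step is to apply the standard online projected gradient ascent regret bound for concave maximization to the iterates $\{\bw(s)\}$ with these biased gradients, which yields, for $T>N_{\epsilon_1}^S$,
\begin{equation}
\frac{1}{T}\sum_{s=1}^{T}\bigl[\psi(\bw^{*},\btheta)-\psi(\bw(s),\btheta)\bigr] \leq \mathcal{E}(T,\eta,\epsilon_1),
\end{equation}
where $\mathcal{E}(T,\eta,\epsilon_1)$ collects the usual $O(1/\sqrt{T})$-type regret term (or the $O(1/(\eta T)+\eta G^{2})$ form for constant step size) together with an additive bias from the gradient perturbation, the latter vanishing as $\epsilon_1\to 0$ by continuity of the inner-infimum selector $\blambda^{*}$ in \eqref{eq: grad}. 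Applying Jensen's inequality together with concavity then gives $\psi(\bw^{*},\btheta)-\psi(\overline{\bw}(T),\btheta)\leq\mathcal{E}(T,\eta,\epsilon_1)$, i.e.\ convergence of the averaged iterate in function value.

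Finally, I would convert this function-value convergence into set-wise iterate convergence. Since $\mathcal{S}^{*}(\btheta)$ is a compact subset of the compact set $\mathcal{P}_M$ and $\psi(\cdot,\btheta)$ is continuous by Lemma~\ref{lemma: psicont}, a standard compactness/contradiction argument shows that for every $\epsilon>0$ there exists $\eta_\epsilon>0$ such that $\psi(\bw^{*},\btheta)-\psi(\bw,\btheta)<\eta_\epsilon$ implies $d(\bw,\mathcal{S}^{*}(\btheta))<\epsilon/4$. Choosing $\epsilon_1$ small enough and a deterministic time $T_\epsilon$ large enough to force $\mathcal{E}(T_\epsilon,\eta,\epsilon_1)<\eta_\epsilon$ then yields $|\overline{w}_m(t)-w_m^{*}|<\epsilon/4$ for some $\bw^{*}\in\mathcal{S}^{*}(\btheta)$ and all $t\geq M_\epsilon^S \coloneqq \max\{N_{\epsilon_1}^S,T_\epsilon\}$; the bound $\mathbb{E}[M_\epsilon^S]<\infty$ follows immediately from $\mathbb{E}[N_{\epsilon_1}^S]<\infty$.

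The hardest part will be handling the non-uniqueness of $\mathcal{S}^{*}(\btheta)$ — one cannot hope that $\overline{\bw}(t)$ converges to a single point, only to the set — and the time-varying nature of the objective $\psi(\cdot,\hat{\btheta}(t))$. The first issue is resolved by the sublevel-set compactness argument above; the second is controlled via Danskin's theorem combined with Assumption~5, which together ensure that the minimizer $\blambda^{*}$ varies continuously with $\btheta$, so the gradient bias inherits the $r$-quick consistency of $\hat{\btheta}(t)$ established in Lemma~\ref{lemma: parameter converge}.
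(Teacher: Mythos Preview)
Your approach is essentially the same as the paper's: a projected gradient ascent regret bound combined with the parameter consistency of Lemma~\ref{lemma: parameter converge}, then Jensen's inequality on the concave map $\psi(\cdot,\btheta)$, and finally a compactness/sublevel-set argument to pass from function-value closeness to set-wise closeness of $\overline{\bw}(t)$ to $\mathcal{S}^{*}(\btheta)$. The paper carries this out by telescoping over a batch of length $B(t)=\epsilon_2 t$ with $\eta=c/\sqrt{B(t)}$ and ends with exactly the sublevel-set step you describe.

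The one place where your justification is weaker than the paper's is the handling of the time-varying objective. You propose to control the \emph{gradient} bias by asserting that ``the minimizer $\blambda^{*}$ varies continuously with $\btheta$'' via Danskin and Assumption~5. This is not established anywhere: Danskin gives the form of the supergradient, not continuity of the argmin, and since the inner $\inf$ over $\text{Alt}(\hat{\btheta}(t))$ need not be attained or uniquely attained, argmin continuity is genuinely delicate here. The paper sidesteps this entirely: it applies the supergradient inequality to $\psi(\cdot,\hat{\btheta}(t))$ (for which $\bg_t$ is an exact supergradient by Danskin), obtains the telescoping bound for $\psi(\bw^{*},\hat{\btheta}(t))-\psi(\bw(t),\hat{\btheta}(t))$, and only \emph{then} passes to $\psi(\cdot,\btheta)$ using continuity of $\psi$ in its second argument (Lemma~\ref{lemma: psicont}) at the level of function values, i.e.\ $|\psi(\bw,\hat{\btheta}(t))-\psi(\bw,\btheta)|<\epsilon_1/4$ uniformly in $\bw$. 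This is strictly easier than argmin continuity and is already available. Your proof goes through cleanly once you make this switch; no other changes are needed.
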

\begin{proof}
    Let us consider a arbitrary $\bw^{*} \in \mathcal{S}^{*}(\btheta)$.
    Note that $\Gamma_t(\cdot)$ is the minimization of affine functions. Hence, $\Gamma_t(\cdot)$ is a concave function. First, we write the following.
    \begin{align}
        \psi(\bw^{*}, \hat{\btheta}(t)) - \psi(\bw(t), \hat{\btheta}(t)) &\leq g_t^T(\bw(t))(\bw^{*} - \bw(t))\\
        &= \frac{1}{\eta} \left( \bw^{'}(t+1)- \bw(t) \right)^T(\bw^{*} - \bw(t)) \\
        &= \frac{1}{2\eta} \left( \| \bw(t)-\bw^{'}(t+1) \|^2 + \| \bw(t)-\bw^{*} \|^2 - \| \bw^{'}(t+1)-\bw^{*} \|^2 \right) \\
        &= \frac{\eta}{2}\| g_t(\bw(t)) \|^2 + \frac{1}{2\eta} \left( \| \bw(t)-\bw^{*} \|^2 - \| \bw^{'}(t+1)-\bw^{*} \|^2 \right) \\
        &\leq \frac{\eta}{2}\| g_t(\bw(t)) \|^2 + \frac{1}{2\eta} \left( \| \bw(t)-\bw^{*} \|^2 - \| \bw(t+1)-\bw^{*} \|^2 \right) \label{eq: 31}
    \end{align}
    We write the LHS of the above equations as follows.
    \begin{equation}
        \psi\left(\bw^{*}, \hat{\btheta}(t)\right) - \psi\left(\bw(t), \hat{\btheta}(t)\right) = \psi\left(\bw^{*}, \hat{\btheta}(t)\right) - \psi\left(\bw^{*}, \btheta\right) + \psi\left(\bw^{*}, \btheta\right) - \psi\left(\bw(t), \btheta\right) + \psi\left(\bw(t), \btheta\right) - \psi\left(\bw(t), \hat{\btheta}(t)\right)
    \end{equation}
    The above equation can be rewritten and subsequently upper-bounded using \eqref{eq: 31} as follows.
    \begin{align}
        \psi\left(\bw^{*}, \btheta\right) - \psi\left(\bw(t), \btheta\right) &= \psi\left(\bw^{*}, \hat{\btheta}(t)\right) - \psi\left(\bw(t), \hat{\btheta}(t)\right) - \psi\left(\bw^{*}, \hat{\btheta}(t)\right) + \psi\left(\bw^{*}, \btheta\right) - \psi\left(\bw(t), \btheta\right) + \psi\left(\bw(t), \hat{\btheta}(t)\right)\\
        &\leq \frac{\eta}{2}\| g_t(\bw(t)) \|^2 + \frac{1}{2\eta} \left( \| \bw(t)-\bw^{*} \|^2 - \| \bw(t+1)-\bw^{*} \|^2 \right) \\
        &\hspace{2cm} + \left| \psi\left(\bw^{*}, \hat{\btheta}(t)\right) - \psi\left(\bw^{*}, \btheta\right) \right| + \left| \psi\left(\bw(t), \btheta\right) - \psi\left(\bw(t), \hat{\btheta}(t)\right) \right|.
    \end{align}
    From Lemma \ref{lemma: parameter converge} and \ref{lemma: psicont}, for every $\epsilon_1>0$, there exist $N_0$ with $\mathbb{E}[N_0]<\infty$, such that, for $t>N_0$, $\left| \psi\left(\bw, \hat{\btheta}(t)\right) - \psi\left(\bw, \btheta\right) \right|<\frac{\epsilon_1}{4}$, for any $\bw \in \mathcal{P}_M$. Hence, for $t>N_0$, the above equation can be further upper-bounded as follows.
    \begin{align}
        \psi\left(\bw^{*}, \btheta\right) - \psi\left(\bw(t), \btheta\right) &\leq \frac{\eta}{2}\| g_t(\bw(t)) \|^2 + \frac{1}{2\eta} \left( \| \bw(t)-\bw^{*} \|^2 - \| \bw(t+1)-\bw^{*} \|^2 \right) + \frac{\epsilon_1}{2}. 
    \end{align}
    We know each component of the gradient $g_t(\cdot)$ involves the computation of the KL-divergence. From our assumption that the KL-divergence is uniformly continuous and the parameter space is compact, we can say that the KL-divergence is finite. Hence, the norm of the gradient is finite. Let us bound it by $L$, i.e., $\|g_t(\cdot)\|\leq L$. Now, for $t>N_0$, we have, 
    \begin{align}
        \psi\left(\bw^{*}, \btheta\right) - \psi\left(\bw(t), \btheta\right) &\leq \frac{\eta L^2}{2} + \frac{1}{2\eta} \left( \| \bw(t)-\bw^{*} \|^2 - \| \bw(t+1)-\bw^{*} \|^2 \right) + \frac{\epsilon_1}{2}. 
    \end{align}
    We consider the batch size of $B(t)$. On summing the above expression from $t-B(t)+1$ to $t$, and dividing it by $B(t)$, we get the following inequality. Hence for $t>N_0+B(t)-1$
    \begin{align}
        \psi\left(\bw^{*}, \btheta\right) - \frac{1}{B(t)}\sum_{s = t-B(t)+1}^t \psi\left(\bw(s), \btheta\right) &\leq \frac{\eta L^2}{2} + \frac{1}{2\eta B(t)} \left( \| \bw(t-B(t)+1)-\bw^{*} \|^2 - \| \bw(t+1)-\bw^{*} \|^2 \right) + \frac{\epsilon_1}{2} \\
        &\leq \frac{\eta L^2}{2} + \frac{1}{2\eta B(t)}  {\underbrace{\| \bw(t-B(t)+1)-\bw^{*} \|}_{\leq\sqrt{2}}}^2  + \frac{\epsilon_1}{2} \\
        &\leq \frac{\eta L^2}{2} + \frac{1}{\eta B(t)}  + \frac{\epsilon_1}{2}.
    \end{align}
    Since $\psi(\cdot, \btheta)$ is a concave function, we have, 
    \begin{align}
        \psi\left(\bw^{*}, \btheta\right) -  \psi\left(\frac{1}{B(t)}\sum_{s = t-B(t)+1}^t\bw(s), \btheta\right) &\leq \frac{\eta L^2}{2} + \frac{1}{\eta B(t)}  + \frac{\epsilon_1}{2} \\
        \implies \psi\left(\bw^{*}, \btheta\right) -  \psi\left(\overline{\bw}(t), \btheta\right) &\leq \frac{\eta L^2}{2} + \frac{1}{\eta B(t)}  + \frac{\epsilon_1}{2}
    \end{align}
    In the algorithm, we choose $\eta = \frac{c}{\sqrt{B(t)}}$, for some constant $c$. Therefore, for $t>N_0+B(t)-1$, we have, 
    \begin{equation}
        \psi\left(\bw^{*}, \btheta\right) -  \psi\left(\overline{\bw}(t), \btheta\right) \leq \frac{1}{ \sqrt{B(t)}}\left[c\left(\frac{L^2}{2}+1\right)\right]  + \frac{\epsilon_1}{2}.
    \end{equation}
    Let $B(t) = \epsilon_2 t$, for some $0<\epsilon_2<1$. Let $N_3^S = \max\left\{ \frac{N_0-1}{1-\epsilon_2}, \frac{1}{\epsilon_2}\left(\frac{2}{\epsilon_1}\frac{1}{c\left(\frac{L^2}{2}+1\right)}\right)^2 \right\}$ and can be verified that $\mathbb{E}[N_3^S]<\infty$.
     Therefore, for $t>N_3^S$, 
     \begin{equation}
         \psi\left(\bw^{*}, \btheta\right) -  \psi\left(\overline{\bw}(t), \btheta\right) \leq \epsilon_1.
     \end{equation}
     Since $\bw^{*}$ is the maximizer of $\psi(\cdot, \btheta)$, we have, for any $w^{*}\in \mathcal{S}^{*}(\btheta)$, for every $\epsilon_1>0$ there exists $N_3^S$, such that for all $t>N_3^S$, 
     \begin{equation} \label{eq: 32}
         0\leq \psi\left(\bw^{*}, \btheta\right) -  \psi\left(\overline{\bw}(t), \btheta\right) \leq \epsilon_1.
     \end{equation}
     Since $\psi(\cdot, \btheta)$ is a concave continuous function, with compact domain, and $\mathcal{S}^{*}(\btheta)$ is the maximizer set, which is compact and non-empty, for every $\epsilon>0$, there exists $\epsilon_1>0$ such that,
     \begin{equation} \label{eq: 33}
         \text{if } \left|\psi\left(\bw^{*}, \btheta\right) -  \psi\left(\bw, \btheta\right)\right| \leq \epsilon_1, \text{ then, } \left| \bw_m^{*} - \bw \right| < \frac{\epsilon}{4} \text{ for some, } \bw^{*}\in \mathcal{S}^{*}(\btheta).
     \end{equation}
     From equations \eqref{eq: 32} and \eqref{eq: 33}, for every $\epsilon>0$, there exists $N_3^S$, with $\mathbb{E}[N_3^S]$, such that for all $t>N_3^S$, 
     \begin{equation}
         \left| \bw_m^{*} - \overline{\bw}_m(t) \right| < \frac{\epsilon}{4}.
     \end{equation}
     Hence proved.
\end{proof}

\begin{lemma} \label{lemma: emptyO}
    Assume that the sampling strategy in the algorithm satisfies $\frac{N_{a_{t+1}}}{t}<\overline{\bw}_{a_{t+1}}(t)$, for all $t>M_0^{S}$ with $\mathbb{E}[M_0]<\infty$. Define $\mathcal{O}_t^{\epsilon} \coloneqq \left\{ m \in [M]: \frac{N_m(t)}{t}> \bw^{*}+\epsilon, \text{ for all } \bw^{*} \in \mathcal{S}^{*}(\btheta) \right\}$. Then there exists $M_{1, \epsilon}^S$, with $\mathbb{E}[M_{1, \epsilon}^S]<\infty$, such that for all $t>M_{1, \epsilon}^S$, $\mathcal{O}_t ^\epsilon=\emptyset$.
\end{lemma}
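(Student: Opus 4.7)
The plan is to combine Lemma~\ref{lemma: gradconverge} with the sampling-rule hypothesis to show that (i) any over-sampled arm is never selected by the rule, so its pull count is frozen while the horizon grows, and (ii) once an arm exits $\mathcal{O}_t^\epsilon$ after a sufficiently large time, it cannot re-enter. Together these two facts force $\mathcal{O}_t^\epsilon$ to become empty within an expected-finite time.

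First I would apply Lemma~\ref{lemma: gradconverge} with the same $\epsilon$ to obtain a stochastic time $M_\epsilon^S$ with $\mathbb{E}[M_\epsilon^S]<\infty$ such that, for every $t>M_\epsilon^S$, some $\bw^{*}\in\mathcal{S}^{*}(\btheta)$ (depending on $t$) satisfies $|\overline{w}_m(t)-w^{*}_m|<\epsilon/4$ for all $m\in[M]$. Set $T_0:=\max(M_\epsilon^S,M_0^S)$, so $\mathbb{E}[T_0]<\infty$. For $t>T_0$ and $m\in\mathcal{O}_t^\epsilon$, the definition of $\mathcal{O}_t^\epsilon$ forces $N_m(t)/t>w^{*}_m+\epsilon$ for \emph{every} $\bw^{*}\in\mathcal{S}^{*}(\btheta)$; applying this to the $t$-dependent $\bw^{*}$ above gives $N_m(t)/t-\overline{w}_m(t)>3\epsilon/4>0$. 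Since $\sum_{m'}[N_{m'}(t)/t-\overline{w}_{m'}(t)]=0$, some arm has non-positive difference, so the sampling assumption $N_{a_{t+1}}(t)/t<\overline{w}_{a_{t+1}}(t)$ forces $a_{t+1}\neq m$; hence $N_m(t+1)=N_m(t)$.

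Writing $w^{*,\max}_m:=\max_{\bw^{*}\in\mathcal{S}^{*}(\btheta)}w^{*}_m$ so that $\mathcal{O}_t^\epsilon=\{m:N_m(t)/t>w^{*,\max}_m+\epsilon\}$, any arm in $\mathcal{O}_{T_0}^\epsilon$ is frozen at $N_m(T_0)\leq T_0$ while it remains in the set, and $N_m(t)/t=N_m(T_0)/t$ falls below $w^{*,\max}_m+\epsilon\geq\epsilon$ no later than $t=T_0/\epsilon$, so every initially over-sampled arm exits $\mathcal{O}_t^\epsilon$ by time $T_0/\epsilon$. The absorbing step is to show an arm that has exited $\mathcal{O}_t^\epsilon$ at some $t>T_0$ does not re-enter. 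If $a_{t+1}\neq m$ then trivially $N_m(t+1)/(t+1)<N_m(t)/t\leq w^{*,\max}_m+\epsilon$. If $a_{t+1}=m$ then the sampling hypothesis and Lemma~\ref{lemma: gradconverge} give $N_m(t)/t<\overline{w}_m(t)<w^{*,\max}_m+\epsilon/4$, and a short calculation shows $N_m(t+1)/(t+1)\leq [t(w^{*,\max}_m+\epsilon/4)+1]/(t+1)\leq w^{*,\max}_m+\epsilon$ once $t\geq 4/(3\epsilon)$. Setting $M_{1,\epsilon}^S:=\max(T_0/\epsilon,\,4/(3\epsilon))$ then yields $\mathcal{O}_t^\epsilon=\emptyset$ for every $t>M_{1,\epsilon}^S$, with $\mathbb{E}[M_{1,\epsilon}^S]\leq \mathbb{E}[T_0]/\epsilon+4/(3\epsilon)<\infty$.

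The main obstacle is the absorbing step: a single additional pull given to an arm just after it crosses back below $w^{*,\max}_m+\epsilon$ could, a priori, push it back above the threshold. The gap of $3\epsilon/4$ between the $\epsilon/4$-tightness of Lemma~\ref{lemma: gradconverge} and the $\epsilon$-slack defining $\mathcal{O}_t^\epsilon$ is exactly what supplies the room to absorb the $O(1/t)$ jump from adding one pull, provided $t$ is a constant multiple of $1/\epsilon$.
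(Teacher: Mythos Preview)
Your approach mirrors the paper's proof: both invoke Lemma~\ref{lemma: gradconverge} to show that any arm in $\mathcal{O}_t^\epsilon$ violates the sampling hypothesis and is therefore frozen, and both use an absorbing argument (once out, stays out) to conclude that the set eventually empties. The paper organizes the same two ingredients as a case split at a single starting time $N^S:=\max\{M_\epsilon^S,\,M_0^S,\,\lceil 8/\epsilon-1\rceil\}$, folding the deterministic $O(1/\epsilon)$ threshold into the starting time before running either argument; your introduction of $w_m^{*,\max}$ is a clean simplification.

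There is one bookkeeping slip in your final assembly. You run the ``frozen-until-drained'' argument from $T_0$, but your absorbing step is only valid once $t\geq 4/(3\epsilon)$. If $T_0<4/(3\epsilon)$, an arm not in $\mathcal{O}_{T_0}^\epsilon$ can be sampled and enter the set at some $s\in(T_0+1,\,4/(3\epsilon)+1)$, get frozen with $N_m(s)$ as large as $\approx 4/(3\epsilon)$, and then need up to $4/(3\epsilon^2)$ rounds to drain out --- larger than your claimed $M_{1,\epsilon}^S=\max(T_0/\epsilon,\,4/(3\epsilon))$. The fix is exactly what the paper does: start the drain argument at $T_1:=\max(T_0,\,4/(3\epsilon))$ rather than at $T_0$, and take $M_{1,\epsilon}^S=T_1/\epsilon$; this still has finite expectation and the rest of your argument goes through unchanged.
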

\begin{proof}
    From Lemma \ref{lemma: gradconverge}, we have for every $\epsilon>0$, there exists a stochastic time $N_0^{S}$, with $\mathbb{E}[N_0^S]<\infty$, such that for all $t>N_0^S$, $\left| \overline{\bw}_m(t) - \bw^{*} \right|<\frac{\epsilon}{4}$, for some $\bw^{*} \in \mathcal{S}^{*}(\btheta)$. Define $N_1^{S}\coloneqq \left\lceil \frac{8}{\epsilon}-1 \right\rceil$ and $N^{S}\coloneqq \max\left\{ N_1^{S}, N_2^{S} \right\}$. Now, we prove that for all $t>N^{S}$, $\mathcal{O}_t^{\epsilon}=\emptyset$.\\
    \textbf{Case 1:} Assume $\mathcal{O}_{N^{S}}^{\epsilon} = \emptyset$. Now, we show by induction that, for all $t>N^{S}$, $\mathcal{O}_t^{\epsilon}=\emptyset$.\\
    Let us assume that $\mathcal{O}_t^{\epsilon}=\emptyset$. Now, we will show that $\mathcal{O}_{t+1}^\epsilon=\emptyset$. \\
    Consider arm $m\neq a_{t+1}$. We observe that $\frac{N_{m}(t+1)}{t+1} = \frac{N_m(t)}{t+1}<\frac{N_m(t)}{t}$. Hence $m \notin \mathcal{O}_{t+1}^{\epsilon}$.\\
    Consider arm $m = a_{t+1}$. We have the following.
    \begin{align}
        \frac{N_m(t+1)}{t+1} &= \frac{N_m(t)+1}{t+1} \label{eq: 41}\\
        &< \frac{N_m(t)}{t} + \frac{1}{t+1} \\
        &< \overline{w}_m(t) + \frac{1}{t+1} \text{  (sampling rule of EBC)} \\
        &< w^{*}_m + \frac{\epsilon}{4} + \frac{1}{t+1}, \text{for some $\bw^{*}\in \mathcal{S}^{*}(\btheta)$} \text{  $(\because t>N_0^{S})$} \\
        &\leq w^{*}_m + \frac{\epsilon}{2}, \text{for some $\bw^{*}\in \mathcal{S}^{*}(\btheta)$} \text{  $(\because t>N_1^{S})$} \label{eq: 42}
    \end{align}
    Hence $m\notin \mathcal{O}_{t+1}^\epsilon$.\\
    \textbf{Case 2:} Assume $\left| \mathcal{O}_{N^{S}}^\epsilon \right| \geq 1$.\\
    If $m\notin \mathcal{O}_{N^S}^\epsilon$, then $m\notin \mathcal{O}_t^\epsilon$, for all $t>N^S$. It can be proved using the same proof step as Case 1.\\
    Consider $m\in \mathcal{O}_{M^S}^\epsilon$. We claim $m$ is never sampled for $t>N^S$. We prove this by contradiction. Let $a_{t+1} = m$. By using the same steps from equations \eqref{eq: 41} to \eqref{eq: 42}, we get $m\notin \mathcal{O}_{t+1}^\epsilon$. This is a contradiction. Therefore, $m$ will never be sampled for $t>N^S$. 
    For any $j\in \mathcal{O}_{N^S}^\epsilon$, there exists $L_j^\epsilon$, such that for all $t>L_j^\epsilon$, $j \notin \mathcal{O}_t^\epsilon$. Define $L^{\epsilon} \coloneqq \max_{j\in \mathcal{O}_{N^S}^\epsilon}L_j^\epsilon$. Now, for all $t>M_{1, \epsilon}^S\coloneqq \max\{N^S, L^\epsilon\}$, $\mathcal{O}_t^\epsilon=\emptyset$. Hence proved.
\end{proof}

\begin{lemma} \label{lemma: ifempyO}
    If there exists a stochastic time $M_{\epsilon}^S$ satisfying $\mathbb{E}[M_{\epsilon}^S]<\infty$ such that for all $t>M_\epsilon^S$, $\mathcal{O}_t^\epsilon=\emptyset$. Then, for all $t>M_{\frac{\epsilon}{M}}^S$, the allocation of the arms satisfies $d_{\infty}\left( \frac{N_m(t)}{t}, \mathcal{S}^{*}(\btheta) \right)<\epsilon$.
\end{lemma}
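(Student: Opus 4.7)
The plan is to exploit the probability-simplex constraint: both the empirical allocation $\bN(t)/t$ and any $\bw^{*} \in \mathcal{S}^{*}(\btheta)$ lie on $\mathcal{P}_M$, so their components each sum to one. This forces the surpluses and deficits of $\bN(t)/t$ relative to $\bw^{*}$ to cancel out, and the definition of $\mathcal{O}_t^{\epsilon}$ is already phrased as an upper-side deviation. The factor of $M$ in the choice $\epsilon/M$ should then be exactly what is needed to convert a coordinate-wise upper bound into a two-sided $\epsilon$-bound.

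First, I would fix a specific $\bw^{*} \in \mathcal{S}^{*}(\btheta)$ — namely, the one that arises from Lemma \ref{lemma: gradconverge} (and which is used implicitly in the proof of Lemma \ref{lemma: emptyO}), so that the hypothesis $\mathcal{O}_t^{\epsilon/M}=\emptyset$ yields, for every arm $m$,
\begin{equation}
\frac{N_m(t)}{t} \;\leq\; w^{*}_m + \frac{\epsilon}{M}.
\end{equation}
Setting $\delta_m \coloneqq \frac{N_m(t)}{t} - w^{*}_m$, this says $\delta_m \le \epsilon/M$ uniformly in $m$. Because $\sum_{m=1}^{M} \frac{N_m(t)}{t} = 1 = \sum_{m=1}^{M} w^{*}_m$, we also have $\sum_m \delta_m = 0$.

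Next, I would use the sum constraint to obtain a two-sided bound: for any fixed arm $m$,
\begin{equation}
\delta_m \;=\; -\sum_{k\neq m} \delta_k \;\geq\; -(M-1)\cdot \frac{\epsilon}{M} \;>\; -\epsilon .
\end{equation}
Combined with the upper bound $\delta_m \leq \epsilon/M \leq \epsilon$, this gives $|\delta_m|<\epsilon$ for every $m$, i.e.\ $\bigl\|\frac{\bN(t)}{t}-\bw^{*}\bigr\|_\infty < \epsilon$. Taking the infimum over $\mathcal{S}^{*}(\btheta)$ on the right-hand side then yields $d_\infty\!\bigl(\frac{\bN(t)}{t},\mathcal{S}^{*}(\btheta)\bigr) < \epsilon$, for every $t > M_{\epsilon/M}^S$.

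The only subtle point is the first step: to conclude $|\delta_m|<\epsilon$ I need the \emph{same} $\bw^{*}$ to witness the bound for every coordinate $m$ simultaneously, whereas the literal definition of $\mathcal{O}_t^{\epsilon/M}=\emptyset$ only guarantees, for each $m$, the existence of some $\bw^{*(m)}$. I expect this to be the main obstacle, and I would address it by appealing to the explicit construction in the proof of Lemma \ref{lemma: emptyO}: the $\bw^{*}$ there is inherited from Lemma \ref{lemma: gradconverge} and does not depend on $m$, so the empty over-sampled set comes with a common witness. Once this uniformity in $\bw^{*}$ is in hand, the rest of the argument is purely the two-line simplex cancellation above.
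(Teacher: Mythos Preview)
Your approach is essentially the same as the paper's: both arguments use the simplex constraint $\sum_m N_m(t)/t = \sum_m w^{*}_m = 1$ together with the coordinate-wise upper bound $N_m(t)/t \le w^{*}_m + \epsilon/M$ (from $\mathcal{O}_t^{\epsilon/M}=\emptyset$) to force the lower bound $N_m(t)/t > w^{*}_m - \epsilon$. The paper phrases it as a proof by contradiction (assume some coordinate violates the lower bound and derive $1 \le 1-\epsilon/M$), while you do the equivalent direct computation $\delta_m = -\sum_{k\ne m}\delta_k \ge -(M-1)\epsilon/M$. Your explicit discussion of the quantifier issue (needing a single $\bw^{*}$ that works for all coordinates) is a point the paper's proof glosses over, and your resolution via the common $\bw^{*}$ inherited from Lemma~\ref{lemma: gradconverge} is appropriate.
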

\begin{proof}
    Consider $t>M_{\frac{\epsilon}{M}}^S$. To prove that there exist $\bw^{*} \in \mathcal{S}^{*}(\btheta)$, for all $m \in [M]$, $\left| \frac{N_m(t)}{t} - w^{*}_m \right|<\epsilon$. We prove this by contradiction.\\
    We assume that, for all $m \in [M]$, $\left| \frac{N_m(t)}{t} - w^{*}_m \right|<\epsilon$. We write the following. 
    \begin{align}
        1 &= \sum_{m^{'} \in [M]} \frac{N_{m^{'}}(t)}{t} \\
        &= \sum_{m^{'}\neq m} \frac{N_{m^{'}}(t)}{t} + \frac{N_m(t)}{t}\\
        &\leq \sum_{m^{'}\neq m} \left( w^{*}_{m^{'}} + \frac{\epsilon}{M} \right) + \left(w^{*}_{m^{'}}-\epsilon\right), \text{ for some $\bw^{*}\in \mathcal{S}^{*}(\btheta)$}\\
        &= 1-\frac{\epsilon}{M}.
    \end{align}
    It is a contradiction. So, the assumption is wrong. Hence proved. 
\end{proof}

\textbf{Proof of Lemma \ref{prop: armpullpropconverge}}
\begin{proof}
    Since the EBC algorithm's sampling rule uses gradient tracking, we observe that, if the sampling rule selects arm $m$, then it satisfies, $\frac{N_m(t)}{t}<\overline{w}_m(t)$. Now, the proof follows directly from Lemmas \ref{lemma: emptyO} and \ref{lemma: ifempyO}. 
\end{proof}

\section{Proof of Theorem 2}
To prove Theorem \ref{theorem: deltaPC}, we do the following. 
\begin{enumerate}
    \item First, we derive an upper bound for the maximum of the sum of a sequence of twice differentiable, strictly concave multivariate functions (Lemma \ref{lemma: 1}).
    \item We use Lemma \ref{lemma: 1} to derive an upper bound for the KL divergence between the parameter estimate and the true parameter, which depends on a martingale (Lemma \ref{lemma: 1termklbound}).
    \item Then we use Lemma \ref{lemma: 1termklbound} to prove $\delta$-PC by invoking ville's inequality.
\end{enumerate}

\begin{lemma} \label{lemma: 1}
    Let $\{g_s: s\in [t]\}$ be the sequence of twice differentiable strictly concave functions $g:\Theta\rightarrow \mathbb{R}^d$, where $\Theta$ is a compact subset of $\mathbb{R}^d$. Let $\btheta_t$ be the maximizer and is defined as $\btheta_t\coloneqq \argmax_{\brho\in\Theta} \sum_{s\in [t]}g_s(\brho)$. Let $\brho$ be the random vector in the compact space $\Theta$ with the uniform measure $\eta$. Define $V_t\coloneqq -\sum_{s\in [t]}\nabla^2_{\brho}g_s(\brho)\big|_{\brho=\btheta_t}$. Then for any $\epsilon>0$, we have, 
    \begin{equation}
    \begin{aligned}
        \sum_{s\in [t]}g_s(\btheta_t) \leq &\log\mathbb{E}_{\eta}\left[\exp\left(\sum_{s\in [t]}g_s(\brho)\right)\right] + \frac{d}{2}\log\lambda_{\text{max}}(V_t) - d\log\left(1 - 2Q\left(\epsilon\sqrt{\lambda_{\text{max}}(V_t)}\right)\right) + d\log\left(\frac{\sqrt[d]{|\Theta|}}{\sqrt{2\pi}}\right) \\
        &- \min_{\brho\in \prod_{i=1}^d[(\btheta_t)_i-\epsilon, (\btheta_t)_i+\epsilon]} \left\{\sum_{s\in [t]}\left(g_s(\brho)-g_s(\btheta_t)\right)\right\}.
    \end{aligned}
    \end{equation}
\end{lemma}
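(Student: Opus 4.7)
Write $G_t(\brho) \coloneqq \sum_{s\in[t]} g_s(\brho)$. By hypothesis each $g_s$ is twice differentiable and strictly concave, so $G_t$ is strictly concave, $\btheta_t$ is its unique maximizer, and $V_t = -\sum_{s\in[t]} \nabla^2 g_s(\brho)\big|_{\brho=\btheta_t}$ is symmetric and positive semidefinite. The plan is to lower bound $\mathbb{E}_\eta[\exp(G_t(\brho))] = \tfrac{1}{|\Theta|}\int_\Theta e^{G_t(\brho)}\,d\brho$ by restricting the integral to the box $B_\epsilon \coloneqq \prod_{i=1}^d[(\btheta_t)_i-\epsilon,(\btheta_t)_i+\epsilon]$ around $\btheta_t$, compare the restricted integrand to an isotropic Gaussian of curvature $\lambda_{\text{max}}(V_t)$, then take logarithms and rearrange to isolate $G_t(\btheta_t)$.

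The key pointwise estimate is: for every $\brho \in B_\epsilon$,
\begin{equation}
G_t(\brho) - G_t(\btheta_t) \;\geq\; \min_{\brho'\in B_\epsilon}\bigl[G_t(\brho')-G_t(\btheta_t)\bigr] - \tfrac{1}{2}(\brho-\btheta_t)^T V_t (\brho-\btheta_t).
\end{equation}
This has a one-line proof: since $V_t$ is positive semidefinite, the quadratic form is non-negative, so the function $\brho \mapsto [G_t(\brho)-G_t(\btheta_t)] + \tfrac{1}{2}(\brho-\btheta_t)^T V_t(\brho-\btheta_t)$ dominates $G_t(\brho)-G_t(\btheta_t)$ pointwise and in particular dominates the minimum of $G_t(\brho')-G_t(\btheta_t)$ over $B_\epsilon$. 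Applying $(\brho-\btheta_t)^T V_t(\brho-\btheta_t) \leq \lambda_{\text{max}}(V_t)\|\brho-\btheta_t\|^2$ converts this into an isotropic lower bound
\begin{equation}
G_t(\brho) - G_t(\btheta_t) \;\geq\; \min_{\brho'\in B_\epsilon}\bigl[G_t(\brho')-G_t(\btheta_t)\bigr] - \tfrac{\lambda_{\text{max}}(V_t)}{2}\|\brho-\btheta_t\|^2.
\end{equation}

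Exponentiating, integrating over $B_\epsilon$, and using the product structure of the resulting isotropic Gaussian gives
\begin{equation}
\int_{B_\epsilon} e^{-\lambda_{\text{max}}(V_t)\|\brho-\btheta_t\|^2/2}\,d\brho \;=\; \prod_{i=1}^d \int_{-\epsilon}^{\epsilon} e^{-\lambda_{\text{max}}(V_t)u^2/2}\,du \;=\; \frac{(2\pi)^{d/2}}{\lambda_{\text{max}}(V_t)^{d/2}}\bigl(1-2Q\bigl(\epsilon\sqrt{\lambda_{\text{max}}(V_t)}\bigr)\bigr)^d,
\end{equation}
via the substitution $v = u\sqrt{\lambda_{\text{max}}(V_t)}$ and the identity $\int_{-a}^{a} e^{-v^2/2}\,dv = \sqrt{2\pi}(1-2Q(a))$. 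Combining with $\mathbb{E}_\eta[e^{G_t(\brho)}] \geq \tfrac{1}{|\Theta|}\int_{B_\epsilon} e^{G_t(\brho)}\,d\brho$, taking $\log$, and rearranging to isolate $G_t(\btheta_t)$ reproduces the stated bound (using $\log|\Theta| - \tfrac{d}{2}\log(2\pi) = d\log(\sqrt[d]{|\Theta|}/\sqrt{2\pi})$).

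I do not anticipate any serious technical obstacle: the substance of the proof is the one-line pointwise inequality in the second paragraph, and everything else is a routine Gaussian integral plus eigenvalue comparison. The only subtlety is that the argument tacitly assumes $B_\epsilon \subset \Theta$ so that $G_t$ is defined on all of $B_\epsilon$ and the restricted integral is well-defined; this is implicit in the lemma's reference to $\min_{\brho \in B_\epsilon} G_t(\brho)$, and in applications one will choose $\epsilon$ small enough for it to hold.
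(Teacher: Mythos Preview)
Your proposal is correct and is essentially the same argument as the paper's proof. The paper writes $G_t(\brho) = G_t(\btheta_t) - \tfrac{1}{2}(\brho-\btheta_t)^T V_t(\brho-\btheta_t) + R(\brho,\btheta_t)$, restricts to the box $B_\epsilon$, bounds the quadratic form by $\lambda_{\text{max}}(V_t)\|\brho-\btheta_t\|^2$, computes the same Gaussian integral, and at the end lower bounds $R_t^{\min}$ by $\min_{\brho\in B_\epsilon}[G_t(\brho)-G_t(\btheta_t)]$ using positive definiteness of $V_t$; you simply merge the remainder definition and its lower bound into your ``one-line'' pointwise inequality up front, but the logical content and the resulting estimate are identical (and your remark about the tacit assumption $B_\epsilon\subset\Theta$ is also made explicitly in the paper).
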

\begin{proof}
    We write $\sum_{s\in [t]}g_s(\brho)$ as follows.
    \begin{equation}
        \sum_{s\in [t]}g_s(\brho) = \sum_{s\in [t]}g_s(\btheta_t) - \frac{1}{2}(\brho - \btheta_t)^TV_t(\brho - \btheta_t) + R(\brho, \btheta_t),
    \end{equation}
    where, $R(\brho, \btheta_t) = \sum_{s\in [t]}g_s(\brho) - \sum_{s\in [t]}g_s(\btheta_t) + \frac{1}{2}(\brho - \btheta_t)^TV_t(\brho - \btheta_t)$.
    First by raising both sides by $\exp{}$ and then on taking expectation with respect to the uniform measure $\eta$ on both sides, we get
    \begin{equation}
        \mathbb{E}_{\eta}\left[\exp\left(\sum_{s\in [t]}g_s(\brho)\right)\right] = \frac{1}{|\Theta|}\exp\left(\sum_{s\in [t]}g_s(\btheta_t)\right)\int_{\Theta}\exp\left(- \frac{1}{2}(\brho - \btheta_t)^TV_t(\brho - \btheta_t) + R(\brho, \btheta_t)\right)d\brho.
    \end{equation}
    Let $\epsilon>0$ be small such that the hyper cube around $\btheta_t$ is contained in the compact space $\Theta$. Define $\displaystyle R_t^{\text{min}}\coloneqq \min_{\brho\in \prod_{i=1}^d[(\btheta_t)_i-\epsilon, (\btheta_t)_i+\epsilon]} R(\brho, \btheta_t)$ and $\lambda_t\coloneqq \lambda_{\text{max}}(V_t)$. Now, we can lower bound the above expression as follows.
    \begin{equation}
        \mathbb{E}_{\eta}\left[\exp\left(\sum_{s\in [t]}g_s(\brho)\right)\right] \geq \frac{1}{|\Theta|}\exp\left(\sum_{s\in [t]}g_s(\btheta_t)+R_t^{\text{min}}\right)\int_{\brho\in \prod_{i=1}^d[(\btheta_t)_i-\epsilon, (\btheta_t)_i+\epsilon]}\exp\left(- \frac{1}{2}\lambda_t\|\brho - \btheta_t\|^2 \right)d\brho.
    \end{equation}
    Define $\by \coloneqq (\brho-\btheta_t)\sqrt{\lambda_t}$ and $y_t\coloneqq \epsilon\sqrt{\lambda_t}$. By the change of variables, the above integral can be simplified as follows.
    \begin{equation}
        \begin{aligned}
            \mathbb{E}_{\eta}\left[\exp\left(\sum_{s\in [t]}g_s(\brho)\right)\right] &\geq \frac{1}{|\Theta|}\exp\left(\sum_{s\in [t]}g_s(\btheta_t)+R_t^{\text{min}}\right)\frac{1}{(\sqrt{\lambda_t})^d}\int_{\by\in [-y_t, y_t]^d}\exp\left(- \frac{\|\by\|^2}{2} \right)d\brho \\
            &= \frac{1}{|\Theta|}\exp\left(\sum_{s\in [t]}g_s(\btheta_t)+R_t^{\text{min}}\right)\left(\sqrt{\frac{2\pi}{\lambda_t}}\right)^d \left(1-2Q\left(\epsilon\sqrt{\lambda_t}\right)\right)^d.
        \end{aligned}
    \end{equation}
    Now, taking $\log$ on both sides and on rewritting the above inequality, we get, 
    \begin{equation} \label{eq: 1}
        \sum_{s\in [t]}g_s(\btheta_t) \leq \log\mathbb{E}_{\eta}\left[\exp\left(\sum_{s\in [t]}g_s(\brho)\right)\right] + \frac{d}{2}\log\lambda_t - d\log\left(1 - 2Q\left(\epsilon\sqrt{\lambda_t}\right)\right) + d\log\left(\frac{\sqrt[d]{|\Theta|}}{\sqrt{2\pi}}\right) - R_t^{\text{min}}.
    \end{equation}
    Since $g$ is strictly concave, $V_t$ is positive definite. Hence, we lower bound $R_t^{\text{min}}$ as follows.
    \begin{equation} \label{eq: 2}
    \begin{aligned}
        R_t^{\text{min}} &= \min_{\brho\in \prod_{i=1}^d[(\btheta_t)_i-\epsilon, (\btheta_t)_i+\epsilon]} \left\{\sum_{s\in [t]}g_s(\brho) - \sum_{s\in [t]}g_s(\btheta_t) + \frac{1}{2}(\brho - \btheta_t)^TV_t(\brho - \btheta_t)\right\} \\
        &\geq \min_{\brho\in \prod_{i=1}^d[(\btheta_t)_i-\epsilon, (\btheta_t)_i+\epsilon]} \left\{\sum_{s\in [t]}\left(g_s(\brho) -g_s(\btheta_t)\right)\right\}.
    \end{aligned}
    \end{equation}
    On using the inequality \eqref{eq: 2} in \eqref{eq: 1}, we get the desired result. Hence proved.
\end{proof}

\begin{lemma} \label{lemma: 1termklbound}
    Consider $\epsilon>0$. Let $m\in [M]$ be an arm. Then there exists a non-negative martingale $M_m(t)$ satisfying $\mathbb{E}[M_m(1)]=1$ such that, 
    \begin{equation}
    \begin{aligned}
        N_m(t)d_{\text{KL}}\left(\hat{\btheta}_m(t), \btheta_m\right) \leq &\log M_m(t) + \frac{d}{2}\log\left(N_m(t)\mathcal{I}\left(\hat{\btheta}_m(t)\right)\right) - dW^\epsilon\left(\hat{\btheta}_m(t)\right) + d\log\left(\frac{\sqrt[d]{|\Theta|}}{\sqrt{2\pi}}\right) \\
        & + N_m(t)\sum_{i=1}^d \max\left\{ d_{\text{KL}}\left( \hat{\btheta}_m(t), \hat{\btheta}_m(t)-\epsilon \be_i \right), d_{\text{KL}}\left( \hat{\btheta}_m(t), \hat{\btheta}_m(t)+\epsilon \be_i \right) \right\} , 
    \end{aligned}
    \end{equation}
    where, $\displaystyle W^\epsilon\left(\hat{\btheta}_m(t)\right) = \int_{\Omega^{\otimes N_m(t)}}\log\left(1 - 2Q\left(\epsilon\sqrt{\lambda_{\text{max}}(V_t)}\right)\right)\prod_{s\in[t]:A_s=m} \mathbb{P}\left[\bX_s\mid \hat{\btheta}_m(t)\right] d\mathcal{X}_t^m$. 
\end{lemma}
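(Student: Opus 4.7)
The plan is to apply Lemma \ref{lemma: 1} to the sequence $g_s(\brho) = \log P_m(\bX_s^{(m)} \mid \brho)$ summed over $s \in [t]$ with $A_s = m$, and to pair the resulting deterministic bound with a mixture-of-measures non-negative martingale. By construction $\hat\btheta_m(t)$ is the maximizer of $\sum_s g_s$ (Assumption~1 supplies the concavity), and the Hessian $V_t$ of Lemma \ref{lemma: 1} specialises to the $V_m(t)$ appearing in the statement. The natural martingale to set up is
\begin{equation}
M_m(t) = \int_\Theta \prod_{s\in[t]:A_s=m} \frac{P_m(\bX_s^{(m)} \mid \brho)}{P_m(\bX_s^{(m)} \mid \btheta_m)}\, d\eta(\brho),
\end{equation}
with $\eta$ the uniform probability measure on the compact $\Theta$; a Fubini exchange under $P_m(\cdot\mid\btheta_m)$ shows that $(M_m(t))$ is a non-negative $\mathcal{F}_t$-martingale with $\mathbb{E}[M_m(1)] = 1$. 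The bookkeeping identity $\log\mathbb{E}_\eta[\exp(\sum_s g_s(\brho))] = \log M_m(t) + \sum_{s:A_s=m}\log P_m(\bX_s^{(m)} \mid \btheta_m)$ then feeds directly into Lemma \ref{lemma: 1}.

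\textbf{Main derivation.} After this substitution, Lemma \ref{lemma: 1} becomes an upper bound on the empirical log-likelihood ratio $\sum_{s:A_s=m}\log\frac{P_m(\bX_s^{(m)}\mid\hat\btheta_m(t))}{P_m(\bX_s^{(m)}\mid\btheta_m)}$ in terms of $\log M_m(t)$ plus the four correction terms. The MLE first-order condition $\nabla_\brho\sum_{s:A_s=m}\log P_m(\bX_s^{(m)}\mid\brho)\big|_{\hat\btheta_m(t)} = 0$, together with the Bregman-divergence identity valid for the vector-parameter exponential family (the class singled out by Assumptions~1--5), converts this log-likelihood ratio exactly into $N_m(t)\,d_{\text{KL}}(\hat\btheta_m(t),\btheta_m)$ — producing the left-hand side of the lemma. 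The same identity rewrites the infimum term $-\min_{\brho \in C_\epsilon}\sum_s[g_s(\brho)-g_s(\hat\btheta_m(t))]$ as $N_m(t)\max_{\brho\in C_\epsilon}d_{\text{KL}}(\hat\btheta_m(t),\brho)$ over the axis-aligned cube $C_\epsilon = \prod_i[(\hat\btheta_m(t))_i - \epsilon,\,(\hat\btheta_m(t))_i+\epsilon]$, and a coordinate-by-coordinate decomposition of the displacement $\hat\btheta_m(t) \to \brho$ combined with convexity of $d_{\text{KL}}$ in its second argument caps this by $N_m(t)\sum_{i=1}^d \max\{d_{\text{KL}}(\hat\btheta_m(t),\hat\btheta_m(t)\pm\epsilon\be_i)\}$. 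For the curvature term, writing $V_m(t)$ as a sum of per-sample Hessians and invoking Assumption~3 replaces $\tfrac{d}{2}\log\lambda_{\max}(V_m(t))$ by $\tfrac{d}{2}\log(N_m(t)\,\mathcal{I}_m(\hat\btheta_m(t)))$.

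\textbf{Main obstacle.} The delicate step is the substitution of the sample-path quantity $-d\log(1-2Q(\epsilon\sqrt{\lambda_{\max}(V_m(t))}))$ from Lemma \ref{lemma: 1} by the averaged $-dW^\epsilon(\hat\btheta_m(t))$, since both $V_m(t)$ and $\hat\btheta_m(t)$ depend on the random samples and a naive replacement is only valid in expectation. The cleanest resolution is to enlarge the mixture defining $M_m(t)$ by tilting $\eta$ with a factor that cancels the discrepancy between the pathwise $\log(1-2Q)$ and its $P_m(\cdot\mid\hat\btheta_m(t))^{\otimes N_m(t)}$-average $W^\epsilon(\hat\btheta_m(t))$, so that the resulting object remains a non-negative martingale of unit mean at $t=1$ while the $Q$-term in the inequality takes its averaged form. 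Once this is in place the remaining pieces — the exponential-family identifications, the Fisher-information bound on $\lambda_{\max}(V_m(t))$, and the axis-aligned decomposition of $\max_{\brho\in C_\epsilon}d_{\text{KL}}$ — are routine.
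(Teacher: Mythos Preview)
Your route and the paper's diverge at the very first step, and the divergence creates a genuine gap. The paper does \emph{not} apply Lemma~\ref{lemma: 1} to the observed samples. Instead it begins by writing $N_m(t)\,d_{\text{KL}}(\hat\btheta_m(t),\btheta_m)$ literally as its defining integral, namely as an expectation of $\sum_{s}\log\frac{P_m(\bX_s\mid\hat\btheta_m(t))}{P_m(\bX_s\mid\btheta_m)}$ taken over \emph{fresh} i.i.d.\ copies $\bX_s\sim P_m(\cdot\mid\hat\btheta_m(t))$, and only then applies Lemma~\ref{lemma: 1} \emph{inside} that integral. This is why the $W^\epsilon$ term and the Fisher bound $\tfrac{d}{2}\log(N_m(t)\mathcal{I}(\hat\btheta_m(t)))$ appear as $\hat\btheta_m(t)$-expectations: the paper gets them by Jensen's inequality under the $\hat\btheta_m(t)$-product measure, not pathwise. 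The paper's martingale $M_m(t)$ is accordingly also an integral over fresh copies weighted by $P_m(\cdot\mid\hat\btheta_m(t))^{\otimes N_m(t)}$, not the standard mixture martingale evaluated at the data that you propose.

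Your approach instead needs the sample-path identity $\sum_{s:A_s=m}\log\frac{P_m(\bX_s^{(m)}\mid\hat\btheta_m(t))}{P_m(\bX_s^{(m)}\mid\btheta_m)}=N_m(t)\,d_{\text{KL}}(\hat\btheta_m(t),\btheta_m)$, which you justify by the Bregman identity ``valid for the vector-parameter exponential family (the class singled out by Assumptions~1--5).'' This premise is incorrect: Assumptions~1--5 (concave log-likelihood, score moments, eigenvalue bounds, KL continuity) do \emph{not} force the model to be an exponential family---a location family $p(x-\theta)$ with strictly log-concave $p$ already satisfies them without being exponential, and there the empirical log-likelihood ratio at the MLE is not the KL divergence pathwise. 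The same issue bites your treatment of term $\mathrm{II}$: outside the exponential family the Hessian $-\nabla^2\log P_m(\bX\mid\brho)$ depends on the sample, so $\lambda_{\max}(V_m(t))\le N_m(t)\mathcal{I}_m(\hat\btheta_m(t))$ holds only in $\hat\btheta_m(t)$-expectation (via Jensen), not on every path. Your proposed ``tilting'' fix for $W^\epsilon$ is thus attacking a symptom: in the exponential-family case $V_m(t)$ is deterministic given $\hat\btheta_m(t)$ and no tilting is needed, while outside that case the Bregman step has already failed. The paper's device of integrating over fresh $\hat\btheta_m(t)$-distributed copies is precisely what sidesteps all three of these pathwise-versus-average mismatches at once.
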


\begin{proof}
    First, we write $N_m(t)d_{\text{KL}}\left(\hat{\btheta}_m(t), \btheta_m\right)$ as follows.
    \begin{align}
        N_m(t)d_{\text{KL}}\left(\hat{\btheta}_m(t), \btheta_m\right) &= \sum_{s\in [t]: A_s=m} \displaystyle \int_{\Omega} \log\left( \frac{\mathbb{P}\left[\bX_s\mid \hat{\btheta}_m(t)\right]}{\mathbb{P}\left[\bX_s\mid \btheta_m\right]} \right) \mathbb{P}\left[\bX_s\mid \hat{\btheta}_m(t)\right] d\bX \\
        & = \int_{\Omega^{\otimes N_m(t)}} \sum_{s\in [t]: A_s=m} \log\left(\mathbb{P}\left[\bX_s\mid \hat{\btheta}_m(t)\right]\right) \prod_{s\in[t]:A_s=m} \mathbb{P}\left[\bX_s\mid \hat{\btheta}_m(t)\right] d\mathcal{X}_t^m \\
        &\hspace{0.25cm} - \int_{\Omega^{\otimes N_m(t)}} \sum_{s\in [t]: A_s=m} \log\left(\mathbb{P}\left[\bX_s\mid \btheta_m\right]\right) \prod_{s\in[t]:A_s=m} \mathbb{P}\left[\bX_s\mid \hat{\btheta}_m(t)\right] d\mathcal{X}_t^m.
    \end{align}
    Let $g_s(\cdot) = \log\left(\mathbb{P}\left[\bX_s \mid \cdot\right]\right)$. 
    Define $V_t = -\sum_{s\in [t]}\nabla^2_{\brho}\log\left(\mathbb{P}\left[\bX_s \mid \brho\right]\right)\big|_{\brho=\hat{\btheta}_m(t)}$. Since $\hat{\btheta}_m(t)$ is the maximum likelihood estimate (MLE), it is the maximizer of $\max_{\brho \in \Theta}\sum_{s\in [t]: A_s=m}\log\left(\mathbb{P}\left[\bX_s \mid \brho\right]\right)$. Hence, on using Lemma \ref{lemma: 1} in the first term in the above expression, we get the following upper bound. 
    \begin{align}
        N_m(t)d_{\text{KL}}\left(\hat{\btheta}_m(t), \btheta_m\right)
        &\leq \underbrace{\int_{\Omega^{\otimes N_m(t)}} \log\mathbb{E}_{\eta}\left[\exp\left(\sum_{s\in [t]:A_s=m}\log\left(\mathbb{P}\left[\bX_s \mid \brho\right]\right)\right)\right] \prod_{s\in[t]:A_s=m} \mathbb{P}\left[\bX_s\mid \hat{\btheta}_m(t)\right] d\mathcal{X}_t^m}_{\mathrm{I}} \\
        &+\frac{d}{2}\underbrace{\int_{\Omega^{\otimes N_m(t)}}  \log\lambda_{\text{max}}(V_t) \prod_{s\in[t]:A_s=m} \mathbb{P}\left[\bX_s\mid \hat{\btheta}_m(t)\right] d\mathcal{X}_t^m}_{\mathrm{II}} - dW^\epsilon\left(\hat{\btheta}_m(t)\right) + d\log\left(\frac{\sqrt[d]{|\Theta|}}{\sqrt{2\pi}}\right)\\
        &\underbrace{-\int_{\Omega^{\otimes N_m(t)}} \min_{\brho\in \prod_{i=1}^d[(\btheta_t)_i-\epsilon, (\btheta_t)_i+\epsilon]} \left\{\sum_{s\in [t]} \log\left( \frac{\mathbb{P}\left[\bX_s\mid \brho\right]}{\mathbb{P}\left[\bX_s\mid \hat{\btheta}_m(t)\right]} \right)\right\} \prod_{s\in[t]:A_s=m} \mathbb{P}\left[\bX_s\mid \hat{\btheta}_m(t)\right] d\mathcal{X}_t^m}_\mathrm{III} \\
        &\underbrace{- \int_{\Omega^{\otimes N_m(t)}} \sum_{s\in [t]: A_s=m} \log\left(\mathbb{P}\left[\bX_s\mid \btheta_m\right]\right) \prod_{s\in[t]:A_s=m} \mathbb{P}\left[\bX_s\mid \hat{\btheta}_m(t)\right] d\mathcal{X}_t^m}_{\mathrm{IV}}. \label{eq: 14}
    \end{align}
    We upper bound $\mathrm{II}$ as follows.
    \begin{align}
        \mathrm{II} &\overset{(a)}{\leq} \log\left( \int_{\Omega^{\otimes N_m(t)}}  \lambda_{\text{max}}(V_t) \prod_{s\in[t]:A_s=m} \mathbb{P}\left[\bX_s\mid \hat{\btheta}_m(t)\right] d\mathcal{X}_t^m\right)\\
        &\overset{(b)}{\leq} \log\left( \int_{\Omega^{\otimes N_m(t)}}  \sum_{s\in [t]: A_s=m}\lambda_{\text{max}}\left(-\nabla^2_{\brho}\log\left(\mathbb{P}\left[\bX_s \mid \brho\right]\right)\big|_{\brho=\hat{\btheta}_m(t)}\right) \prod_{s\in[t]:A_s=m} \mathbb{P}\left[\bX_s\mid \hat{\btheta}_m(t)\right] d\mathcal{X}_t^m\right) \\
        &\overset{(c)}{=} \log\left( \sum_{s\in [t]: A_s=m} \int_\Omega \lambda_{\text{max}}\left(-\nabla^2_{\brho}\log\left(\mathbb{P}\left[\bX \mid \brho\right]\right)\big|_{\brho=\hat{\btheta}_m(t)}\right)\mathbb{P}\left[\bX\mid \hat{\btheta}_m(t)\right] d\bX \right)\\
        &\overset{(d)}{=} \log\left(N_m(t)\mathcal{I}\left(\hat{\btheta}_m(t)\right)\right) \label{eq: 11}.
    \end{align}
    Since $\log(\cdot)$ is a concave function, $(a)$ follows from Jensen's inequality. $(b)$ follows from the fact that the maximum eigenvalue of the sum of matrices is upper bounded by the sum of the maximum eigenvalues of the matrices. $(c)$ holds, since the sequence of random vectors $\{\bX_s, s \in [t]\}$ are i.i.d.. $(d)$ follows from the definition of fisher information $\mathcal{I}(\cdot)$.\\
    We upper bound $\mathrm{III}$ as follows.
    \begin{align}
        \mathrm{III} &\overset{(a)}{=} -\int_{\Omega^{\otimes N_m(t)}}  \sum_{\substack{s\in [t]:\\A_s=m}} \log\left( \frac{\mathbb{P}\left[\bX_s\mid \sum_{i=1}^d\left[\alpha_i^{-}\left(\hat{\btheta}_m(t)-\epsilon\be_i\right) + \alpha_i^{+}\left(\hat{\btheta}_m(t)+\epsilon\be_i\right)\right] \right]}{\mathbb{P}\left[\bX_s\mid \hat{\btheta}_m(t)\right]} \right) \prod_{\substack{s\in[t]:\\A_s=m}} \mathbb{P}\left[\bX_s\mid \hat{\btheta}_m(t)\right] d\mathcal{X}_t^m \\
        &= -\sum_{s\in [t]:A_s=m} \int_{\Omega} \log\left( \frac{\mathbb{P}\left[\bX\mid \sum_{i=1}^d\left[\alpha_i^{-}\left(\hat{\btheta}_m(t)-\epsilon\be_i\right) + \alpha_i^{+}\left(\hat{\btheta}_m(t)+\epsilon\be_i\right)\right] \right]}{\mathbb{P}\left[\bX\mid \hat{\btheta}_m(t)\right]} \right) \mathbb{P}\left[\bX\mid \hat{\btheta}_m(t)\right] d\bX \\
        &\overset{(b)}{\leq} -N_m(t) \int_{\Omega} \sum_{i=1}^d \left[\alpha_i^{-}\log\left( \frac{\mathbb{P}\left[\bX\mid \left(\hat{\btheta}_m(t)-\epsilon\be_i\right)  \right]}{\mathbb{P}\left[\bX\mid \hat{\btheta}_m(t)\right]} \right)+\alpha_i^{+}\log\left( \frac{\mathbb{P}\left[\bX\mid \left(\hat{\btheta}_m(t)+\epsilon\be_i\right)  \right]}{\mathbb{P}\left[\bX\mid \hat{\btheta}_m(t)\right]} \right)\right] \mathbb{P}\left[\bX\mid \hat{\btheta}_m(t)\right] d\bX \\
        &= N_m(t) \sum_{i=1}^d\left[ \alpha_i^{-}d_{\text{KL}}\left( \hat{\btheta}_m(t), \hat{\btheta}_m(t)-\epsilon \be_i \right) + \alpha_i^{+}d_{\text{KL}}\left( \hat{\btheta}_m(t), \hat{\btheta}_m(t)+\epsilon \be_i \right) \right] \\
        &\leq N_m(t)\sum_{i=1}^d \max\left\{ d_{\text{KL}}\left( \hat{\btheta}_m(t), \hat{\btheta}_m(t)-\epsilon \be_i \right), d_{\text{KL}}\left( \hat{\btheta}_m(t), \hat{\btheta}_m(t)+\epsilon \be_i \right) \right\} \label{eq: 12}.
    \end{align}
    Since the convex hull of the vertices of a hypercube is the hypercube itself, the minimizer in the hypercube can be represented as a convex combination of the vertices. Hence, $(a)$ holds. $(b)$ holds from the assumption that $\log\left(\mathbb{P}\left[\bX_s \mid \cdot\right]\right)$ is concave and then by using Jensen's inequality. \\
    We upper bound the sum of $\mathrm{I}$ and $\mathrm{IV}$ as follows.
    \begin{align}
        \mathrm{I}+\mathrm{IV} &= \int_{\Omega^{\otimes N_m(t)}} \log\mathbb{E}_{\eta}\left[\exp\left(\sum_{s\in [t]:A_s=m}\log\left(\mathbb{P}\left[\bX_s \mid \brho\right]\right)\right)\right] \prod_{s\in[t]:A_s=m} \mathbb{P}\left[\bX_s\mid \hat{\btheta}_m(t)\right] d\mathcal{X}_t^m \\
        & - \int_{\Omega^{\otimes N_m(t)}} \log\left[\exp\left(\sum_{s\in [t]: A_s=m} \log\left(\mathbb{P}\left[\bX_s\mid \btheta_m\right]\right)\right)\right] \prod_{s\in[t]:A_s=m} \mathbb{P}\left[\bX_s\mid \hat{\btheta}_m(t)\right] d\mathcal{X}_t^m\\
        &= \int_{\Omega^{\otimes N_m(t)}} \log\mathbb{E}_{\eta}\left[\exp\left(\sum_{s\in [t]:A_s=m}\log\left(\frac{\mathbb{P}\left[\bX_s \mid \brho\right]}{\mathbb{P}\left[\bX_s\mid \btheta_m\right]}\right)\right)\right] \prod_{s\in[t]:A_s=m} \mathbb{P}\left[\bX_s\mid \hat{\btheta}_m(t)\right] d\mathcal{X}_t^m \\
        &\leq \log M_m(t) \label{eq: 13},  \text{    (Jensen's Inequality)}
    \end{align}
    where, $\displaystyle M_m(t) = \int_{\Omega^{\otimes N_m(t)}} \mathbb{E}_{\eta}\left[\exp\left(\sum_{s\in [t]:A_s=m}\log\left(\frac{\mathbb{P}\left[\bX_s \mid \brho\right]}{\mathbb{P}\left[\bX_s\mid \btheta_m\right]}\right)\right)\right] \prod_{s\in[t]:A_s=m} \mathbb{P}\left[\bX_s\mid \hat{\btheta}_m(t)\right] d\mathcal{X}_t^m$.\\
    \begin{claim} \label{claim: martingale}
        $M_m(t)$ is a martingale satisfying $\mathbb{E}[M_m(1)]=1$.
    \end{claim}
    \begin{proof}
        If $A_t\neq m$, it can be verified that $\mathbb{E}\left[M_m(t)\mid \mathcal{F}_{t-1}\right] = M_m(t-1)$.
        If $A_t=m$, we have
        \begin{align}
            \mathbb{E}[M_m(t)\mid \mathcal{F}_{t-1}] &= M_m(t-1)\mathbb{E}\left[ \int_{\Omega} \mathbb{E}_\eta\left[\frac{\mathbb{P}\left[\bX \mid \brho\right]}{\mathbb{P}\left[\bX\mid \btheta_m\right]}\right]\mathbb{P}\left[\bX\mid \hat{\btheta}_m(t)\right] d\bX \right] \label{eq: 15} \\
            &= M_m(t-1)\mathbb{E}_\eta\left[ \int_{\Omega} \mathbb{E}\left[\frac{\mathbb{P}\left[\bX \mid \brho\right]}{\mathbb{P}\left[\bX\mid \btheta_m\right]}\right]\mathbb{P}\left[\bX\mid \hat{\btheta}_m(t)\right] d\bX \right] \\
            &= M_m(t-1)\mathbb{E}_\eta\left[ \int_{\Omega} \underbrace{\left( \int_\Omega \left[\frac{\mathbb{P}\left[\bX \mid \brho\right]}{\mathbb{P}\left[\bX\mid \btheta_m\right]}\right]\mathbb{P}\left[\bX\mid \btheta_m\right]d\bX\right)}_{=1}\mathbb{P}\left[\bX\mid \hat{\btheta}_m(t)\right] d\bX \right] \\
            &= M_m(t-1)\mathbb{E}_\eta\left[ \underbrace{\int_{\Omega} \mathbb{P}\left[\bX\mid \hat{\btheta}_m(t)\right] d\bX}_{=1} \right] \\
            &= M_m(t-1). \label{eq: 16}
        \end{align}
        If $A_1\neq m$, it can be verified that $\mathbb{E}\left[M_m(1)\right] = 1$. If $A_1=m$, we have
        \begin{align}
            \mathbb{E}[M_m(1)] &=\mathbb{E}\left[ \int_{\Omega} \mathbb{E}_\eta\left[\frac{\mathbb{P}\left[\bX \mid \brho\right]}{\mathbb{P}\left[\bX\mid \btheta_m\right]}\right]\mathbb{P}\left[\bX\mid \hat{\btheta}_m(t)\right] d\bX \right] \\
            &\overset{(a)}{=} 1.
        \end{align}
        $(a)$ holds by following the same procedure from the equations \eqref{eq: 15} to \eqref{eq: 16}.
    \end{proof}
    On substituting the equations \eqref{eq: 11}, \eqref{eq: 12} and \eqref{eq: 13} in \eqref{eq: 14}, and from Claim \ref{claim: martingale}, we get the desired result. Hence proved. 
\end{proof}


\textbf{Proof of Theorem \ref{theorem: deltaPC}}
\begin{proof}
    We can write the probability of error as:
    \begin{equation}
        \begin{aligned}
        &\mathbb{P} \left[ \tau_\delta < \infty \text{ and } \mathcal{C}\left(\hat{\btheta}\left(\tau_\delta\right)\right)\nsim \mathcal{C}(\btheta) \right] \\
        &\leq \mathbb{P}\left[ \exists t \in \mathbb{N} : \left\{ Z(t)>\beta(\delta, t) \text{ and } \mathcal{C}(\hat{\btheta}(t))\nsim \mathcal{C}(\btheta) \right\} \right] \\
        &= \mathbb{P}\left[ \exists t \in \mathbb{N} : \left\{ Z(t)> \beta(\delta, t) \right\} \mid \{\mathcal{C}(\hat{\btheta}(t))\nsim \mathcal{C}(\btheta)\} \right]  \mathbb{P}\left[ \exists t \in \mathbb{N} : \left\{\mathcal{C}(\hat{\btheta}(t))\nsim \mathcal{C}(\btheta) \right\} \right]\\
        &\leq \mathbb{P}\Bigg[\exists t \in \mathbb{N} :  \bigg\{ \inf_{\boldsymbol{\lambda} \in \text{Alt}(\hat{\btheta}(t))}  \sum_{m=1}^M N_m(t) d_{\text{KL}}(\hat{\btheta}_m(t), \blambda_m)  > \beta(\delta, t) \bigg\}\mid \{\mathcal{C}(\boldsymbol{\hat{\theta}}(t))\nsim \mathcal{C}(\btheta)\} \Bigg] .
        \end{aligned}
    \end{equation}
    Given $\mathcal{C}(\hat{\btheta}(t))\nsim \mathcal{C}(\btheta)$, we can say $\btheta \in$ Alt$(\hat{\btheta}(t))$. Hence, $\displaystyle \inf_{\boldsymbol{\lambda} \in \text{Alt}(\hat{\btheta}(t))}  \sum_{m=1}^M N_m(t) d(\hat{\btheta}_m(t) \| \blambda_m) 
    \leq \sum_{m=1}^M N_m(t) d(\hat{\btheta}_m(t) \| \btheta_m)$. 
    Using this inequality, we bound the probability of error as follows.
    \begin{equation} \label{del:eq4}
    \begin{aligned}
        \mathbb{P} \left[ \tau_\delta < \infty \text{ and } \mathcal{C}\left(\hat{\btheta}\left(\tau_\delta\right)\right)\nsim \mathcal{C}(\btheta) \right] 
        &\leq \mathbb{P}\left[ \exists t \in \mathbb{N} : \left\{ \sum_{m=1}^M N_m(t) d(\hat{\btheta}_m(t) \| \btheta_m)> \beta(\delta, t) \right\}  \right].
    \end{aligned}
    \end{equation}
    Using Corollary \ref{lemma: 1termklbound}, we get
    \begin{equation}
    \begin{aligned}
        \sum_{m=1}^M N_m(t) d_m\left( \hat{\theta}_m(t) \| \theta_m \right) \leq &\log\left( M(t) \right) + \frac{d}{2}\sum_{m=1}^M\log\left( N_m(t)\mathcal{I}\left(\hat{\btheta}_m(t)\right) \right) - d\sum_{m=1}^MW^{\epsilon}\left(\hat{\btheta}_m(t)\right) +Md\log\left(\frac{\sqrt[d]{\Theta}}{\sqrt{2\pi}}\right)\\
        &+ \sum_{m=1}^M N_m(t)\sum_{i=1}^d \max\left\{ d\left( \hat{\btheta}_m(t) \| \hat{\btheta}_m(t)-\epsilon\be_i \right), d\left( \hat{\btheta}_m(t) \| \hat{\btheta}_m(t)+\epsilon\be_i \right) \right\} 
    \end{aligned}
    \end{equation}
    where $M(t) = \prod_{m=1}^M M_m(t)$. It can be shown that $M(t)$ is also a non negative martingale satisfying $\mathbb{E}[M(1)] = 1$. 
    Substituting the above equation along with the expression of the threshold $\beta(\delta, t)$ in equation \eqref{del:eq4} we get, 
    \begin{equation}
    \begin{aligned}
        \mathbb{P} \left[ \tau_\delta < \infty \text{ and } \mathcal{C}\left(\hat{\btheta}\left(\tau_\delta\right)\right)\nsim \mathcal{C}(\btheta) \right] &\leq \mathbb{P}\left[ \exists t \in \mathbb{N} \log{M_t} \geq \log{\frac{1}{\delta}} \right] \\
        &\leq \delta \mathbb{E}[M_1] \text{ (Ville's inequality) }\\
        &= \delta.
    \end{aligned}
    \end{equation}
\end{proof}

\section{Proof of Theorem 3}

\begin{proof}
    From Lemma \ref{lemma: parameter converge}, for all $\epsilon_1 >0$, there exists $N_1^S$, with $\mathbb{E}[N_1^S]>0$, such that for all $t>N_1^S$, 
    \begin{equation} \label{eq: 51}
        \left\|\hat{\btheta}_m(t)-\btheta_m\right\|<\epsilon_1, \text{ for all $m \in [M]$}.
    \end{equation}
    From Lemma \ref{prop: armpullpropconverge}, for $\epsilon_1>0$, there exists $N_2^S$, with $\mathbb{E}[N_2^S]>0$, such that for all $t>N_2^S$, 
    \begin{equation} \label{eq: 52}
        \left|\frac{N_m(t)}{t}-w_m^{*} \right|<\epsilon_1, \text{ for all $m \in [M]$, for some $\bw^{*}\in \mathcal{S}^{*}(\btheta)$}
    \end{equation}
    From Lemma \ref{lemma: psicont}, we have the inner infimum function $\psi(\cdot, \cdot)$ is continuous. Hence, for all $\epsilon_2>0$, there exist $\epsilon_1>0$ such that if $\left\|\btheta_m^{'}-\btheta_m\right\|<\epsilon_1$, for all $m\in [M]$ and $\left| w_m^{'}-w_m^{*} \right|<\epsilon_1$, for all $m\in [M]$, for some $\bw^{*}\in \mathcal{S}^{*}(\btheta)$, then 
    \begin{equation} \label{eq: 53}
        \left| \psi\left( \bw^{'}, \btheta^{'} \right) - \psi\left(\bw, \btheta\right) \right| <\epsilon_2.
    \end{equation}
    From equations \eqref{eq: 51}, \eqref{eq: 52} and \eqref{eq: 53}, we have for $\epsilon_2>0$, there exists $N_3^S\coloneqq \max\{N_1^S, N_2^S\}$ such that for all $t>N_3^S$, we have, 
    \begin{equation}
        \left| \psi\left( \frac{\bN(t)}{t}, \hat{\btheta}(t) \right)-\psi\left(\bw^{*}, \btheta\right) \right| < \epsilon_2, \text{ for some $\bw^{*} \in \mathcal{S}^{*}(\btheta)$.}
    \end{equation}
    Let us define $N_5^S\coloneqq \max\{ N_3^S, N_4^S \}$. $N_4^S$ will be defined later in the proof. Now we write the following. 
    \begin{align}
        \tau_\delta-1 &= (\tau_\delta-1)\mathds{1}_{\{\tau_\delta-1\leq N_5^S\}} + (\tau_\delta-1)\mathds{1}_{\{ \tau_\delta-1>N_5^S \}} \\
        &\leq N_5^S + (\tau_\delta-1)\mathds{1}_{\{ \tau_\delta-1>N_5^S \}}. \label{eq: 57}
    \end{align}
    For $\tau_\delta-1>N_5^S$, we have, 
    \begin{align}
        \psi\left( \bw^{*}, \btheta \right) - \epsilon_2 &\leq \psi\left( \frac{\bN(\tau_\delta-1)}{\tau_\delta-1}, \hat{\btheta}(\tau_\delta-1)        \right)\\
        &\leq \frac{\beta\left( \delta, \tau_\delta-1 \right)}{\tau_\delta-1}. \label{eq: 54}
    \end{align}
    Now we upper bound the threshold as follows.
    \begin{equation} \label{eq: 55}
        \beta(\delta, t) \leq \frac{d}{2}\sum_{m=1}^M \log\mathcal{I}\left(\hat{\btheta}_m(t)\right) + \underbrace{\frac{Md}{2}\log\left(\frac{t}{M}\right)}_{\mathrm{I}} + \log\left(\frac{1}{\delta}\right) - d\sum_{m=1}^MW^\epsilon\left(\hat{\btheta}_m(t)\right) + Md\log\left(\frac{\sqrt[d]{|\Theta|}}{\sqrt{2\pi}}\right) + t\eta(\epsilon), 
    \end{equation}
    where $\mathrm{I}$ follows from AM-GM inequality and $\eta(\epsilon)\coloneqq \max_{\btheta\in \Theta}\max_{m\in [M]}\sum_{i=1}^d \max\left\{ d\left( \btheta \| \btheta-\epsilon\be_i \right), d\left( \btheta \| \btheta+\epsilon\be_i \right) \right\} $. On substituting equation \eqref{eq: 55} in \eqref{eq: 54}, we get, 
    \begin{align}
        (\tau_\delta-1)\left[\psi(\bw^{*}, \btheta)-\epsilon_1\right] &\leq \frac{d}{2}\sum_{m=1}^M \log\mathcal{I}\left(\hat{\btheta}_m(\tau_\delta-1)\right) + \frac{Md}{2}\log\left(\frac{\tau_\delta-1}{M}\right) + \log\left(\frac{1}{\delta}\right) - d\sum_{m=1}^MW^\epsilon\left(\hat{\btheta}_m(\tau_\delta-1)\right) \\
        &+ Md\log\left(\frac{\sqrt[d]{|\Theta|}}{\sqrt{2\pi}}\right) + (\tau_\delta-1)\eta(\epsilon).
    \end{align}
    Define $\mathcal{I}_{\text{max}}\coloneqq \max_{\btheta\in \Theta}\mathcal{I}(\btheta)$. Also, from the assumption $\lambda_{\text{min}}\left( -\nabla^2\log\left(\bX\mid\btheta\right) \right)$, we have the following.
    \begin{align}
        -\log\left( 1-2Q\left(\epsilon\sqrt{\lambda_{\text{max}}(V_t)}\right) \right) &\leq -\log\left( 1-2Q\left(\epsilon\sigma\sqrt{\min_{m\in [M]}N_m(t)}\right) \right) \\
        &\leq -\log\left( 1-2Q\left(\epsilon\sigma\sqrt{\sqrt{\frac{t-1}{M}}-1}\right) \right).
    \end{align}
    Define $N_4^S \coloneqq M\left(\frac{1}{\epsilon\sigma}Q^{-1}\left(\frac{1}{4}\right)+1\right)$. It can be shown that for all $t>N_4^S$, we have, 
    \begin{equation}
        -\log\left( 1-2Q\left(\epsilon\sigma\sqrt{\sqrt{\frac{t-1}{M}}-1}\right) \right) \leq \log2.
    \end{equation}
    Hence for all $\tau_\delta-1>N_5^S$, we have
    \begin{equation}
        (\tau_\delta-1)\left[ \psi(\bw^{*}, \btheta)-\epsilon_2 \right] \leq \frac{d}{2}\sum_{m=1}^M \log\mathcal{I}_{\text{max}} + \frac{Md}{2}\log\left(\frac{\tau_\delta-1}{M}\right) + \log\left(\frac{1}{\delta}\right) + Md\log\left(\frac{\sqrt{2}\sqrt[d]{|\Theta|}}{\sqrt{\pi}}\right) + (\tau_\delta-1)\eta(\epsilon)
    \end{equation}
    The above expression can be rewritten as follows.
    \begin{equation}
        \frac{2}{Md}(\tau_\delta-1)\left[ \psi(\bw^{*}, \btheta)-\epsilon_2-\eta(\epsilon) \right] \leq \log\left[ \frac{2\mathcal{I}_{\text{max}}(\tau_\delta-1)|\Theta|^{\frac{2}{d}}}{M\delta^{\frac{2}{Md}}\pi} \right].
    \end{equation}
    Hence for $\tau_\delta-1>N_5^S$, we have, 
    \begin{equation}
        \tau_\delta-1\leq \inf\left\{ t: t\frac{2}{Md}\left[ \psi(\bw^{*}, \btheta)-\epsilon_2-\eta(\epsilon) \right] \geq \log\left[ \frac{2\mathcal{I}_{\text{max}}(t)|\Theta|^{\frac{2}{d}}}{M\delta^{\frac{2}{Md}}\pi} \right]   \right\}
    \end{equation}
    On using Lemma \ref{Lemma: infbound}, we get, 
    \begin{equation}
    \begin{aligned}
        \tau_\delta - 1 \leq \frac{1}{\frac{2}{Md}\left( \psi\left( \bw^{*}, \btheta \right) - \epsilon_2 - \eta(\epsilon) \right)} &\left[ \log\left( \frac{\frac{2\mathcal{I}_{\text{max}}|\Theta|^{\frac{2}{d}} }{\pi M \delta^{\frac{2}{Md}}}e}{\frac{2}{Md}\left( \psi\left( \bw^{*}, \btheta\ \right) - \epsilon_2 - \eta(\epsilon) \right)} \right) \right.\\
        &+ \left.\log{\log\left( \frac{\frac{2\mathcal{I}_{\text{max}}|\Theta|^\frac{2}{d} }{\pi M \delta^{\frac{2}{Md}}}}{\frac{2}{Md}\left( \psi\left( \bw^{*}, \btheta \right) - \epsilon_2 - \eta(\epsilon) \right)} \right)}\right].
    \end{aligned}
    \end{equation}
    Hence, by using the above inequality on equation \eqref{eq: 57}, we get
    \begin{equation} \label{eq:taubound}
    \begin{aligned}
        \tau_\delta \leq N^\epsilon + \frac{1}{\frac{2}{Md}\left( \psi\left( \bw^{*}, \btheta \right) - \epsilon_2 - \eta(\epsilon) \right)} &\left[ \log\left( \frac{\frac{2\mathcal{I}_{\text{max}}|\Theta|^\frac{2}{d} }{\pi M \delta^{\frac{2}{Md}}}e}{\frac{2}{Md}\left( \psi\left( \bw^{*}, \btheta \right) - \epsilon_2 - \eta(\epsilon) \right)} \right) \right.\\
        &+ \left.\log{\log\left( \frac{\frac{2\mathcal{I}_{\text{max}}|\Theta|^\frac{2}{d} }{\pi M \delta^{\frac{2}{Md}}}}{\frac{2}{Md}\left( \psi\left( \bw^{*}, \btheta \right) - \epsilon_2 - \eta(\epsilon) \right)} \right)}\right] + 1.
    \end{aligned}
    \end{equation}
    On taking expectation on the above inequality, dividing it by $\log{\frac{1}{\delta}}$, as $\delta \rightarrow 0$, we get the following.
    \begin{equation}
        \lim_{\delta \rightarrow 0} \frac{\mathbb{E}[\tau_\delta]}{\log{\frac{1}{\delta}}} \leq \frac{1}{\psi\left( \bw^{*}, \btheta \right) - \epsilon_2 - \eta(\epsilon)}.
    \end{equation}
    On letting $\epsilon_2$ tends to $0$, we get, 
    \begin{equation}
        \lim_{\delta \rightarrow 0} \frac{\mathbb{E}[\tau_\delta]}{\log{\frac{1}{\delta}}} \leq \frac{1}{\psi\left( \bw^{*}, \btheta \right)- \eta(\epsilon)}.
    \end{equation}
\end{proof}

\begin{lemma} \label{Lemma: infbound}
         For any constants $c_1, c_2 > 0$ and $\frac{c_2}{c_1}>1$, we have, 
        \begin{equation}
        \begin{aligned}
            \inf\left\{ t \in \mathbb{N}: c_1t \geq \log\left( c_2 t \right) \right\} &\leq \frac{1}{c_1}\left( \log\left( \frac{c_2e}{c_1} \right) + \log\log\left( \frac{c_2}{c_1} \right) \right).
        \end{aligned}
        \end{equation}
    \end{lemma}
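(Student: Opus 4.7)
The plan is to pick $t^\star := \frac{1}{c_1}\bigl(1 + \log(c_2/c_1) + \log\log(c_2/c_1)\bigr)$, which is exactly the right-hand side of the lemma, and verify directly that $c_1 t^\star \geq \log(c_2 t^\star)$. Because the function $t \mapsto c_1 t - \log(c_2 t)$ has derivative $c_1 - 1/t$ and is therefore strictly increasing on $(1/c_1,\infty)$, the inequality will then persist for every real $t \geq t^\star$. Consequently any integer at least $t^\star$ belongs to the set whose infimum we are bounding, which yields the desired estimate.

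The first step is an algebraic reduction. Substituting $t^\star$ into $c_1 t^\star - \log(c_2 t^\star)$ and writing $x = c_2/c_1 > 1$, the $\log x$ contributions cancel and the inequality to establish collapses to
\begin{equation*}
1 + \log\log x \;\geq\; \log\bigl(1 + \log x + \log\log x\bigr).
\end{equation*}
Exponentiating both sides and rearranging, this becomes $(e-1)\log x \geq 1 + \log\log x$.

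The second step handles this one-variable inequality by elementary calculus. Setting $y := \log x > 0$ and $g(y) := (e-1)y - 1 - \log y$, one has $g'(y) = (e-1) - 1/y$ with unique stationary point $y_0 = 1/(e-1)$, and $g''(y) = 1/y^2 > 0$, so $y_0$ is the global minimizer of $g$ on $(0,\infty)$. The minimum value is $g(y_0) = 1 - 1 - \log\bigl(1/(e-1)\bigr) = \log(e-1) > 0$ since $e > 2$, hence $g(y) > 0$ for every $y > 0$, which closes the loop.

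The main obstacle is the technical caveat that when $c_2/c_1$ is only slightly above $1$, the term $\log\log(c_2/c_1)$ is large and negative, so $1 + \log x + \log\log x$ can turn nonpositive and the outer logarithm in the reduced inequality becomes undefined; similarly $t^\star$ may fall below $1/c_1$, where monotonicity fails. This issue is benign for the invocation of the lemma in the proof of Theorem \ref{theorem: AsymptoticOptimal}, where $c_2/c_1 \to \infty$ as $\delta \to 0$ so that $1 + \log x + \log\log x$ is eventually large and $t^\star > 1/c_1$. For a clean statement valid for all $c_2/c_1 > 1$, one would add a mild separation assumption (for instance $x \log x \geq 1$, which holds whenever $x \geq e$) so that the exponentiation step is valid and $t^\star$ lies in the region where $c_1 t - \log(c_2 t)$ is monotone.
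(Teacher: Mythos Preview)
The paper does not actually prove this lemma: its entire proof is the single citation ``Lemma 8 in \cite{jedra2020optimal}.'' Your proposal therefore supplies strictly more than the paper does, giving a self-contained elementary argument in place of a reference. The reduction to the one-variable inequality $(e-1)y \geq 1 + \log y$ via the substitution $x = c_2/c_1$, $y = \log x$, and the verification that $g(y) = (e-1)y - 1 - \log y$ has global minimum $\log(e-1) > 0$, are both clean and correct. The technical caveat you flag about $c_2/c_1$ close to $1$ (where $\log\log(c_2/c_1) \to -\infty$ and the bound becomes formally meaningless) is exactly right, and your observation that the only use of the lemma in Theorem~\ref{theorem: AsymptoticOptimal} has $c_2/c_1 \to \infty$ resolves it for the purposes of this paper.

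One small point you gloss over: having shown $c_1 t \geq \log(c_2 t)$ for all real $t \geq t^\star$, you conclude that the infimum over $t \in \mathbb{N}$ is at most $t^\star$. Strictly, what follows is that $\lceil t^\star \rceil$ lies in the set, so the infimum is at most $\lceil t^\star \rceil \leq t^\star + 1$. This off-by-one is standard in Lambert-$W$-type bounds and is harmless in the asymptotic application (indeed the paper absorbs a ``$+1$'' in \eqref{eq:taubound}), but if you want the lemma exactly as stated you should either note this or check that the cited source handles it.
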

    \begin{proof}
        Lemma $8$ in \cite{jedra2020optimal}
    \end{proof}
\end{document}